\theoremstyle{plain}
\newtheorem{theorem}{Theorem}[section]
\newtheorem{lemma}[theorem]{Lemma}
\theoremstyle{definition}
\theoremstyle{remark}
\declaretheoremstyle[%
  spaceabove=-6pt,%
  spacebelow=6pt,%
  headfont=\normalfont\itshape,%
  postheadspace=1em,%
  qed=\qedsymbol%
]{mystyle} 
\declaretheorem[name={Proof},style=mystyle,unnumbered,
]{prf}
\newcommand{\myparaspace}{\vspace{-0.025cm}}
\newcommand{\figref}[1]{figure \ref{#1}}
\title{Solving Inverse Physics Problems with Score Matching}
\newcommand{\affnum}[1]{{\normalsize\rm\textsuperscript{\,#1}}}
\newcommand{\affiliation}[2]{\normalsize\rm\textsuperscript{#1}#2}
\newcommand{\name}[1]{\textbf{#1}}
\def\and{\ }
\author{%
\name{Benjamin Holzschuh\affnum{1,2} \hspace{-2pt}\thanks{Correspondence to: \texttt{benjamin.holzschuh@tum.de}}} \hspace{10pt} \and 
\name{Simona Vegetti}\affnum{2} \hspace{10pt} \and
\name{Nils Thuerey}\affnum{1} \\
\\[6pt]
\affiliation{1}{\small{Technical University of Munich, 85748 Garching, Germany}} \\
\affiliation{2}{\small{Max Planck Institute for Astrophysics, 85748 Garching, Germany}}
}
\begin{document}

\maketitle

\begin{abstract}
We propose to solve inverse problems involving the temporal evolution of physics systems by leveraging recent advances from diffusion models. 
Our method moves the system's current state backward in time step by step by combining an approximate inverse physics simulator and a learned correction function. 
A central insight of our work is that training the learned correction with a single-step loss is equivalent to a score matching objective, while recursively predicting longer parts of the trajectory during training relates to maximum likelihood training of a corresponding probability flow.
We highlight the advantages of our algorithm compared to standard denoising score matching and implicit score matching, as well as fully learned baselines for a wide range of inverse physics problems. The resulting inverse solver has excellent accuracy and temporal stability and, in contrast to other learned inverse solvers, allows for sampling the posterior of the solutions. Code and experiments are available at \url{https://github.com/tum-pbs/SMDP}.
\end{abstract}

\section{Introduction} \label{sec:introduction}

Many physical systems are time-reversible on a microscopic scale. For example, a continuous material can be 
represented by a collection of interacting particles \citep{gurtin1982introduction,blanc2002molecular} based on which we can predict future states. 
We can also compute earlier states, meaning we can evolve the simulation backward in time \citep{martyna1996explicit}. When taking a macroscopic perspective, we only know average quantities within specific regions \citep{farlow1993partial}, which constitutes a loss of information, and as a consequence, time is no longer reversible.
In the following, we target inverse problems to reconstruct the distribution of initial macroscopic states for a given end state. This problem is genuinely tough  
\citep{zhou1996robust,gomez2018automatic,delaquis2018deep,lim2022maxwellnet}, and existing methods lack tractable approaches to represent and sample the distribution of states. 

Our method builds on recent advances from the field of diffusion-based approaches \citep{sohlnoneq2015, hoddpm2020, songscore2021}: Data samples $\mathbf{x} \in \mathbb{R}^D$ are gradually corrupted into Gaussian white noise via a stochastic differential equation (SDE) $d\mathbf{x} = f(\mathbf{x},t)dt + g(t) dW$, where the deterministic component of the SDE $f: \mathbb{R}^D \times \mathbb R_{\geq 0} \to \mathbb{R}^D$ is called \emph{drift} and the coefficient of the $D$-dimensional Brownian motion $W$ denoted by $g: \mathbb{R}_{\geq 0} \to \mathbb{R}_{\geq 0}$ is called \emph{diffusion}. 
If the \textbf{score} $\nabla_\mathbf{x} \log p_t(\mathbf{x})$ of the data distribution $p_t(\mathbf{x})$ of corrupted samples at time $t$ is known, then the dynamics of the SDE can be reversed in time, allowing for the sampling of data from noise. Diffusion models are trained to approximate the score with a neural network $s_\theta$, which can then be used as a plug-in estimate for the reverse-time SDE.

However, in our physics-based approach, we consider an SDE that describes the physics system as $d\mathbf{x} = \mathcal{P}(\mathbf{x})dt + g(t)dW$, where $\mathcal{P}: \mathbb{R}^D \to \mathbb{R}^D$ is a physics simulator that replaces the drift term of diffusion models. Instead of transforming the data distribution to noise, we transform a simulation state at $t=0$ to a simulation state at $t=T$ with Gaussian noise as a perturbation.
Based on a given end state of the system at $t=T$, we predict a previous state by taking a small time step backward in time and repeating this multiple times. Similar to the reverse-time SDE of diffusion models, the prediction of the previous state depends on an approximate inverse of the physics simulator, a learned update $s_\theta$, and a small Gaussian perturbation.

The training of $s_\theta$ is similar to learned correction approaches for numerical simulations \citep{umsolver2020, kochkovacc2021,list2022learned}: The network $s_\theta$ learns corrections to simulation states that evolve over time according to a physics simulator so that the corrected trajectory matches a target trajectory. In our method, we target learning corrections for the "reverse" simulation. Training can either be based on single simulation steps, which only predict a single previous state, or be extended to rollouts for multiple steps. The latter requires the differentiability of the inverse physics step \citep{thuerey2021pbdl}.

Importantly, we show that under mild conditions, learning $s_\theta$ is equivalent to matching the score $\nabla_\mathbf{x} \log p_t(\mathbf{x})$ of the training data set distribution at time $t$. Therefore, sampling from the reverse-time SDE of our physical system SDE constitutes a theoretically justified method to sample from the correct posterior.

While the training with single steps directly minimizes a score matching objective, we show that the extension to multiple steps corresponds to maximum likelihood training of a 
related neural ordinary differential equation (ODE). 
Considering multiple steps is important for the stability of the produced trajectories. Feedback from physics and neural network interactions at training time leads to more robust results. 

In contrast to previous diffusion methods, we include domain knowledge about the physical process in the form of an approximate inverse simulator that replaces the drift term of diffusion models \citep{songscore2021, zhang2021diffusion}.
In practice, the learned component $s_\theta$ corrects any errors that occur due to numerical issues, e.g., the time discretization, and breaks ambiguities due to a loss of information in the simulation over time.

Figure \ref{fig:method_overview} gives an overview of our method. 
Our central aim is to show that the combination of diffusion-based techniques and differentiable simulations has merit for solving inverse problems and to provide a theoretical foundation for combining PDEs and diffusion modeling.
In the following, we refer to methods using this combination as \emph{score matching via differentiable physics} (SMDP).
The main contributions of our work are: 
(1) We introduce a reverse physics simulation step into diffusion models to develop a probabilistic framework for solving inverse problems.
(2) We provide the theoretical foundation that this combination yields learned corrections representing the score of the underlying data distribution. 
(3) We highlight the effectiveness of SMDP with a set of challenging inverse problems and show the superior performance of SMDP compared to a range of stochastic and deterministic baselines.

\begin{figure}
    \centering
        \begin{subfigure}[b]{.515\textwidth}
            \includegraphics[width=\textwidth]{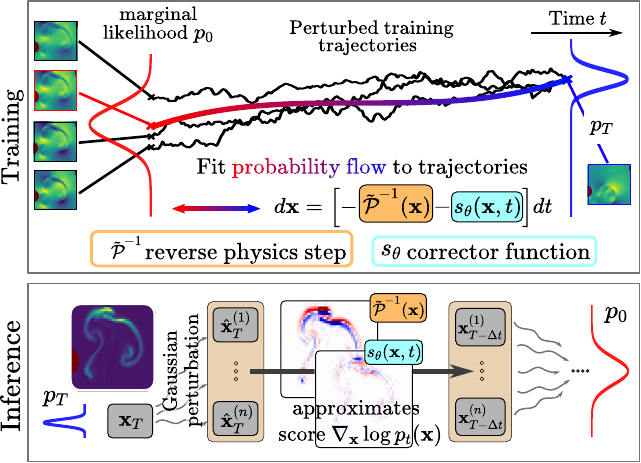}
            \caption{Training and inference overview.}
        \end{subfigure}
        \hfill
    \begin{subfigure}[b]{.475\textwidth}
        \includegraphics[width=\textwidth]{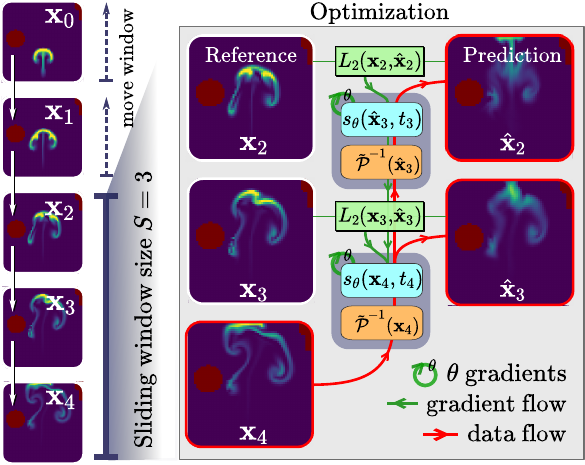}
        \caption{Sliding window and optimizing $s_\theta$.}
    \end{subfigure}
    \caption{Overview of our method. For training, we fit a neural ODE, the probability flow, to the set of perturbed training trajectories (a, top). The probability flow is comprised of a reverse physics simulator $\Tilde{\mathcal{P}}^{-1}$ that is an approximate inverse of the forward solver $\mathcal{P}$ as well as a correction function $s_\theta$. In many cases, we can obtain $\Tilde{\mathcal{P}}^{-1}$ from $\mathcal{P}$ by using a negative step size $\Delta t$ or by learning a surrogate model from data. For inference, we simulate the system backward in time from $\mathbf{x}_T$ to $\mathbf{x}_0$ by combining $\Tilde{\mathcal{P}}^{-1}$, the trained $s_\theta$ and Gaussian noise in each step (a, bottom). For optimizing $s_\theta$, our approach moves a sliding window of size $S$ along the training trajectories and reconstructs the current window (b). Gradients for $\theta$ are accumulated and backpropagated through all prediction steps.}
    \label{fig:method_overview}
    \myparaspace{}
\end{figure}

\section{Related Work}

\myparaspace{} \paragraph{Diffusion models and generative modeling with SDEs} Diffusion models \citep{sohlnoneq2015, hoddpm2020} have been considered for a wide range of generative applications, most notably for image \citep{dhariwalimage2021}, video \citep{hodiffvid2022, hoeppediffvid2022, yangdiffvid2022}, audio synthesis \citep{chenwavegrad2021}, uncertainty quantification \citep{chung2021come, chung2022impro, song2021solv, kawar2021snips, ramzi2020denoi}, and as autoregressive PDE-solvers \citep{kohl2023turbulent}. 
However, most approaches either focus on the denoising objective common for tasks involving natural images or the synthesis process of solutions does not directly consider the underlying physics. 
Models based on Langevin dynamics \citep{vincentconn2011, songgrad2019} or discrete Markov chains \citep{sohlnoneq2015, hoddpm2020} can be unified in a time-continuous framework using SDEs \citep{songscore2021}. Synthesizing data by sampling from neural SDEs has been considered by, e.g., \citep{kidgergenerative21, songscore2021}. Contrary to existing approaches, the drift in our method is an actual physics step, and the underlying SDE does not transform a data distribution to noise but models the temporal evolution of a physics system with stochastic perturbations. 

\myparaspace{} \paragraph{Methods for solving inverse problems for (stochastic) PDEs} 
Differentiable solvers for physics dynamics can be used to optimize solutions of inverse problems with gradient-based methods by backpropagating gradients through the solver steps \cite{thuerey2021pbdl}. Learning-based approaches directly learn solution operators for PDEs and stochastic PDEs, i.e., mappings between spaces of functions, such as Fourier neural operators \citep{zongyi2021fourier}, DeepONets \cite{LuJPZK21}, or generalizations thereof that include stochastic forcing for stochastic PDEs, e.g., neural stochastic PDEs \cite{SalviLG22}. Recently, there have been several approaches that leverage the learned scores from diffusion models as data-driven regularizations for linear inverse problems \cite{ramzi2020denoi, song2021solv, kawar2021snips, chung2021come, chung2022impro} and general noisy inverse problems \cite{chungKMKY23}. Our method can be applied to general non-linear inverse physics problems with temporal evolution, and we do not require to backpropagate gradients through all solver steps during inference. This makes inference significantly faster and more stable. 

\myparaspace{} \paragraph{Learned corrections for numerical errors} Numerical simulations benefit greatly from machine learning models \citep{tompsonacc2017,morton2018deep,pfaff2020learning,zongyi2021fourier}. By integrating a neural network into differential equation solvers, it is possible to learn to reduce numerical errors \citep{umsolver2020, kochkovacc2021, brandstetter2022message} or guide the simulation towards a desired target state \citep{holllearn2020,li2022contact}. The optimization of $s_\theta$ with the 1-step and multi-step loss we propose in section \ref{subsec:learned_corrections} is conceptually similar to learned correction approaches. However, this method has, to our knowledge, not been applied to correcting the "reverse" simulation and solving inverse problems.

\myparaspace{} \paragraph{Maximum likelihood training and continuous normalizing flows} 
Continuous normalizing flows (CNFs) are invertible generative models based on neural ODEs \citep{chen2018neural,kobyzev2020normalizing,papamakarios2021normalizing}, which are similar to our proposed physics-based neural ODE. The evolution of the marginal probability density of the SDE underlying the physics system is described by Kolmogorov's forward equation \citep{oksendal2003stochastic}, 
and there is a corresponding probability flow ODE \citep{maoutsa2020entropy,songscore2021}.
When the score is represented by $s_\theta$, this constitutes a CNF and can typically be trained with standard methods \citep{chen2018} and maximum likelihood training \citep{song21maximum}. Huang et al. \cite{huang21variationalscore} show that minimizing the score-matching loss is equivalent to maximizing a lower bound of the likelihood obtained by sampling from the reverse-time SDE. A recent variant combines score matching with CNFs \citep{zhang2021diffusion} and employs joint learning of drift and corresponding score for generative modeling. To the best of our knowledge, training with rollouts of multiple steps and its relation to maximum likelihood training have not been considered so far. 

\myparaspace{}
\section{Method Overview}
\myparaspace{}
\paragraph{Problem formulation}

Let $(\Omega, \mathcal{F}, P)$ be a probability space and $W(t) = (W_1(t), ..., W_D(t))^T$ be a $D$-dimensional Brownian motion. Moreover, let $\mathbf{x}_0$ be a $\mathcal{F}_0$-measurable $\mathbb{R}^D$-valued random variable that is distributed as $p_0$ and represents the initial simulation state. We consider the time evolution of the physical system for $0 \leq t \leq T$ modeled by the stochastic differential equation (SDE)
\begin{align} \label{eq:physical_system_as_sde}
    d\mathbf{x} = \mathcal{P}(\mathbf{x}) dt + g(t)dW
\end{align}
with initial value $\mathbf{x}_0$ and Borel measurable drift $\mathcal{P}:\, \mathbb{R}^D \to \mathbb{R}^D$ and diffusion $g:\, [0,T] \to \mathbb{R}_{\geq 0}$. This SDE transforms the marginal distribution $p_0$ of initial states at time $0$ to the marginal distribution $p_T$ of end states at time $T$. We include additional assumptions in appendix \ref{app:methodology}.

Moreover, we assume that we have sampled $N$ trajectories of length $M$ from the above SDE with a fixed time discretization $0 \leq t_0 < t_1 < ... < t_M \leq T$ for the interval $[0, T]$ and collected them in a training data set $\{(\mathbf{x}_{t_m}^{(n)})_{i=0}^M \}_{n=0}^N$.
For simplicity, we assume that all time steps are equally spaced, i.e., $t_{m+1}-t_m:= \Delta t$. Moreover, in the following we use the notation $\mathbf{x}_{\ell:m}$ for $0 \leq \ell < m \leq M$ to refer to the trajectory $(\mathbf{x}_{t_\ell}, \mathbf{x}_{t_{\ell+1}}, ..., \mathbf{x}_{t_m})$. We include additional assumptions in appendix \ref{app:methodology}.

Our goal is to infer an initial state $\mathbf{x}_0$ given a simulation end state $\mathbf{x}_M$, i.e., we want to sample from the distribution $p_0(\,\cdot\,|\, \mathbf{x}_M)$, or obtain a maximum likelihood solution.

\subsection{Learned Corrections for Reverse Simulation} \label{subsec:learned_corrections}
In the following, we furthermore assume that we have access to a reverse physics simulator $\Tilde{\mathcal{P}}^{-1}: \mathbb{R}^{D} \rightarrow \mathbb{R}^{D}$, which moves the simulation state backward in time and is an approximate inverse of the forward simulator $\mathcal{P}$ \citep{holl2022scale}. In our experiments, we either obtain the reverse physics simulator from the forward simulator by using a negative step size $\Delta t$ or by learning a surrogate model from the training data.
We train a neural network $s_\theta(\mathbf{x}, t)$ parameterized by $\theta$ such that
\begin{align} \label{eq:reverse_inference} 
    \mathbf{x}_{m} \approx \mathbf{x}_{m+1} + \Delta t \left[ \Tilde{\mathcal{P}}^{-1}(\mathbf{x}_{m+1}) + s_\theta(\mathbf{x}_{m+1}, t_{m+1}) \right].
\end{align}
In this equation, the term $s_\theta(\mathbf{x}_{m+1}, t_{m+1})$ corrects approximation errors and resolves uncertainties from the Gaussian perturbation $g(t)dW$.
Below, we explain our proposed 1-step training loss and its multi-step extension before connecting this formulation to diffusion models in the next section.

\myparaspace{} \paragraph{1-step loss}
For a pair of adjacent samples $(\mathbf{x}_m, \mathbf{x}_{m+1})$ on a data trajectory, the 1-step loss for optimizing $s_\theta$ is the L$_2$ distance between $\mathbf{x}_m$ and the prediction via \eqref{eq:reverse_inference}. For the entire training data set, the loss becomes 
\begin{align} \label{eq:one-step-loss}
    \mathcal{L}_\mathrm{single}(\theta) := \frac{1}{M} \, \mathbb{E}_{\mathbf{x}_{0:M}} \left[ \sum_{m=0}^{M-1} \left[
      \left|\left| \mathbf{x}_{m} - \mathbf{x}_{m+1} - \Delta t \left[ \Tilde{\mathcal{P}}^{-1}(\mathbf{x}_{m+1}) + s_\theta(\mathbf{x}_{m+1}, t_{m+1}) \right] \right|\right|_2^2 \right]    
    \right].
\end{align}
Computing the expectation can be thought of as moving a window of size two from the beginning of each trajectory until the end and averaging the losses for individual pairs of adjacent points.   
\myparaspace{} \paragraph{Multi-step loss}
As each simulation state depends only on its previous state, the 1-step loss should be sufficient for training $s_\theta$. However, in practice, approaches that consider a loss based on predicting longer parts of the trajectories are more successful for training learned corrections \citep{bar2019learning,umsolver2020,kochkovacc2021}.
For that purpose, we define a hyperparameter $S$, called sliding window size, and write $\mathbf{x}_{i:i+S} \in \mathbb{R}^{S \times D}$ to denote the trajectory starting at $\mathbf{x}_i$ that is comprised of $\mathbf{x}_i$ and the following $S-1$ states. Then, we define the multi-step loss as 
\begin{align} \label{eq:multi-step-loss}
    \mathcal{L}_\text{multi}(\theta) := \frac{1}{M} \, \mathbb{E}_{\mathbf{x}_{0:M}} \left[ \sum_{m=0}^{M-S+1} \left[
      \left|\left| \mathbf{x}_{m:m+S-1} - \hat{\mathbf{x}}_{m:m+S-1}\right|\right|_2^2     
    \right] \right],
\end{align}
where $\hat{\mathbf{x}}_{i:i+S-1}$ is the predicted trajectory that is defined recursively by \begin{align}
    \hat{\mathbf{x}}_{i+S} = \mathbf{x}_{i+S} \quad \text{ and } \quad 
    \hat{\mathbf{x}}_{i+S-1-j} = \hat{\mathbf{x}}_{i+S-j} + \Delta t \left[ \Tilde{\mathcal{P}}^{-1}(\hat{\mathbf{x}}_{i+S-j}) + s_\theta(\hat{\mathbf{x}}_{i+S-j}, t_{i+S-j}) \right].
\end{align}

\subsection{Learning the Score}
\paragraph{Denoising score matching} Given a distribution of states $p_t$ for $0 < t <T$, we follow \cite{songscore2021} and consider the score matching objective  
\begin{align} \label{eq:score_matching_objective}
    \mathcal{J}_\mathrm{SM}(\theta) := \frac{1}{2} \int_0^T \mathbb{E}_{\mathbf{x} \sim p_t} \left[ \left| \left| s_\theta(\mathbf{x},t)  - \nabla_\mathbf{x} \log p_t(\mathbf{x}) \right| \right|_2^2\right]dt,
\end{align}
i.e., the network $s_\theta$ is trained to approximate the score $\nabla_\mathbf{x} \log p_t(\mathbf{x})$. In denoising score matching \citep{vincentconn2011, sohlnoneq2015, hoddpm2020}, the distributions $p_t$ are implicitly defined by a noising process that is given by the forward SDE $d\mathbf{x} = f(\mathbf{x}, t) dt + g(t) dW$, 
where $W$ is the standard Brownian motion. The function $f : \mathbb{R}^D \times \mathbb{R}_{\geq 0} \to \mathbb{R}^D$ is called \emph{drift}, and $g : \mathbb{R}_{\geq 0} \to \mathbb{R}_{\geq 0}$ is called \emph{diffusion}. The process transforms the training data distribution $p_0$ to a noise distribution that is approximately Gaussian $p_T$. For affine functions $f$ and $g$, the transition probabilities are available analytically, which allows for efficient training of $s_\theta$. It can be shown that under mild conditions, for the forward SDE, there is a corresponding \textbf{reverse-time SDE} $d\mathbf{x} = [f(\mathbf{x},t)-g(t)^2 \nabla_\mathbf{x} \log p_t(\mathbf{x})]dt + g(t)d\Tilde{W}$ \citep{anderson1982reverse}. In particular, this means that given a marginal distribution of states $p_T$, which is approximately Gaussian, we can sample from $p_T$ and simulate paths of the reverse-time SDE to obtain samples from the data distribution $p_0$.

\myparaspace{} \paragraph{Score matching, probability flow ODE and 1-step training}
There is a deterministic ODE \citep{songscore2021}, called probability flow ODE, which yields the same transformation of marginal probabilities from $p_T$ to $p_0$ as the reverse-time SDE. For the physics-based SDE \eqref{eq:physical_system_as_sde}, it is given by 
\begin{align} \label{eq:probability_flow}
    d\mathbf{x} = \left[ \mathcal{P}(\mathbf{x}) - \frac{1}{2} g(t)^2 \nabla_\mathbf{x} \log p_t(\mathbf{x}) \right] dt.
\end{align}

For $\Delta t \to 0$, we can rewrite the update rule \eqref{eq:reverse_inference} of the training as \begin{align} \label{eq:ode_training_approx}
    d\mathbf{x} = \left[ - \Tilde{\mathcal{P}}^{-1}(\mathbf{x}) - s_\theta(\mathbf{x}, t) \right]dt.
\end{align}
Therefore, we can identify $\Tilde{\mathcal{P}}^{-1}(\mathbf{x})$ with $-\mathcal{P}(\mathbf{x})$ and $s_\theta(\mathbf{x}, t)$ with $\tfrac{1}{2} g(t) \nabla_{\mathbf{x}} \log p_t (\mathbf{x})$. We show that for the 1-step training and sufficiently small $\Delta t$, we minimize the score matching objective \eqref{eq:score_matching_objective}. 

\begin{theorem} \label{thm:1-step-score}
Consider a data set with trajectories sampled from SDE \eqref{eq:physical_system_as_sde} and let $\Tilde{\mathcal{P}}^{-1}(\mathbf{x}) = -\mathcal{P}(\mathbf{x})$. Then the 1-step loss \eqref{eq:one-step-loss} is equivalent to minimizing the score matching objective \eqref{eq:score_matching_objective} as $\Delta t \to 0$.
\end{theorem}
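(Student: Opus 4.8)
The plan is to reduce the 1-step loss to a denoising score matching objective for each adjacent pair and then pass to the continuum limit. Since $\tilde{\mathcal P}^{-1} = -\mathcal P$, the expectation in \eqref{eq:one-step-loss} splits by linearity into a sum of per-pair terms, so it suffices to analyze a single term $\mathbb{E}\big[\,\| \mathbf{x}_m - \mathbf{x}_{m+1} + \Delta t\,\mathcal P(\mathbf{x}_{m+1}) - \Delta t\,s_\theta(\mathbf{x}_{m+1}, t_{m+1})\|_2^2\,\big]$. Conditioning on $\mathbf{x}_{m+1}$ and expanding the square, the $\theta$-dependent part equals, up to a constant independent of $\theta$, $\Delta t^2\,\mathbb{E}_{\mathbf{x}_{m+1}}\big[\,\|s_\theta(\mathbf{x}_{m+1}, t_{m+1}) - v_m(\mathbf{x}_{m+1})\|_2^2\,\big]$, where the optimal regression target is the conditional expectation $v_m(\mathbf{x}_{m+1}) = \tfrac{1}{\Delta t}\,\mathbb{E}\big[\mathbf{x}_m - \mathbf{x}_{m+1} + \Delta t\,\mathcal P(\mathbf{x}_{m+1}) \mid \mathbf{x}_{m+1}\big]$. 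Thus everything reduces to identifying $v_m$ with the score.

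The key step is this identification. Using the Euler--Maruyama discretization of \eqref{eq:physical_system_as_sde}, the forward one-step transition is, to leading order in $\Delta t$, Gaussian, $p(\mathbf{x}_{m+1}\mid \mathbf{x}_m) \approx \mathcal N\!\big(\mathbf{x}_m + \Delta t\,\mathcal P(\mathbf{x}_m),\, g(t_m)^2\Delta t\, I\big)$, whose score is $\nabla_{\mathbf{x}_{m+1}}\log p(\mathbf{x}_{m+1}\mid\mathbf{x}_m) = -\big(\mathbf{x}_{m+1} - \mathbf{x}_m - \Delta t\,\mathcal P(\mathbf{x}_m)\big)/\big(g(t_m)^2\Delta t\big)$. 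Rearranging gives $\mathbf{x}_m - \mathbf{x}_{m+1} + \Delta t\,\mathcal P(\mathbf{x}_{m+1}) = g(t_m)^2\Delta t\,\nabla_{\mathbf{x}_{m+1}}\log p(\mathbf{x}_{m+1}\mid\mathbf{x}_m) + \Delta t\,\big(\mathcal P(\mathbf{x}_{m+1}) - \mathcal P(\mathbf{x}_m)\big)$, and the last term is $O(\Delta t^{3/2})$ since increments have size $O(\sqrt{\Delta t})$ and $\mathcal P$ is Lipschitz under the assumptions of appendix \ref{app:methodology}. Taking the conditional expectation and applying the denoising score matching identity $\mathbb{E}\big[\nabla_{\mathbf{x}_{m+1}}\log p(\mathbf{x}_{m+1}\mid\mathbf{x}_m)\mid\mathbf{x}_{m+1}\big] = \nabla_{\mathbf{x}_{m+1}}\log p_{t_{m+1}}(\mathbf{x}_{m+1})$ \citep{vincentconn2011}, equivalently invoking the Vincent equivalence per pair, shows $v_m(\mathbf{x}_{m+1}) = g(t_m)^2\,\nabla_{\mathbf{x}_{m+1}}\log p_{t_{m+1}}(\mathbf{x}_{m+1}) + O(\sqrt{\Delta t})$, matching the identification of $s_\theta$ with the score up to the normalization discussed in the text.

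Substituting back and summing over $m$, the $\theta$-dependent part of $\mathcal L_\mathrm{single}$ becomes a Riemann sum $\tfrac{\Delta t^2}{M}\sum_m g(t_m)^2\,\mathbb{E}_{\mathbf{x}\sim p_{t_{m+1}}}\big[\,\|s_\theta(\mathbf{x}, t_{m+1}) - \nabla_\mathbf{x}\log p_{t_{m+1}}(\mathbf{x})\|_2^2\,\big]$ plus per-term errors of order $O(\Delta t^{1/2})$. As $\Delta t\to0$ the sum $\sum_m \Delta t\,(\cdot)$ converges to $\int_0^T(\cdot)\,dt$, so after dividing out the common power of $\Delta t$ the $\theta$-dependent part is, up to a $\theta$-independent constant and a positive time-dependent weight $g(t)^2$ that does not change the minimizer, exactly the score matching objective \eqref{eq:score_matching_objective}. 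Both objectives are therefore minimized by the same $s_\theta = \nabla_\mathbf{x}\log p_t$ (in the normalization of the text), which is the claimed equivalence.

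I expect the main obstacle to be the rigor of the continuum limit rather than the algebra: one must justify that the Euler--Maruyama transition and its score approximate the true one-step density well enough that the accumulated $O(\Delta t^{3/2})$ per-step errors, summed over the $O(1/\Delta t)$ steps and paired against $s_\theta$, vanish relative to the leading term. This requires the regularity hypotheses of appendix \ref{app:methodology} (Lipschitz and growth bounds on $\mathcal P$, boundedness of $g$, and enough smoothness of $p_t$ for $\nabla_\mathbf{x}\log p_t$ to exist with finite second moments), together with control of $\mathbb{E}[\mathcal P(\mathbf{x}_m)\mid\mathbf{x}_{m+1}] - \mathcal P(\mathbf{x}_{m+1})$. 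Careful bookkeeping of the $g(t)^2$ and factor-$\tfrac12$ constants, so that they agree with the probability flow identification in \eqref{eq:probability_flow}--\eqref{eq:ode_training_approx}, is the other point that must be handled consistently.
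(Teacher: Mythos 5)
Your argument is essentially correct, but it takes a genuinely different route from the paper. The paper's proof works with the \emph{reverse-time} SDE of \citet{anderson1982reverse}: it compares the deterministic one-step prediction $\mathbf{x}_t^p$ against the Euler--Maruyama solution of the reverse-time SDE started from the data point $\mathbf{x}_{t+\Delta t}$, and argues in two directions --- strong convergence of Euler--Maruyama plus a triangle inequality shows the 1-step loss vanishes for the true score, and an explicit finite-$\Delta t$ computation of $\mathbb{E}\lVert \hat{\mathbf{x}}_t^{\Delta t}-\mathbf{x}_t^p\rVert_2^2$ (the mean of a noncentral chi-squared variable) shows that minimizers must match the score pointwise in $t$. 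You instead localize to each adjacent pair, characterize the pointwise minimizer of the regression as a conditional expectation, and identify that conditional expectation with $g^2\nabla_\mathbf{x}\log p_{t}$ via the Gaussian Euler--Maruyama approximation of the \emph{forward} transition kernel together with Vincent's denoising-score-matching identity; this is the classical DSM derivation applied per step, and it arguably gives a sharper statement (the rescaled $\theta$-dependent part of $\mathcal{L}_\mathrm{single}$ converges to a $g$-weighted score-matching integral, rather than merely sharing minimizers), while also explaining directly why the main text calls the 1-step loss ``conceptually similar to denoising score matching.'' What the paper's route buys in exchange is that it never needs to approximate the forward one-step density by a Gaussian --- it uses the exact reverse-time SDE and only the strong convergence of its discretization --- whereas your route must control the error incurred by substituting the Euler--Maruyama transition into the Vincent identity; you correctly flag this as the main rigor gap, and it is of comparable severity to the informal steps in the paper's own argument. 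Two bookkeeping points: after pulling out $\tilde{s}_\theta=s_\theta/g^2$, your time weight is $g(t)^4$ rather than $g(t)^2$, and both your argument and the paper's implicitly need $g(t)>0$ for the weight not to degenerate; neither affects the conclusion.
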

\begin{prf} See appendix \ref{app:1-step-loss}
\end{prf}
\paragraph{Maximum likelihood and multi-step training}
Extending the single step training to multiple steps does not directly minimize the score matching objective, but we can still interpret the learned correction in a probabilistic sense.  
For denoising score matching, it is possible to train $s_\theta$ via maximum likelihood training \citep{song21maximum}, which minimizes the KL-divergence between $p_0$ and the distribution obtained by sampling $\mathbf{x}_T$ from $p_T$ and simulating the probability flow ODE \eqref{eq:ode_training_approx} from $t=T$ to $t=0$. We derive a similar result for the multi-step loss.  
\begin{theorem} \label{thm:multi-step-variational}
Consider a data set with trajectories sampled from SDE \eqref{eq:physical_system_as_sde} and let $\Tilde{\mathcal{P}}^{-1}(\mathbf{x}) = -\mathcal{P}(\mathbf{x})$. Then the multi-step loss \eqref{eq:multi-step-loss} maximizes a variational lower bound for maximum likelihood training of the probability flow ODE \eqref{eq:probability_flow} as $\Delta t \to 0$. 
\end{theorem}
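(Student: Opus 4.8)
The plan is to follow the maximum-likelihood framework for score-based models of Song et al.\ \cite{song21maximum} and the variational interpretation of Huang et al.\ \cite{huang21variationalscore}, promoting the local computation of Theorem \ref{thm:1-step-score} from a single step to the full recursive rollout. The goal is to exhibit a quantity $\mathrm{ELBO}(\theta)$ that lower-bounds $\mathbb{E}_{p_0}[\log p_\theta(\mathbf{x}_0)]$, where $p_\theta$ is the distribution obtained by drawing $\mathbf{x}_T \sim p_T$ and integrating the parametrized probability flow ODE \eqref{eq:ode_training_approx} from $t=T$ to $t=0$, and to show that $\mathcal{L}_\mathrm{multi}(\theta)$ equals $-\mathrm{ELBO}(\theta)$ up to a positive prefactor and an additive $\theta$-independent constant as $\Delta t \to 0$, so that minimizing the loss maximizes the bound.

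First I would pass to the continuous limit of the recursive predictor. Setting $\Tilde{\mathcal{P}}^{-1} = -\mathcal{P}$, the recursion defining $\hat{\mathbf{x}}$ is an explicit Euler discretization of $d\mathbf{x} = [\mathcal{P}(\mathbf{x}) - s_\theta(\mathbf{x},t)]\,dt$; under the regularity assumptions of appendix \ref{app:methodology} it converges to the solution $\mathbf{y}_t$ of the parametrized probability flow ODE initialized at the window endpoint, so that the per-window summand $\|\mathbf{x}_{m:m+S-1}-\hat{\mathbf{x}}_{m:m+S-1}\|_2^2$ converges to a path integral $\int \|\mathbf{x}_t - \mathbf{y}_t\|_2^2\,dt$ over the window's time span. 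Here $\mathbf{x}_t$ is a sample of the forward SDE \eqref{eq:physical_system_as_sde} and $\mathbf{y}_t$ is its deterministic reconstruction sharing the same endpoint.

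Next I would rewrite this accumulated discrepancy as a path-space Kullback--Leibler divergence. The decisive step is to reproduce, along each backward step, the local identity behind Theorem \ref{thm:1-step-score}: the expected squared one-step residual between the stochastic data increment and the deterministic update isolates the weighted score-matching term $g(t)^2\,\mathbb{E}_{\mathbf{x}\sim p_t}\|s_\theta(\mathbf{x},t) - \tfrac12 g(t)^2 \nabla_\mathbf{x}\log p_t(\mathbf{x})\|_2^2$ plus a $\theta$-independent quadratic-variation contribution. Summing these local residuals across the window and across all windows $m=0,\dots,M-S+1$ and invoking Girsanov's theorem assembles the path measures of the true reverse process and the $s_\theta$-parametrized process, so that $\mathcal{L}_\mathrm{multi}$ becomes, up to the constant, the divergence $D_{\mathrm{KL}}(\,\cdot\,\|\,p_\theta)$ that bounds $-\mathbb{E}_{p_0}[\log p_\theta(\mathbf{x}_0)]$ --- exactly the variational bound of \cite{song21maximum,huang21variationalscore}.

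I expect the main obstacle to be this third step: rigorously matching the accumulated squared reconstruction error of a \emph{deterministic} ODE rollout against a \emph{stochastic} data path to the path-space KL divergence. This requires controlling how errors propagate through the recursive rollout (stability of the probability flow ODE over the window), separating the $O(\sqrt{\Delta t})$ Brownian contribution --- which produces the $\theta$-independent constant --- from the $O(\Delta t)$ drift-correction contribution that carries the score-matching residual, and verifying that the overlapping windows do not double-count in a way that alters the bound. Establishing that the $\Delta t \to 0$ limit may be exchanged with the expectation and the sums, under the assumptions of appendix \ref{app:methodology}, is the remaining technical point.
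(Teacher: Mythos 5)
Your overall target --- exhibiting a variational lower bound on the ODE log-likelihood whose negative coincides with $\mathcal{L}_\mathrm{multi}$ up to a prefactor and a $\theta$-independent constant --- is the right shape, and your first step (the recursive predictor is an explicit Euler discretization of the parametrized probability flow ODE, converging to its solution under the assumptions of appendix \ref{app:methodology}) matches the paper. The gap is in your third step. You propose to convert the accumulated squared reconstruction error between the \emph{deterministic} ODE rollout and the \emph{stochastic} data path into a path-space KL divergence via Girsanov's theorem. Girsanov compares two diffusions with the same non-degenerate diffusion coefficient and different drifts; the probability flow ODE has zero diffusion, so its path law is mutually singular with that of the forward SDE, there is no Radon--Nikodym derivative, and the path-space KL you want to assemble is $+\infty$. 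Moreover, the per-step residual inside a window of size $S>2$ is measured against the \emph{recursively propagated} state $\hat{\mathbf{x}}$, not against a one-step prediction from a data point, so the local decomposition from Theorem \ref{thm:1-step-score} (score-matching residual plus an independent Brownian increment) does not apply beyond the first step of each window. You flag error propagation as a technical point, but it is structural: the "local identity summed over steps" does not produce the object you need.

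The paper's proof avoids comparing path measures entirely. It writes the marginal ODE likelihood as $\log p_{t_i}^{t_j,\mathrm{ODE}}(\mathbf{x}_{t_i}) = \log \mathbb{E}_{\mathbf{x}_{t_j}}\bigl[p_{t_i}^{t_j,\mathrm{ODE}}(\mathbf{x}_{t_i}\,|\,\mathbf{x}_{t_j})\bigr]$, applies Jensen's inequality with the posterior $p(\mathbf{x}_{t_j}|\mathbf{x}_{t_i})$ as variational distribution, and observes that only the conditional term $\mathbb{E}\bigl[\log p_{t_i}^{t_j,\mathrm{ODE}}(\mathbf{x}_{t_i}|\mathbf{x}_{t_j})\bigr]$ depends on $\theta$. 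Because the ODE map is deterministic this conditional is degenerate, so it is relaxed by convolving with a Gaussian kernel $G_\epsilon$ (Gaussian ODE filtering); the conditional log-likelihood then becomes, up to constants, $-\tfrac{1}{2\epsilon}\|\mathbf{x}_{t_i} - \mu_{t_i}^{\mathrm{ODE}}(\mathbf{x}_{t_j})\|_2^2$ independent of $\epsilon$, and summing these squared errors over the pairs $(\mathbf{x}_{t_i},\mathbf{x}_{t_{i+1}}),\dots,(\mathbf{x}_{t_i},\mathbf{x}_{t_j})$ inside the sliding window and over window positions, with the means computed by the Euler recursion as $\Delta t \to 0$, is exactly $\mathcal{L}_\mathrm{multi}(\theta)$. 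If you want to salvage your route, you would need to replace the Girsanov step with this Jensen-plus-Gaussian-relaxation argument (or some other device that assigns a likelihood to a deterministic flow); as written, the central step of your plan does not go through.
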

\begin{prf}
See appendix \ref{app:multi-step-loss}
\end{prf}
To conclude, we have formulated a probabilistic multi-step training to solve inverse physics problems and provided a theoretical basis to solve these problems with score matching. Next, we outline additional details for implementing SMDP. 
    
\subsection{Training and Inference}

We start training $s_\theta$ with the multi-step loss and window size $S=2$, which is equivalent to the 1-step loss. Then, we gradually increase the window size $S$ until a maximum $S_\mathrm{max}$. For $S > 2$, the unrolling of the predicted trajectory includes interactions between $s_\theta$ and the reverse physics simulator $\Tilde{\mathcal{P}}^{-1}$.
For inference, we consider the neural SDE \begin{align} \label{eq:sde_inference}
    d\mathbf{x} = \left[ -\Tilde{\mathcal{P}}^{-1}(\mathbf{x}) - C \, s_\theta(\mathbf{x},t) \right] dt + g(t) dW,
\end{align}
which we solve via the Euler-Maruyama method. For $C=2$, we obtain the system's reverse-time SDE and sampling from this SDE yields the posterior distribution. Setting $C=1$ and excluding the noise gives the probability flow ODE \eqref{eq:ode_training_approx}.
We denote the ODE variant by \emph{SMDP ODE} and the SDE variant by \emph{SMDP SDE}.
While the SDE can be used to obtain many samples and to explore the posterior distribution, the ODE variant constitutes a unique and deterministic solution based on maximum likelihood training. 
  
\myparaspace{}
\section{Experiments}

We show the capabilities of the proposed algorithm with a range of experiments. The first experiment in section \ref{sec:1d_toy_sde} uses a simple 1D process to compare our method to existing score matching baselines. The underlying model has a known posterior distribution which allows for an accurate evaluation of the performance, and we use it to analyze the role of the multi-step loss formulation. Secondly, in section \ref{sec:heat_equation}, we experiment with the stochastic heat equation. This is a particularly interesting test case as the diffusive nature of the equation effectively destroys information over time.
In section \ref{sec:smoke_obstacle}, we apply our method to a scenario without stochastic perturbations in the form of a buoyancy-driven Navier-Stokes flow with obstacles. This case highlights the usefulness of the ODE variant. Finally, in section \ref{sec:navier_stokes_dynamics}, we consider the situation where the reverse physics simulator \smash{$\Tilde{\mathcal{P}}^{-1}$} is not known. Here, we train a surrogate model \smash{$\Tilde{\mathcal{P}}^{-1}$}
for isotropic turbulence flows and evaluate how well SMDP works with a learned reverse physics simulator.

\begin{figure}
    \centering
     \begin{subfigure}[b]{.3\textwidth}
         \centering 
         \includegraphics[width=\textwidth]{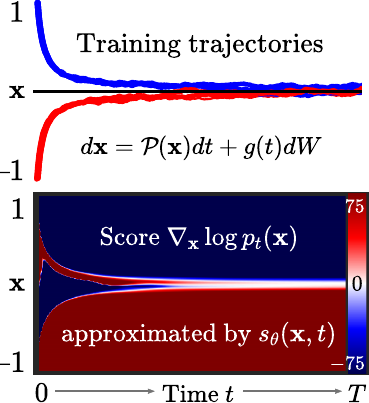} 
         \caption{Training}
         \label{fig:toy_example_training}
     \end{subfigure} 
     \begin{subfigure}[b]{.29\textwidth}
         \centering 
         \includegraphics[width=\textwidth]{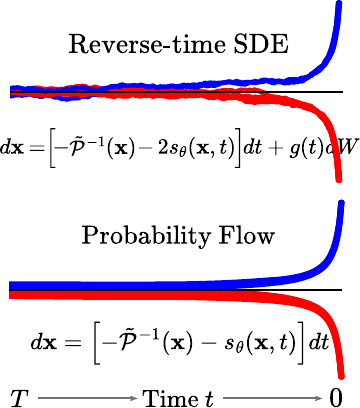} 
         \caption{Inference}
         \label{fig:toy_example_inference}
     \end{subfigure} 
     \begin{subfigure}[b]{.39\textwidth}
         \centering 
         \includegraphics[width=\textwidth]{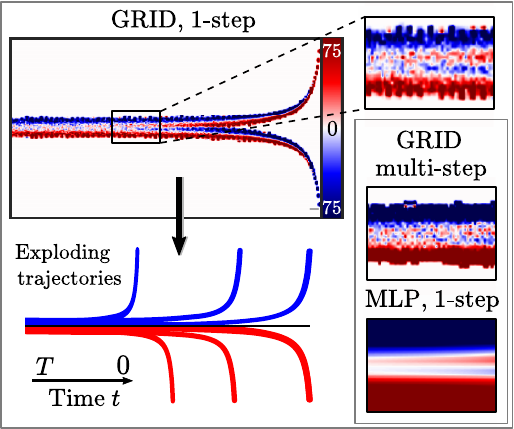} 
         \caption{1-step vs. multi-step}
         \label{fig:toy_example_comparison}
     \end{subfigure} 
    \caption{Overview of our 1D toy SDE. (a) Training with a data set of trajectories and known temporal dynamics given by $\mathcal{P}(\mathbf{x}):= -\mathrm{sign}(\mathbf{x}) \mathbf{x}^2$ and $g \equiv 0.1$. We estimate the score $\nabla_\mathbf{x} \log p_t(\mathbf{x})$ with our proposed method using an MLP network for $s_\theta(\mathbf{x}, t)$. Negative values (blue) push down the trajectories, and positive ones (red) push them up. Together with the dynamics, this can be used to reverse the system as shown in (b) either with the reverse-time SDE or the probability flow ODE. A successful inversion of the dynamics requires the network $s_\theta$ to be robust and extrapolate well (c). Inference using GRID trained with the 1-step loss causes trajectories to explode, as the network does not extrapolate well. Training GRID with the multi-step loss solves this issue.}
    \label{fig:overview_toy_example}
\end{figure}

\subsection{1D Toy SDE} \label{sec:1d_toy_sde}

As a first experiment with a known posterior distribution 
we consider a simple quadratic SDE of the form: 
$dx = - \left[ \lambda_1 \cdot \mathrm{sign}(x) x^2 \right] dt + \lambda_2 dW$, with $\lambda_1 = 7$ and $\lambda_2 = 0.03$. 
Throughout this experiment, $p_0$ is a categorical distribution, where we draw either $1$ or $-1$ with the same probability. 
The reverse-time SDE that transforms the distribution $p_T$ of values at $T=10$ to $p_0$ is given by \begin{align}\label{eq:toy}
    dx = - \left[ \lambda_1 \cdot \mathrm{sign}(x) x^2 - \lambda_2^2 \cdot \nabla_x \log p_t(x) \right] dt + \lambda_2 dw.
\end{align} 
In figure \ref{fig:toy_example_training}, we show paths from this SDE simulated with the Euler-Maruyama method. The trajectories approach $0$ as $t$ increases. Given the trajectory value at $t=10$, it is no longer possible to infer the origin of the trajectory at $t=0$.

This experiment allows us to use an analytic reverse simulator:
\smash{$\Tilde{\mathcal{P}}^{-1}(x) = \lambda_1 \cdot \text{sign}(x)x^2$.}  
This is a challenging problem because the reverse physics step increases quadratically with $x$, and $s_\theta$ has to control the reverse process accurately to stay within the training domain, or paths will explode to infinity. 
We evaluate each model based on how well the predicted trajectories $\hat{x}_{0:T}$ match the posterior distribution. When drawing $\hat{x}_T$ randomly from $[-0.1, 0.1]$, we should obtain trajectories with $\hat{x}_0$ being either $-1$ or $1$ with the same likelihood. We assign the label $-1$ or $1$ if the relative distance of an endpoint is $<10$\% and denote the percentage in each class by $\rho_{-1}$ and $\rho_{1}$. As some trajectories miss the target, typically $\rho_{-1} + \rho_{1} < 1$. 
Hence, we define the posterior metric $Q$ as twice the minimum of $\rho_{-1}$ and $\rho_1$, i.e., $Q:= 2 \cdot \min (\rho_{-1}, \rho_{1})$ so that values closer to one indicate a better match with the correct posterior distribution.

\myparaspace{} \paragraph{Training}

The training data set consists of $2 500$ simulated trajectories from $0$ to $T$ and $\Delta t = 0.02$. Therefore each training trajectory has a length of $M = 500$. For the network $s_\theta(x,t)$, we consider a multilayer perceptron (MLP) and, as a special case, a grid-based discretization  (GRID).
The latter is not feasible for realistic use cases and high-dimensional data but provides means for an in-depth analysis of different training variants.
For GRID, we discretize the domain $[0, T] \times [-1.25, 1.25]$ to obtain a rectangular grid with $500 \times 250$ cells and linearly interpolate the solution. The cell centers are initialized with $0$. We evaluate $s_\theta$ trained via the 1-step and multi-step losses with $S_{\mathrm{max}} = 10$. Details of hyperparameters and model architectures are given in appendix \ref{app:1d_toy_sde}. 

\myparaspace{} \paragraph{Better extrapolation and robustness from multi-step loss}

See figure \ref{fig:toy_example_comparison} for an overview of the differences between the learned score from MLP and GRID and the effects of the multi-step loss. 
For the 1-step training with MLP, we observe a clear and smooth score field with two tubes that merge to one at $x = 0$ as $t$ increases.
As a result, the trajectories of the probability flow ODE and reverse-time SDE converge to the correct value. Training via GRID shows that most cells do not get any gradient updates and remain $0$. This is caused by a need for more training data in these regions.
In addition, the boundary of the trained region is jagged and diffuse. Trajectories traversing these regions can quickly explode. In contrast, the multi-step loss leads to a consistent signal around the center line at $x=0$, effectively preventing exploding trajectories.

\myparaspace{} \paragraph{Evaluation and comparison with baselines}
As a baseline for learning the scores, we consider implicit score matching \citep[ISM]{hyvarinenscore2005}. 
Additionally, we consider sliced score matching with variance reduction \citep[SSM-VR]{songsliced2019} as a variant of ISM.
We train all methods with the same network architecture using three different data set sizes. As can be seen in table \ref{table:1d_toy_eval}, the 1-step loss,
\begin{wrapfigure}{r}{0.55\linewidth} 
\begin{minipage}[b]{0.55\textwidth}
    \centering
    \addtolength{\tabcolsep}{-3pt}   
    \begin{tabular}{l|ccc|ccc}
        \bf Method & \multicolumn{3}{c|}{Probability flow ODE} & \multicolumn{3}{c}{Reverse-time SDE} \\ \hline \hline 
        & \multicolumn{3}{c|}{Data set size} & \multicolumn{3}{c}{Data set size} \\ 
        & 100\% & 10\% & 1\% & 100\% & 10\% & 1\% \\ \hline 
        multi-step & \bf 0.97 & \bf 0.91 & \bf 0.81 & \bf 0.99 & \underline{0.94} & \bf 0.85 \\
        1-step & \underline{0.78} & 0.44 & \underline{0.41} & \underline{0.93} & 0.71 & 0.75 \\
        ISM & 0.19 & 0.15 & 0.01 & 0.92 & \underline{0.94} & 0.52 \\
        SSM-VR & 0.17 & \underline{0.49} & 0.27 & 0.88 & \underline{0.94} & \underline{0.67}
    \end{tabular}
    \addtolength{\tabcolsep}{3pt}
    \captionof{table}{Posterior metric $Q$ for $1 000$ predicted trajectories averaged over three runs. For standard deviations, see table \ref{table:app_1d_toy_eval} in the appendix.}
    \label{table:1d_toy_eval}
\end{minipage}
\end{wrapfigure}
which is conceptually similar to denoising score matching, compares favorably against ISM and SSM-VR. All methods perform well for the reverse-time SDE, even for very little training data. Using the multi-step loss consistently gives significant improvements at the cost of a slightly increased training time. 
Our proposed multi-step training performs best or is on par with the baselines for all data set sizes and inference types. Because the posterior metric Q is very sensitive to the score where the paths from both starting points intersect, evaluations are slightly noisy.

\paragraph{Comparison with analytic scores} We perform further experiments to empirically verify Theorem \ref{thm:1-step-score} by comparing the learned scores of our method with analytic scores in appendix \ref{app:1d_toy_sde}.

\begin{figure}[b]
    \centering
    \begin{subfigure}[b]{.35\textwidth}
         \centering 
         \includegraphics[width=\textwidth]{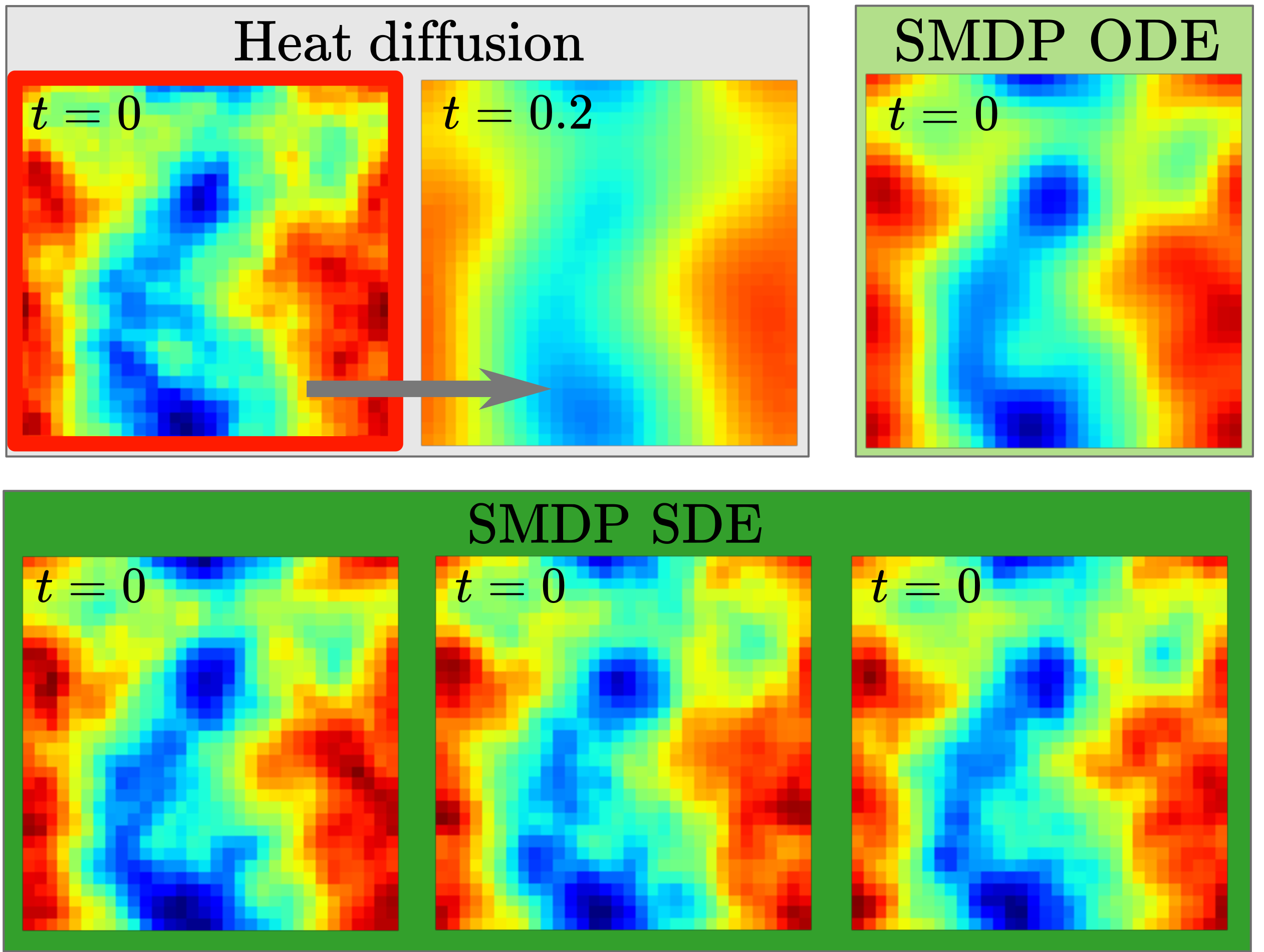} 
         \caption{Inference}
     \end{subfigure} 
     \begin{subfigure}[b]{.64\textwidth}
         \centering 
         \includegraphics[width=.335\textwidth, trim={.2cm .2cm .2cm .2cm},clip]{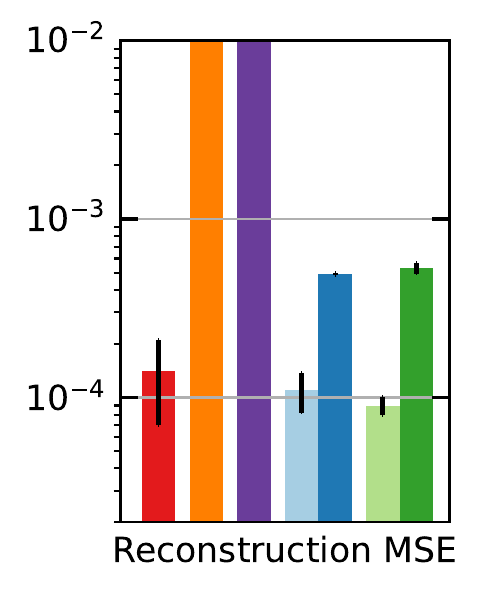} 
         \includegraphics[width=.62\textwidth, trim={.2cm .2cm .2cm .2cm},clip]{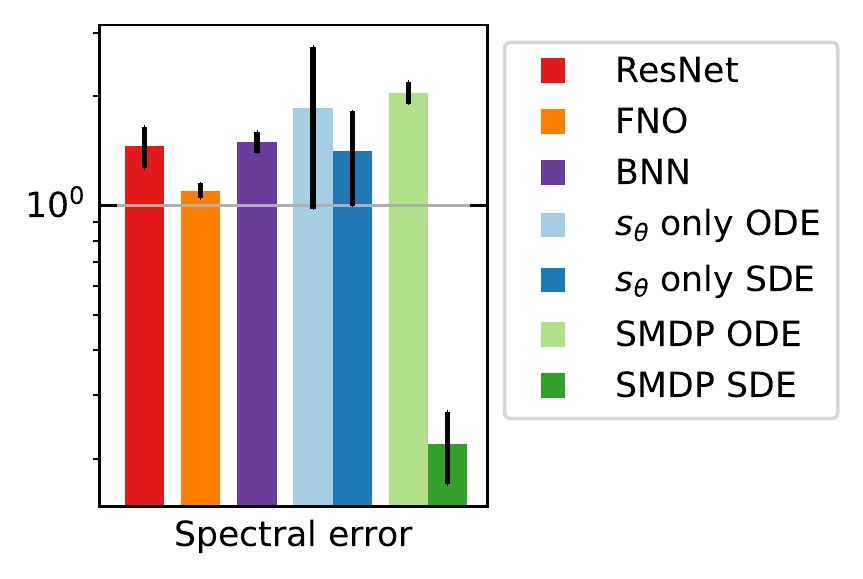} 
         \caption{Evaluation}
         \label{fig:heat_sde_ablation}
     \end{subfigure}
    \caption{Stochastic heat equation overview. 
    While the ODE trajectories provide smooth solutions with the lowest reconstruction MSE, the SDE solutions synthesize high-frequency content, significantly improving spectral error.  
    The "$s_\theta$ only" version without the reverse physics step exhibits a significantly larger spectral error. Metrics in (b) are averaged over three runs.
    }
    \label{fig:stochastic_heat_eq_overview}
\end{figure}

\subsection{Stochastic Heat Equation} \label{sec:heat_equation} 

The heat equation $\frac{\partial u}{\partial t} = \alpha \Delta u$ plays a fundamental role in many physical systems. For this experiment, we consider the stochastic heat equation, which slightly perturbs the heat diffusion process and includes an additional term $g(t)\ \xi$, where $\xi$ is space-time white noise, see Pardoux \cite[Chapter 3.2]{pardoux2021stochastic}. For our experiments, we fix the diffusivity constant to $\alpha = 1$ and sample initial conditions at $t=0$ from Gaussian random fields with $n=4$ at resolution $32 \times 32$. We simulate the heat diffusion with noise from $t=0$ until $t=0.2$ using the Euler-Maruyama method and a spectral solver $\mathcal{P}_h$ with a fixed step size $\Delta t = \num{6.25e-3}$ and $g \equiv 0.1$. Given a simulation end state $\mathbf{x}_T$, we want to recover a possible initial state $\mathbf{x}_0$. 
In this experiment, the forward solver 
cannot be used to infer $\mathbf{x}_0$ directly in a single step or without corrections since high frequencies due to noise are amplified, leading to physically implausible solutions. We implement the reverse physics simulator $\Tilde{P}^{-1}$ by using the forward step of the solver $\mathcal{P}_h(\mathbf{x})$, i.e. $\Tilde{\mathcal{P}}^{-1}(\mathbf{x}) \approx - \mathcal{P}_h (\mathbf{x})$.

\myparaspace{} \paragraph{Training and Baselines}
Our training data set consists of $2500$ initial conditions with their corresponding trajectories and end states at $t=0.2$. We consider a small \emph{ResNet}-like architecture based on an encoder and decoder part as representation for the score function $s_\theta(\mathbf{x}, t)$. The spectral solver is implemented via differentiable programming in \emph{JAX} \citep{schoenholz2020jax}, see appendix \ref{app:heat}.
As baseline methods, we consider a supervised training of the same \emph{ResNet}-like architecture as $s_\theta(\mathbf{x}, t)$, a \emph{Bayesian neural network} (BNN) as well as a \emph{Fourier neural operator} (FNO) network \citep{zongyi2021fourier}. We adopt an $L_2$ loss for all these methods, i.e., the training data consists of pairs of initial state $\mathbf{x}_0$ and end state $\mathbf{x}_T$. 

Additionally, we consider a variant of our proposed method for which we remove the reverse physics step \smash{$\Tilde{\mathcal{P}}^{-1}$} such that the inversion of the dynamics has to be learned entirely by $s_\theta$, denoted by "$s_\theta$ only". 
We do not compare to ISM and SSM-VR in the following as the data dimensions are too high for both methods to train properly, and we did not obtain stable trajectories during inference. 

\begin{wrapfigure}{r}{0.5\linewidth} 
\begin{minipage}[b]{0.5\textwidth}
         \includegraphics[width=.54\textwidth, trim={.2cm .2cm .2cm .2cm},clip]{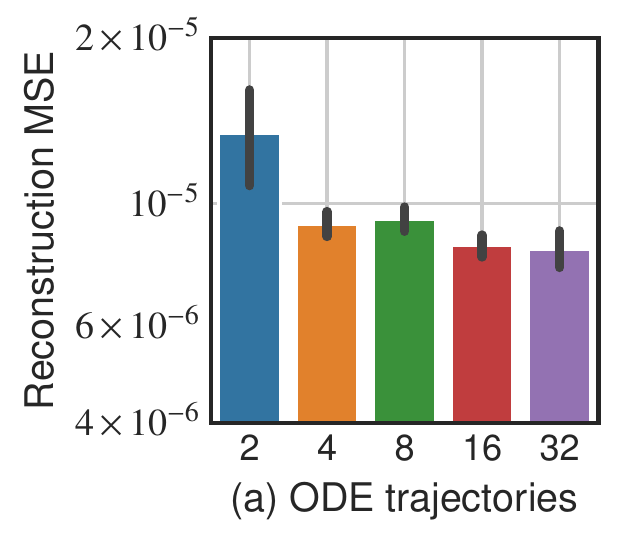} 
         \includegraphics[width=.45\textwidth, trim={.2cm .2cm .2cm .2cm},clip]{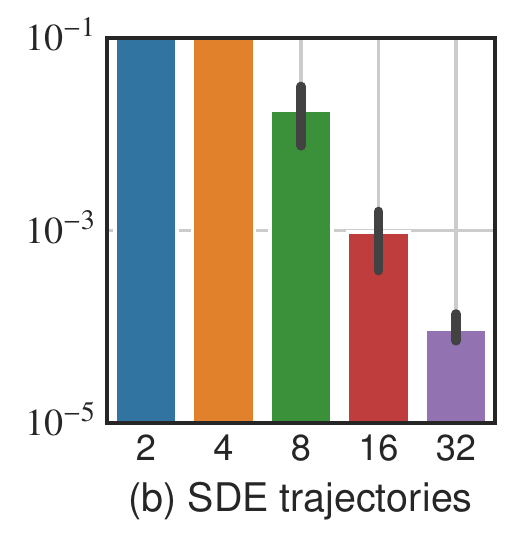}
         \captionof{figure}{Multi-step $S_\mathrm{max}$ vs. reconstruction MSE averaged over 5 runs.}
         \label{fig:ablation_sliding_window}
\end{minipage}
\end{wrapfigure}

\myparaspace{} \paragraph{Reconstruction accuracy vs. fitting the data manifold}
We evaluate our method and the baselines 
by considering the \emph{reconstruction MSE} on a test set of $500$ initial conditions and end states. 
For the reconstruction MSE, we simulate the prediction of the network forward in time with the solver $\mathcal{P}_h$ to obtain a corresponding end state, which we compare to the ground truth via the $L_2$ distance.  
This metric has the disadvantage that it does not measure how well the prediction matches the training data manifold. I.e., for this case, whether the prediction resembles the properties of the Gaussian random field. 
For that reason, we additionally compare the power spectral density of the states as the \emph{spectral loss}. An evaluation and visualization of the reconstructions are given in figure \ref{fig:stochastic_heat_eq_overview}, which shows that our ODE inference performs best regarding the reconstruction MSE. However, its solutions are smooth and do not contain the necessary small-scale structures. This is reflected in a high spectral error. 
The SDE variant, on the other hand, performs very well in terms of spectral error and yields visually convincing solutions with only a slight increase in the reconstruction MSE. 
This highlights the role of noise as a source of entropy in the inference process for SMDP SDE, which is essential for synthesizing small-scale structures. 
Note that there is a natural tradeoff between both metrics, and the ODE and SDE inference perform best for each of the cases while using an identical set of weights. 

\myparaspace{} \paragraph{Multi-step loss is crucial for good performance}

We performed an ablation study on the maximum window size $S_\mathrm{max}$ in figure \ref{fig:ablation_sliding_window} for the reconstruction MSE. 
For both ODE and SDE inference, increasing $S_\mathrm{max}$ yields significant improvements at the cost of slightly increased training resources. This also highlights the importance of using a multi-step loss instead of the 1-step loss ($S_\mathrm{max}=2$) for inverse problems with poor conditioning.

We perform further experiments regarding test-time distribution shifts when modifying the noise scale and diffusivity, see appendix \ref{app:heat}, which showcase the robustness of our methods.

\begin{figure}[b]
    \centering
    \begin{subfigure}[b]{.30\textwidth}
    \includegraphics[width=\textwidth]{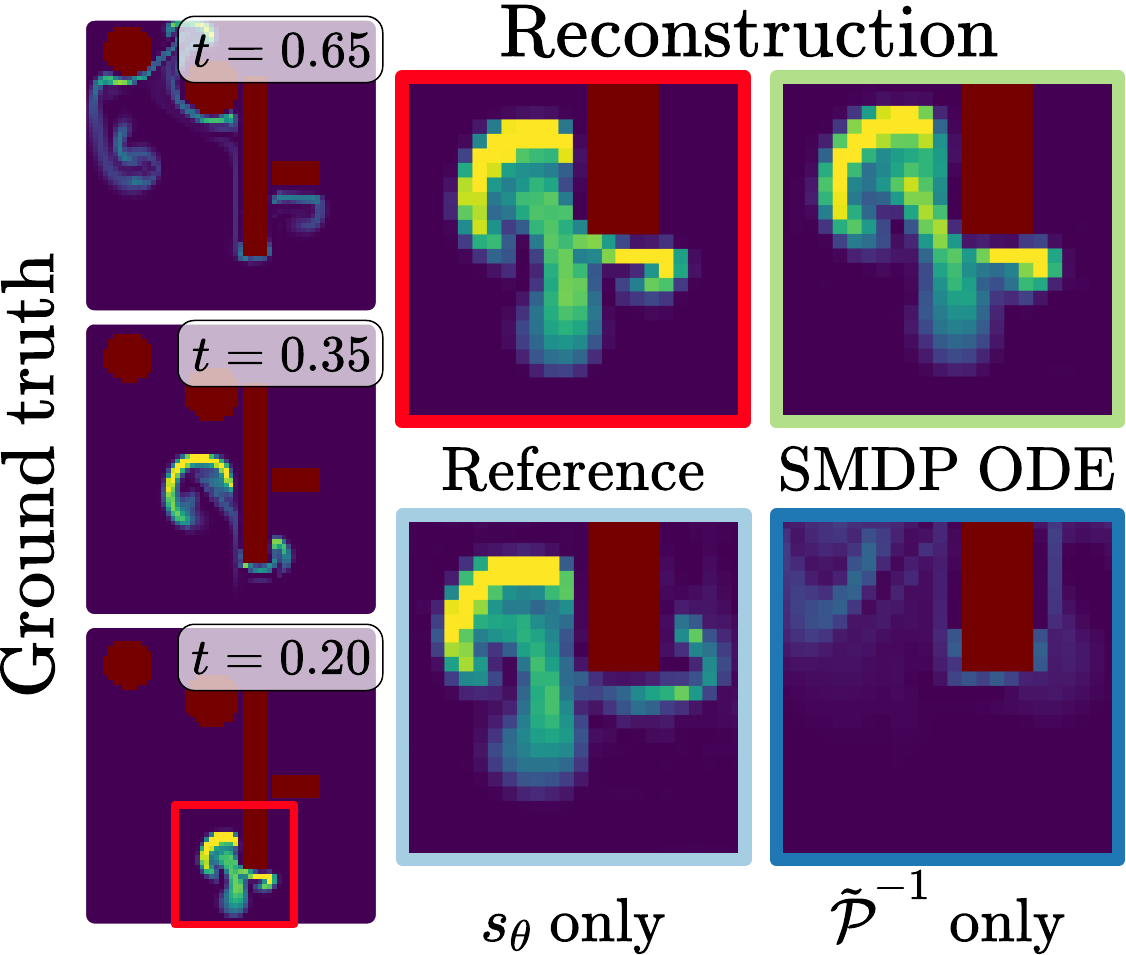} 
    \caption{Marker density}
    \label{fig:buoyancy_flow_marker_density}
    \end{subfigure}
    \begin{subfigure}[b]{.28\textwidth}
    \includegraphics[width=\textwidth]{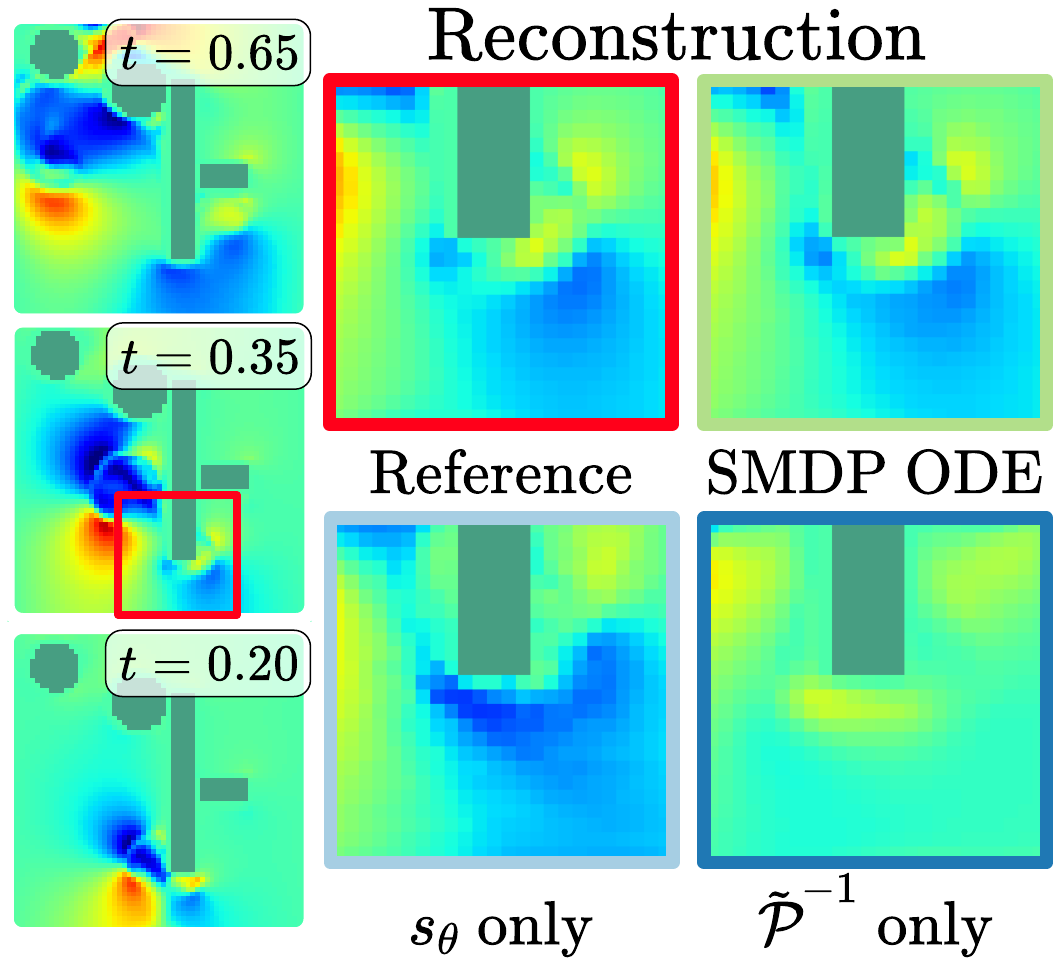} 
    \caption{Velocity field (x)}
    \label{fig:buoyancy_flow_velocity_field}
    \end{subfigure}
    \begin{subfigure}[b]{.40\textwidth}
    \includegraphics[width=\textwidth, trim={.2cm .2cm .2cm 0cm},clip]{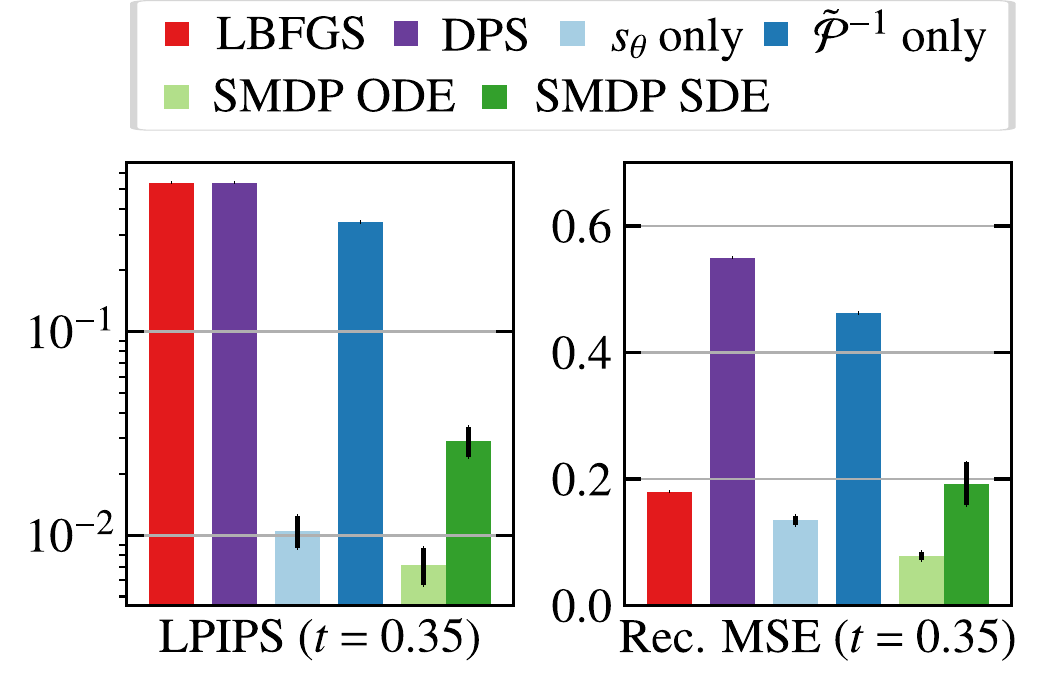}
    \caption{Evaluation}
    \label{fig:buoyancy_flow_evaluation}
    \end{subfigure}
    
    \caption{Buoyancy flow case. Ground truth shows the marker density and velocity field in the $x$-direction at different points of the simulation trajectory from the test set (a, b). We show reconstructions given the simulation end state at $t=0.65$ and provide an evaluation of the reconstructed trajectories based on perceptual similarity (LPIPS) and the reconstruction MSE for three runs (c).}
    \label{fig:buoyancy_flow_overview}
    
\end{figure}

\subsection{Buoyancy-driven Flow with Obstacles}
\label{sec:smoke_obstacle}

Next, we test our methodology on a more challenging problem. For this purpose, we consider deterministic simulations of buoyancy-driven flow within a fixed domain $\Omega \subset [0,1] \times [0,1]$ and randomly placed obstacles. Each simulation runs from time $t=0.0$ to $t=0.65$ with a step size of $\Delta t = 0.01$. 
SMDP is trained with the objective of reconstructing a plausible initial state given an end state of the marker density and velocity fields at time $t=0.65$, as shown in figure \ref{fig:buoyancy_flow_marker_density} and figure \ref{fig:buoyancy_flow_velocity_field}. 
We place spheres and boxes with varying sizes at different positions within the simulation domain that do not overlap with the marker inflow. For each simulation, we place one to two objects of each category.

\myparaspace{} \paragraph{Score matching for deterministic systems} During training, we add Gaussian noise to each simulation state $\mathbf{x}_t$ with $\sigma_t = \sqrt{\Delta t}$. In this experiment, no stochastic forcing is used to create the data set, i.e., $g \equiv 0$. 
By adding noise to the simulation states, the 1-step loss still minimizes a score matching objective in this situation, similar to denoising score matching; see appendix \ref{app:denoising_score_matching} for a derivation.
In the situation without stochastic forcing, during inference, our method effectively alternates between the reverse physics step, a small perturbation, and the correction by $s_\theta(\mathbf{x}, t)$, which projects the perturbed simulation state back to the distribution $p_t$. We find that for the SDE trajectories, $C=2$ slightly overshoots, and $C=1$ gives an improved performance. In this setting, the "$s_\theta$ only" version of our method closely resembles a denoiser that learns additional physics dynamics. 

\myparaspace{} \paragraph{Training and comparison} Our training data set consists of 250 simulations with corresponding trajectories generated with \emph{phiflow} \citep{holl2020phiflow}. Our neural network architecture for $s_\theta(\mathbf{x},t)$ uses dilated convolutions \citep{stachenfeld2021learned}, see appendix \ref{sec:buoyancy_flow_details} for details. The reverse physics step \smash{$\Tilde{\mathcal{P}}^{-1}$} is implemented directly in the solver by using a negative step size $-\Delta t$ for time integration. For training, we consider the multi-step formulation with $S_\mathrm{max} = 20$. We additionally compare with solutions from directly optimizing the initial smoke and velocity states at $t=0.35$ using the differentiable forward simulation and limited-memory BFGS \cite[LBFGS]{liun89}. Moreover, we compare with solutions obtained from diffusion posterior sampling for general noisy inverse problems \cite[DPS]{chungKMKY23} with a pretrained diffusion model on simulation states at $t=0.35$. 
For the evaluation, we consider a reconstruction MSE analogous to section \ref{sec:heat_equation} and the perceptual similarity metric LPIPS.
The test set contains five simulations. 
The SDE version yields good results for this experiment but is most likely constrained in performance by the approximate reverse physics step and large step sizes. However, the ODE version outperforms directly inverting the simulation numerically \smash{($\Tilde{\mathcal{P}}^{-1}$ only)}, and when training without the reverse physics step ($s_\theta$ only), as shown in \ref{fig:buoyancy_flow_evaluation}.

\subsection{Navier-Stokes with Unknown Dynamics} \label{sec:navier_stokes_dynamics}

\begin{figure}[t]
    \centering
    \begin{subfigure}[b]{.47\textwidth}    
    \centering
    \includegraphics[width=\textwidth]{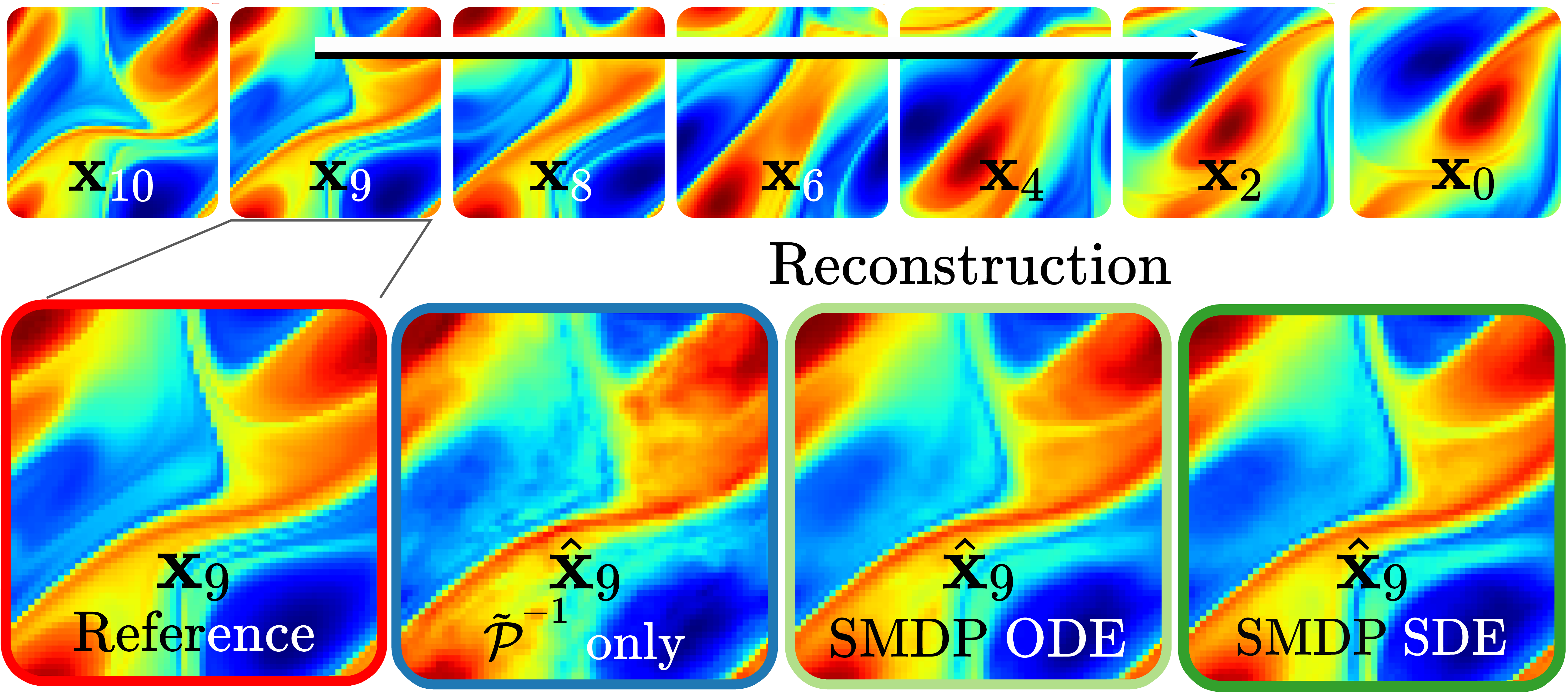}
    \caption{Reconstructed trajectories at $t=9$.}
    \end{subfigure}
    \begin{subfigure}[b]{.49\textwidth} 
\includegraphics[width=.31\textwidth, trim={.2cm .2cm .2cm .0cm},clip]{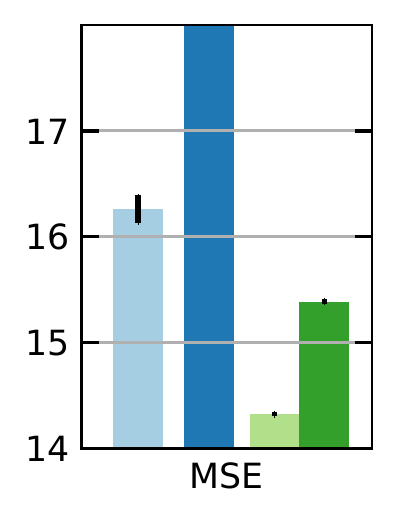}
\includegraphics[width=.685\textwidth, trim={.2cm .2cm .2cm .0cm},clip]{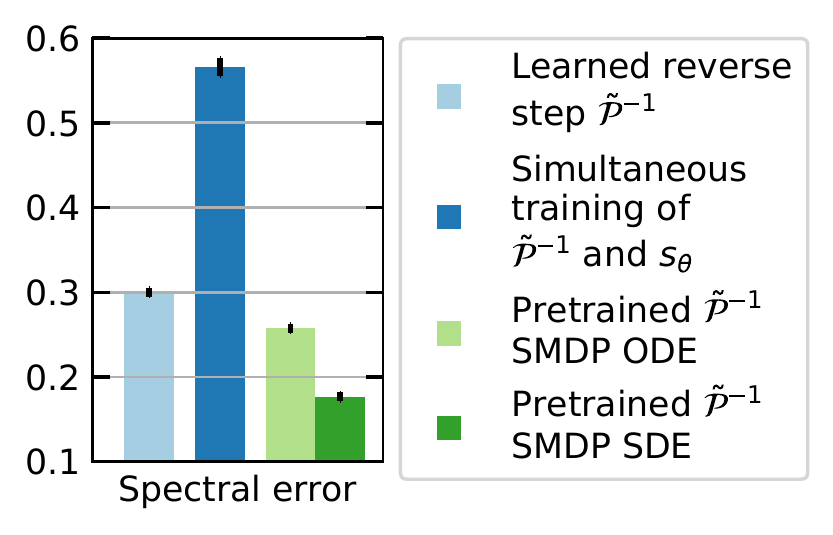}
\caption{Evaluation}
    \end{subfigure}
    \caption{Turbulence case. Comparison of reconstructed trajectories (a) and evaluation of MSE and spectral error for different training variants (b). Our proposed ODE and SDE inference outperforms the learned surrogate model \smash{$\Tilde{\mathcal{P}}^{-1}$. Metrics are averaged over three runs.}}
    \label{fig:navier_stokes_overview}
    
\end{figure}
As a fourth experiment, we aim to learn the time evolution of isotropic, forced turbulence with a similar setup as Li et al. \cite{zongyi2021fourier}.
The training data set consists of vorticity fields from 1000 simulation trajectories from $t = 0$ until $T=10$ with $\Delta t = 1$, a spatial resolution of $64 \times 64$ and viscosity fixed at $\nu=10^{-5}$. As before, our objective is to predict a trajectory $\hat{\mathbf{x}}_{0:M}$ that reconstructs the true trajectory given an end state $\mathbf{x}_M$.
In this experiment, we pretrain a surrogate for the reverse physics step \smash{$\Tilde{\mathcal{P}}^{-1}$} by employing the FNO architecture from \citep{zongyi2021fourier} trained on the reverse simulation.
For pretraining \smash{$\Tilde{\mathcal{P}}^{-1}$} we use our proposed training setup with the multi-step loss and $S_\mathrm{max} = 10$ but freeze the score to $s_\theta(\mathbf{x}, t) \equiv 0$. Then, we train the time-dependent score $s_\theta(\mathbf{x}, t)$ while freezing the reverse physics step. This approach guarantees that any time-independent physics are captured by \smash{$\Tilde{\mathcal{P}}^{-1}$} and $s_\theta(\mathbf{x}, t)$ can focus on learning small improvements to \smash{$\Tilde{\mathcal{P}}^{-1}$} as well as respond to possibly time-dependent data biases. We give additional training details in appendix \ref{app:isotropic_turbulence}.

\myparaspace{} \paragraph{Evaluation and training variants}

For evaluation, we consider the MSE and spectral error of the reconstructed initial state $\hat{\mathbf{x}}_0$ compared to the reference $\mathbf{x}_0$.
As baselines, during inference, we employ only the learned surrogate model \smash{$\Tilde{\mathcal{P}}^{-1}$} without $s_\theta$. In addition to that, we evaluate a variant for which we train both the surrogate model and $s_\theta(\mathbf{x},t)$ at the same time. As the two components resemble the drift and score of the reverse-time SDE, this approach is similar to \emph{DiffFlow} \citep{zhang2021diffusion}, which learns both components in the context of generative modeling. We label this approach \emph{simultaneous training}. Results are shown in figure \ref{fig:navier_stokes_overview}. Similar to the stochastic heat equation results in section \ref{sec:heat_equation}, the SDE achieves the best spectral error, while the ODE obtains the best MSE. Our proposed method outperforms both the surrogate model and the simultaneous training of the two components. 

\section{Discussion and Conclusions}

We presented a combination of learned corrections training and diffusion models in the context of physical simulations and differentiable physics for solving inverse physics problems. We showed its competitiveness, accuracy, and long-term stability in challenging and versatile experiments and motivated our design choices. 
We considered two variants with complementary benefits for inference: while the ODE variants achieve the best MSE, the SDE variants allow for sampling the posterior and yield an improved coverage of the target data manifold.
Additionally, we provided theoretical insights that the 1-step is mathematically equivalent to optimizing the score matching objective. We showed that its multi-step extension maximizes a variational lower bound for maximum likelihood training. 

Despite the promising initial results, our work has limitations that open up exciting directions for future work:
Among others, it requires simulating the system backward in time step by step, which can be costly and alleviated by reduced order methods. Additionally, we assume that $\Delta t$ is sufficiently small and the reverse physics simulator is accurate enough. 
Determining a good balance between accurate solutions with few time steps and diverse solutions with many time steps represents an important area for future research.

\paragraph{Acknowledgements}
 
BH, SV, and NT acknowledge funding from the European Research Council (ERC) under the European Union’s Horizon 2020 research and innovation programme (LEDA: grant agreement No 758853, and SpaTe: grant agreement No 863850). SV thanks the Max Planck Society for support through a Max Planck Lise Meitner Group.
This research was partly carried out on the High Performance Computing resources of the FREYA cluster at the Max Planck Computing and Data Facility (MPCDF) in Garching operated by the Max Planck Society (MPG).

\clearpage

\printbibliography

\clearpage

\appendix

\begin{refsection}

\section*{\Large Appendix}

\section{Proofs and Training Methodology} 
\label{app:methodology}

Below we summarize the problem formulation from the main paper and provide details about the training procedure and the derivation of our methodology.

\paragraph{Problem setting} Let $(\Omega, \mathcal{F}, P)$ be a probability space and $W(t) = (W_1(t), ..., W_D(t))^T$ be a $D$-dimensional Brownian motion. Moreover, let $\mathbf{x}_0$ be a $\mathcal{F}_0$-measurable $\mathbb{R}^D$-valued random variable that is distributed as $p_0$ and represents the initial simulation state. We consider the time evolution of the physical system for $0 \leq t \leq T$ modeled by the stochastic differential equation (SDE)
\begin{align} \label{eq:app_sde_physics}
    d\mathbf{x} = \mathcal{P}(\mathbf{x}) dt + g(t)dW
\end{align}
with initial value $\mathbf{x}_0$ and Borel measurable drift $\mathcal{P}:\, \mathbb{R}^D \to \mathbb{R}^D$ and diffusion $g:\, [0,T] \to \mathbb{R}_{\geq 0}$. This SDE transforms the marginal distribution $p_0$ of initial states at time $0$ to the marginal distribution $p_T$ of end states at time $T$. 

Moreover, we assume that we have sampled $N$ trajectories of length $M$ from the above SDE with a fixed time discretization $0 \leq t_0 < t_1 < ... < t_M \leq T$ for the interval $[0, T]$ and collected them in a training data set $\{(\mathbf{x}_{t_m}^{(n)})_{i=0}^M \}_{n=0}^N$.
For simplicity, we assume that all time steps are equally spaced, i.e., $t_{m+1}-t_m:= \Delta t$. Moreover, in the following we use the notation $\mathbf{x}_{\ell:m}$ for $0 \leq \ell < m \leq M$ to refer to the trajectory $(\mathbf{x}_{t_\ell}, \mathbf{x}_{t_{\ell+1}}, ..., \mathbf{x}_{t_m})$.

\paragraph{Assumptions} Throughout this paper, we make some additional assumptions to ensure the existence of a unique solution to the SDE \eqref{eq:app_sde_physics} and the strong convergence of the Euler-Maruyama method. In particular:

\begin{itemize}
    \item Finite variance of samples from $p_0$: $\mathbb{E}_{\mathbf{x}_0 \sim p_0}[||\mathbf{x}_0||_2^2] < \infty$
    \item Lipschitz continuity of $\mathcal{P}$: $\exists K_1 > 0 \: \forall \mathbf{x}, \mathbf{y} \in \mathbb{R}^D: ||\mathcal{P}(\mathbf{x}) - \mathcal{P}(\mathbf{y})||_2 \leq K_1 ||\mathbf{x}-\mathbf{y}||_2$  
    \item Lipschitz continuity of $g$: $\exists K_2 > 0 \: \forall t,s \in [0,T]: |g(t) - g(s)| \leq K_3 |t-s|$  
    \item Linear growth condition: $\exists K_3 > 0 \: \forall \mathbf{x} \in \mathbb{R}^D: ||\mathcal{P}(\mathbf{x})||_2 \leq K_3 (1 + ||\mathbf{x}||_2)$
    \item $g$ is bounded: $\exists K_4 > 0 \: \forall t \in [0,T]: |g(t)| \leq K_4$ 
\end{itemize}

\paragraph{Euler-Maruyama Method}

Using Euler-Maruyama steps, we can simulate paths from SDE \eqref{eq:app_sde_physics} similar to ordinary differential equations (ODE). Given an initial state $\mathbf{X}_{t_0}$, we let $\hat{\mathbf{X}}_{t_{0}}^{\Delta t} = \mathbf{X}_{t_0}$ and define recursively
\begin{align}
\label{eq:iteration_sde}
    \hat{\mathbf{X}}_{t_{m+1}}^{\Delta t} &\leftarrow \hat{\mathbf{X}}_{t_{m}}^{\Delta t} + \Delta t \, \mathcal{P}(\hat{\mathbf{X}}_{t_{m}}^{\Delta t}) + \sqrt{\Delta t} \, g(t_m) \, z_{t_m},
\end{align}
where $z_{t_m}$ are i.i.d. with $z_{t_m} \sim \mathcal{N}(0, I)$. For $t_i \leq t < t_{i+1}$, we define the piecewise constant solution of the Euler-Maruyama Method as $\hat{\mathbf{X}}_{t}^{\Delta t}:= \hat{\mathbf{X}}_{t_i}^{\Delta t}$. 
Let $\mathbf{X}_t$ denote the solution of the SDE \eqref{eq:app_sde_physics}. Then the Euler-Maruyama solution $\hat{\mathbf{X}}_{t}^{\Delta t}$ converges strongly to $\mathbf{X}_{t}$.

\begin{lemma} \label{thm:euler_convergence} [Strong convergence of Euler-Maruyama method] Consider the piecewise constant solution $\hat{X}_t^{\Delta t}$ of the Euler-Maruyama method. There is a constant $C$ such that \begin{align}
    \sup_{0 \leq t \leq T} \mathbb{E}[|| X_t - \hat{X}_t^{\Delta t} ||_2] \leq C \sqrt{\Delta t}.
\end{align}
\end{lemma}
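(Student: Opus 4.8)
The plan is to run the classical strong-convergence argument for the Euler--Maruyama scheme (cf.\ Kloeden--Platen), specialized to the drift $\mathcal{P}$ and the time-dependent, state-independent diffusion $g(t)$, using exactly the assumptions listed above. First I would pass from the piecewise constant solution to its \emph{continuous-time interpolation} $\bar{\mathbf{X}}_t$, which agrees with $\hat{\mathbf{X}}^{\Delta t}_{t_m}$ at the grid points and is given on $[t_m, t_{m+1})$ by $\bar{\mathbf{X}}_t = \hat{\mathbf{X}}^{\Delta t}_{t_m} + (t - t_m)\mathcal{P}(\hat{\mathbf{X}}^{\Delta t}_{t_m}) + g(t_m)(W_t - W_{t_m})$, after coupling the scheme to the driving Brownian motion via $\sqrt{\Delta t}\, z_{t_m} \overset{d}{=} W_{t_{m+1}} - W_{t_m}$. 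Writing $\eta(s) := t_m$ for $s \in [t_m, t_{m+1})$, this interpolation has the integral form $\bar{\mathbf{X}}_t = \mathbf{x}_0 + \int_0^t \mathcal{P}(\bar{\mathbf{X}}_{\eta(s)})\, ds + \int_0^t g(\eta(s))\, dW_s$, which matches the structure of the true solution and is the key to the estimate. Because a single partial Euler step satisfies $\mathbb{E}[\|\bar{\mathbf{X}}_t - \hat{\mathbf{X}}^{\Delta t}_{\eta(t)}\|_2^2] = O(\Delta t)$, it suffices to bound $\sup_t \mathbb{E}[\|\mathbf{X}_t - \bar{\mathbf{X}}_t\|_2^2]$ and transfer the result at the very end.

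Setting $\mathbf{e}_t := \mathbf{X}_t - \bar{\mathbf{X}}_t$ and subtracting the two integral representations gives \[ \mathbf{e}_t = \int_0^t \left[ \mathcal{P}(\mathbf{X}_s) - \mathcal{P}(\bar{\mathbf{X}}_{\eta(s)}) \right] ds + \int_0^t \left[ g(s) - g(\eta(s)) \right] dW_s. \] Squaring, using $\|a+b\|_2^2 \le 2\|a\|_2^2 + 2\|b\|_2^2$, Cauchy--Schwarz on the finite-length drift integral, and the It\^o isometry on the diffusion integral yields for fixed $t$ a bound of the form \[ \mathbb{E}[\|\mathbf{e}_t\|_2^2] \le C\, t \int_0^t \mathbb{E}\!\left[\|\mathcal{P}(\mathbf{X}_s) - \mathcal{P}(\bar{\mathbf{X}}_{\eta(s)})\|_2^2\right] ds + C \int_0^t |g(s) - g(\eta(s))|^2\, ds. \] The diffusion integrand is deterministic, so Lipschitz continuity of $g$ bounds it by $O(\Delta t^2)$. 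For the drift I would split $\mathcal{P}(\mathbf{X}_s) - \mathcal{P}(\bar{\mathbf{X}}_{\eta(s)}) = [\mathcal{P}(\mathbf{X}_s) - \mathcal{P}(\bar{\mathbf{X}}_s)] + [\mathcal{P}(\bar{\mathbf{X}}_s) - \mathcal{P}(\bar{\mathbf{X}}_{\eta(s)})]$ and apply the Lipschitz constant of $\mathcal{P}$ to each: the first piece contributes $\|\mathbf{e}_s\|_2$, while the second contributes $\|\bar{\mathbf{X}}_s - \bar{\mathbf{X}}_{\eta(s)}\|_2$, the partial-step increment whose second moment is again $O(\Delta t)$.

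Before assembling these, I would establish as a preliminary the a priori moment bounds $\sup_{0 \le t \le T}\mathbb{E}[\|\mathbf{X}_t\|_2^2] < \infty$ and the uniform-in-$\Delta t$ estimate $\sup_{\Delta t}\sup_m \mathbb{E}[\|\hat{\mathbf{X}}^{\Delta t}_{t_m}\|_2^2] < \infty$; both follow from the finite initial variance, the linear growth condition, boundedness of $g$, and a (discrete, resp.\ continuous) Grönwall argument, and they are exactly what makes the partial-step increment bound $\mathbb{E}[\|\bar{\mathbf{X}}_s - \bar{\mathbf{X}}_{\eta(s)}\|_2^2] = O(\Delta t)$ hold. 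Collecting all contributions produces an inequality of the form $\mathbb{E}[\|\mathbf{e}_t\|_2^2] \le C_1 \int_0^t \mathbb{E}[\|\mathbf{e}_s\|_2^2]\, ds + C_2\, \Delta t$ with constants depending only on $T$, the Lipschitz and growth/boundedness constants, and the moment bounds. Grönwall's inequality then gives $\sup_{0 \le t \le T}\mathbb{E}[\|\mathbf{e}_t\|_2^2] \le C_2 e^{C_1 T}\,\Delta t$, and Jensen's inequality $\mathbb{E}[\|\mathbf{e}_t\|_2] \le (\mathbb{E}[\|\mathbf{e}_t\|_2^2])^{1/2}$ upgrades this to the stated $\sqrt{\Delta t}$ rate; the final triangle-inequality step transfers the bound from $\bar{\mathbf{X}}_t$ to the piecewise constant $\hat{\mathbf{X}}^{\Delta t}_t$.

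I expect the main obstacle to be the two moment estimates rather than the closing Grönwall step: establishing the uniform-in-$\Delta t$ second-moment bound for the numerical iterates and, from it, the one-step increment bound $\mathbb{E}[\|\bar{\mathbf{X}}_s - \bar{\mathbf{X}}_{\eta(s)}\|_2^2] = O(\Delta t)$ is where the linear-growth and boundedness assumptions genuinely enter, and it is easy to be sloppy about constants remaining independent of $\Delta t$. Everything downstream --- Cauchy--Schwarz, the It\^o isometry, the Lipschitz splitting, and Grönwall --- is routine once these are in place.
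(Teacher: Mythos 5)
Your proposal is correct and is precisely the classical strong-convergence argument of Kloeden and Platen, which is all the paper does here: its ``proof'' is a citation to \citet[10.2.2]{kloeden1992stochastic}, and your continuous interpolation, a priori moment bounds, Lipschitz/It\^o-isometry splitting, and Gr\"onwall closing step reproduce that reference's argument (correctly simplified for the state-independent diffusion $g(t)$). No substantive differences to report.
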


\begin{proof}
See \citet[10.2.2]{kloeden1992stochastic}
\end{proof}

\subsection{1-step Loss and Score Matching Objective} \label{app:1-step-loss}

\begin{theorem} Consider a data set $\{ \mathbf{x}_{0:m}^{(n)} \}_{n=1}^N$ with trajectories sampled from SDE \eqref{eq:app_sde_physics}. Then the 1-step loss \begin{align} \label{eq:app-1-step-loss}
    \mathcal{L}_{\mathrm{single}}(\theta) := \frac{1}{M} \mathbb{E}_{\mathbf{x}_{0:M}} \left[ \sum_{m=0}^{M-1} \left[ \left| \left| \mathbf{x}_m 
 - \left\{ \mathbf{x}_{m+1} + \Delta t \left[ - \mathcal{P}(\mathbf{x}_{m+1}) + s_\theta(\mathbf{x}_{m+1}, t_{m+1}) \right] \right\}
 \right| \right|^2 \right] \right]
\end{align} is equivalent to minimizing the score matching objective \begin{align}
    \mathcal{J}(\theta) := \int_0^T \mathbb{E}_{\mathbf{x}_t}[ || \nabla_\mathbf{x} \log p_t(\mathbf{x}) - \Tilde{s}_\theta(\mathbf{x}, t) ||_2^2 ] dt,
\end{align} where $\Tilde{s}_\theta(\mathbf{x}, t) = s_\theta(\mathbf{x}, t) / g^2(t) $ as $\Delta t \to 0$. 
\end{theorem}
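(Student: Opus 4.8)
The plan is to reduce the sum to a single representative summand, rewrite that summand through the Euler--Maruyama transition kernel \eqref{eq:iteration_sde}, and then invoke the denoising score matching identity to trade the exploding \emph{conditional} score for the finite \emph{marginal} score, after which the result is recognized as a Riemann sum for $\mathcal{J}$. Since $\mathcal{L}_\mathrm{single}(\theta)=\tfrac1M\,\mathbb{E}[\sum_m \|r_m\|_2^2]$ with the $m$-th summand depending only on the adjacent pair $(\mathbf{x}_m,\mathbf{x}_{m+1})$, it suffices to analyze one term. First I would fix such a pair and use \eqref{eq:iteration_sde} to write $\mathbf{x}_{m+1} = \mathbf{x}_m + \Delta t\,\mathcal{P}(\mathbf{x}_m) + \sqrt{\Delta t}\,g(t_m)\,z_m$ with $z_m\sim\mathcal{N}(0,I)$, so that the forward transition $p(\mathbf{x}_{m+1}\mid\mathbf{x}_m)$ is Gaussian. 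Substituting $\mathbf{x}_m-\mathbf{x}_{m+1} = -\Delta t\,\mathcal{P}(\mathbf{x}_m)-\sqrt{\Delta t}\,g(t_m)\,z_m$ into the residual $r_m := \mathbf{x}_m-\mathbf{x}_{m+1}+\Delta t\,\mathcal{P}(\mathbf{x}_{m+1})-\Delta t\,s_\theta(\mathbf{x}_{m+1},t_{m+1})$ gives
\[
r_m = -\sqrt{\Delta t}\,g(t_m)\,z_m - \Delta t\,s_\theta(\mathbf{x}_{m+1},t_{m+1}) + \Delta t\big[\mathcal{P}(\mathbf{x}_{m+1})-\mathcal{P}(\mathbf{x}_m)\big].
\]
Because $\mathbf{x}_{m+1}-\mathbf{x}_m = O(\sqrt{\Delta t})$ and $\mathcal{P}$ is Lipschitz, the bracketed drift difference is $O(\Delta t^{3/2})$ and I would carry it as a controlled remainder.

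Next I would expand $\|r_m\|_2^2$ and take the expectation. The term $\Delta t\,g(t_m)^2\|z_m\|_2^2$ is independent of $\theta$; the term $\Delta t^2\|s_\theta\|_2^2$ is the quadratic contribution; and the essential piece is the cross term $2\Delta t^{3/2}g(t_m)\,\mathbb{E}[\langle z_m, s_\theta(\mathbf{x}_{m+1},t_{m+1})\rangle]$. The key point here is that $z_m$ cannot be approximated pointwise, since the conditional score $\nabla_{\mathbf{x}_{m+1}}\log p(\mathbf{x}_{m+1}\mid\mathbf{x}_m) = -z_m/(\sqrt{\Delta t}\,g(t_m))$ blows up as $\Delta t\to 0$. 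Instead I would condition on $\mathbf{x}_{m+1}$ and use the identity $\mathbb{E}[z_m\mid\mathbf{x}_{m+1}] = -\sqrt{\Delta t}\,g(t_m)\,\nabla_{\mathbf{x}_{m+1}}\log p_{t_{m+1}}(\mathbf{x}_{m+1})$, which is exactly the denoising score matching identity and follows from differentiating $p_{t_{m+1}}(\mathbf{x}_{m+1}) = \int p(\mathbf{x}_{m+1}\mid\mathbf{x}_m)\,p_{t_m}(\mathbf{x}_m)\,d\mathbf{x}_m$ under the integral. This converts the cross term into $-2\Delta t^2 g(t_m)^2\,\mathbb{E}[\langle\nabla\log p_{t_{m+1}}, s_\theta\rangle]$, where the marginal score is now finite.

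With $s_\theta = g^2\,\tilde{s}_\theta$ and $g(t_{m+1}) = g(t_m)+O(\Delta t)$, completing the square on the quadratic and cross terms yields, up to a $\theta$-independent constant and a relative $o(1)$ remainder,
\[
\mathbb{E}\|r_m\|_2^2 = \Delta t^2\,g(t_m)^4\,\mathbb{E}_{\mathbf{x}_{m+1}\sim p_{t_{m+1}}}\big\|\tilde{s}_\theta(\mathbf{x}_{m+1},t_{m+1}) - \nabla_\mathbf{x}\log p_{t_{m+1}}(\mathbf{x}_{m+1})\big\|_2^2 + \mathrm{const} + o(\Delta t^2).
\]
Summing over $m$, inserting $1/M = \Delta t/T$, and recognizing $\sum_m \Delta t\,(\cdots)$ as a Riemann sum shows that the $\theta$-dependent part of $\mathcal{L}_\mathrm{single}(\theta)$ equals $\tfrac{\Delta t^2}{T}\int_0^T g(t)^4\,\mathbb{E}_{\mathbf{x}\sim p_t}\|\tilde{s}_\theta(\mathbf{x},t)-\nabla_\mathbf{x}\log p_t(\mathbf{x})\|_2^2\,dt$ plus lower-order terms. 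Since the positive weight $g(t)^4$ and the overall factor $\Delta t^2/T$ do not move the minimizer, this is a $g$-weighted score matching objective whose minimizer $\tilde{s}_\theta(\mathbf{x},t)=\nabla_\mathbf{x}\log p_t(\mathbf{x})$ coincides with that of $\mathcal{J}(\theta)$, giving the claimed equivalence as $\Delta t\to 0$.

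The main obstacle is the error bookkeeping in the $\Delta t\to 0$ limit: because the conditional score is of order $\Delta t^{-1/2}$, a careless Taylor step can promote an $O(\Delta t^{3/2})$ remainder into the leading order, so the drift difference $\mathcal{P}(\mathbf{x}_{m+1})-\mathcal{P}(\mathbf{x}_m)$ must be controlled against this blow-up using the Lipschitz and finite-moment assumptions. In addition, the law of $\mathbf{x}_{m+1}$ is the Euler--Maruyama marginal rather than the true SDE marginal $p_{t_{m+1}}$, a discrepancy I would absorb via the strong convergence of Lemma \ref{thm:euler_convergence}. The conceptual crux, namely replacing the exploding conditional score by the finite marginal score through the denoising identity, is the one genuinely non-routine step; the rest is verifying that these approximations are uniform enough in $m$ to survive the Riemann-sum limit.
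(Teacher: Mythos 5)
Your proof is correct in its essentials, but it takes a genuinely different route from the paper's. The paper argues by comparing the learned one-step prediction $\mathbf{x}_t^p$ against an Euler--Maruyama step of Anderson's \emph{reverse-time} SDE started from the same data point $\mathbf{x}_{t+\Delta t}$: strong convergence of the Euler--Maruyama method handles the "$\Leftarrow$" direction, and the "$\Rightarrow$" direction evaluates the expectation over the injected Gaussian as the mean of a noncentral chi-squared variable, which is minimized precisely when $\tilde{s}_\theta$ equals the marginal score. You instead work entirely with the \emph{forward} transition kernel: you expand the quadratic residual, isolate the cross term $\mathbb{E}[\langle z_m, s_\theta\rangle]$, and trade the exploding conditional score for the finite marginal score via the denoising identity $\mathbb{E}[z_m\mid\mathbf{x}_{m+1}] = -\sqrt{\Delta t}\,g(t_m)\nabla\log p_{t_{m+1}}(\mathbf{x}_{m+1})$, then complete the square. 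This buys you several things: the argument is self-contained (no appeal to the reverse-time SDE theorem), it handles both directions of the equivalence simultaneously through an explicit decomposition of $\mathbb{E}\|r_m\|_2^2$ into a $g^4$-weighted score-matching term of order $\Delta t^2$ plus $\theta$-independent constants, and it makes transparent that the 1-step loss is a denoising objective --- which is in fact the same mechanism the paper invokes separately in appendix \ref{app:denoising_score_matching} for the deterministic case. Your error bookkeeping is also sound: the drift-difference remainder correlates with $z_m$ at order $\Delta t^2$ but only through a $\theta$-independent term, and its interaction with $s_\theta$ is $O(\Delta t^{5/2})$, so the leading $\theta$-dependent order is untouched. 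Two caveats, neither fatal and both shared with the paper's own argument: the resulting objective is the $g(t)^4$-weighted version of $\mathcal{J}(\theta)$, so "equivalence" means identity of the pointwise minimizer $\tilde{s}_\theta \equiv \nabla_\mathbf{x}\log p_t$ rather than of the objectives themselves; and the true SDE transition kernel is only approximately the Gaussian Euler--Maruyama kernel, a discrepancy you correctly flag and defer to Lemma \ref{thm:euler_convergence}.
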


\begin{proof}
\mbox{}
\begin{itemize}
    \item["$\Leftarrow$":] {

    Consider $\theta^*$ such that $\Tilde{s}_{\theta^*}(\mathbf{x}, t) \equiv \nabla_\mathbf{x} \log p_t(\mathbf{x})$, which minimizes the score matching objective $\mathcal{J}(\theta)$.
    Then fix a time step $t$ and sample $\mathbf{x}_t$ and $\mathbf{x}_{t+\Delta t}$ from the data set. 
    The probability flow solution $\mathbf{x}_t^p$ based on equation \eqref{eq:app-1-step-loss} is \begin{align}
        \mathbf{x}_t^p := \mathbf{x}_{t+\Delta t} + \Delta t \left[ - \mathcal{P}(\mathbf{x}_{t+\Delta t}) + s_\theta (\mathbf{x}_{t+\Delta t}, t + \Delta t) \right].
    \end{align}
    At the same time, we know that the transformation of marginal likelihoods from $p_{t + \Delta t}$ to $p_t$ follows the reverse-time SDE \citep{anderson1982reverse}
    \begin{align}
        d\mathbf{x} = \left[ \mathcal{P}(\mathbf{x}) + g^2(t) \nabla_\mathbf{x} \log p_t(\mathbf{x}) \right] dt + g(t) dW, 
    \end{align}
    which runs backward in time from $T$ to $0$.
    Denote by $\hat{\mathbf{x}}_t^{\Delta t}$ the solution of the Euler-Maruyama method at time $t$ initialized with $\mathbf{x}_{t+\Delta t}$ at time $t + \Delta t$.

    Using the triangle inequality for squared norms, we can write
     \begin{align} \label{eq:sm_bound}
         \lim_{\Delta t \to 0} \mathbb{E}\left[|| \mathbf{x}_t - \mathbf{x}_t^p ||_2^2\right] \leq 
         2 \lim_{\Delta t \to 0} \mathbb{E}\left[|| \mathbf{x}_t - \hat{\mathbf{x}}_t^{\Delta t} ||_2^2\right] + 2 \lim_{\Delta t \to 0} \mathbb{E}\left[ || \hat{\mathbf{x}}_t^{\Delta t} - \mathbf{x}_t^p ||_2^2 \right].
     \end{align}
     Because of the strong convergence of the Euler-Maruyama method, we have that for the first term of the bound in equation \eqref{eq:sm_bound}  \begin{align} \label{eq:bound_sm_1}
         \lim_{\Delta t \to 0} \mathbb{E}\left[|| \mathbf{x}_t - \hat{\mathbf{x}}_t^{\Delta t} ||_2^2\right] = 0
     \end{align}
     independent of $\theta$. At the same time, for the Euler-Maruyama discretization, we can write
     \begin{align}
         \hat{\mathbf{x}}_t^{\Delta t} = \mathbf{x}_{t+\Delta t} + \Delta t \left[ - \mathcal{P}(\mathbf{x}_{t+\Delta t}) + g^2(t + \Delta t) \nabla_\mathbf{x} \log p_{t+\Delta t}(\mathbf{x}_{t+\Delta t}) \right] \\ + g(t+\Delta t) \sqrt{\Delta t} z_{t+\Delta t},
     \end{align}
     where $z_{t+\Delta t}$ is a standard Gaussian distribution, i.e., $z_{t+\Delta t} \sim \mathcal{N}(0,I)$. Therefore, we can simplify the second term of the bound in equation \eqref{eq:sm_bound}
     \begin{align}
         &\lim_{\Delta t \to 0} \mathbb{E} \left[ \left|\left| \hat{\mathbf{x}}_t^{\Delta t} - \mathbf{x}_t^p \right|\right|_2^2 \right]
         \\ = \quad & \lim_{\Delta t \to 0} \mathbb{E}_{\mathbf{x}_{t+\Delta t} \sim p_{t+\Delta t}, z \sim \mathcal{N}(0,I)} \bigg[ \Big|\Big| \Delta t \, g(t+\Delta t)^2 \left[ \nabla_{\mathbf{x}} \log p_{t+\Delta t}(\mathbf{x}_{t+\Delta t}) \right. \\  &\qquad \qquad \qquad \qquad \qquad \qquad \left.\left.\left.\left.  -  \tilde{s}_\theta(\mathbf{x}_{t+\Delta t}, t+\Delta t)\right] + g(t+\Delta t) \sqrt{\Delta t} \, z \right|\right|_2^2 \right].
     \end{align}
     If $\theta^*$ minimizes the score matching objective, then $\tilde{s}_{\theta^*}(\mathbf{x}, t) \equiv \nabla_\mathbf{x} \log p_t(\mathbf{x})$, and therefore the above is the same as 
     \begin{align} \label{eq:bound_sm_2}
         & \lim_{\Delta t \to 0} \mathbb{E}_z[|| \sqrt{\Delta t} \, g(t+\Delta t) \, z ||_2^2] = 0.
     \end{align}
    Combining equations \eqref{eq:sm_bound}, \eqref{eq:bound_sm_1} and \eqref{eq:bound_sm_2} yields
    \begin{align}
        \lim_{\Delta t \to 0} \mathbb{E}\left[|| \mathbf{x}_t - \mathbf{x}_t^p ||_2^2\right] = 0.
    \end{align}}
    Additionally, since $g$ is bounded, we even have \begin{align} \label{eq:bound_k4}
      \mathbb{E}[|| \sqrt{\Delta t} \, g(t+\Delta t) \, z ||_2^2] \leq \mathbb{E}[|| \sqrt{\Delta t} \, K_4 \, z ||_2^2] = \mathbb{E}[|| K_4 \, z ||_2^2] \Delta t.
    \end{align}
    For $\mathcal{L}_\mathrm{single}(\theta)$, using the above bound \eqref{eq:bound_k4} and strong convergence of the Euler-Maruyama method, we can therefore derive the following bound \begin{align}
         \mathcal{L}_{\mathrm{single}}(\theta) &= \frac{1}{M} \mathbb{E} \left[ \sum_{m=0}^{M-1} \left[ \left| \left| \mathbf{x}_m + \Delta t \left[ - \mathcal{P}(\mathbf{x}_{m+1}) + s_\theta(\mathbf{x}_{m+1}, t_{m+1}) \right] \right| \right|^2 \right] \right] \\
         &\leq  \frac{1}{M} \, M \, 2 \, \left[ C \sqrt{\Delta t} +  \mathbb{E}_z[|| K_4 \, z ||_2^2] \Delta t \right],
    \end{align}
    which implies that $\mathcal{L}_\mathrm{single}(\theta) \to 0$ as $\Delta t \to 0$.
    \item["$\Rightarrow$":] {
        With the definitions from "$\Leftarrow$", let $\theta^*$ denote a minimizer such that 
        $\mathcal{L}_\mathrm{single}(\theta) \to 0$ as $\Delta t \to 0$, i.e., we assume there is a sequence $\Delta t_1, \Delta t_2,...$ with $\lim_{n\to \infty} \Delta t_n=0$ and a sequence $\theta_1, \theta_2,...$, where $\theta_n$ is a global minimizer to the objective $\mathcal{L}_\mathrm{single}^{\Delta t_n}(\theta)$ that depends on the step size $\Delta t_n$. If there is $\theta^*$ such that $s_{\theta^*}(\mathbf{x},t) \equiv \nabla_\mathbf{x} \log p_t(\mathbf{x})$, then $\mathcal{L}_\mathrm{single}^{\Delta t_n}(\theta_n) \leq \mathcal{L}_\mathrm{single}^{\Delta t_n}(\theta^*)$. From "$\Leftarrow$" we know that $\lim_{n \to \infty} \mathcal{L}_\mathrm{single}^{\Delta t_n}(\theta^*) = 0$ and therefore $\lim_{n \to \infty} \mathcal{L}_\mathrm{single}^{\Delta t_n}(\theta_n) = 0$. 
        This implies that each summand of $\mathcal{L}_\mathrm{single}(\theta)$ also goes to zero as $\Delta t \to 0$, i.e.,
        $
            \lim_{\Delta t \to 0} \mathbb{E}\left[|| \mathbf{x}_t - \mathbf{x}_t^p ||_2^2\right] = 0.
        $
        Again, with the triangle inequality for squared norms, we have that 
        \begin{align}
         \lim_{\Delta t \to 0} \mathbb{E}\left[ || \hat{\mathbf{x}}_t^{\Delta t} - \mathbf{x}_t^p ||_2^2 \right] \leq 
         2 \lim_{\Delta t \to 0} \mathbb{E}\left[|| \mathbf{x}_t - \hat{\mathbf{x}}_t^{\Delta t} ||_2^2\right] + 2 \lim_{\Delta t \to 0} \mathbb{E}\left[|| \mathbf{x}_t - \mathbf{x}_t^p ||_2^2\right].
        \end{align}
        By the strong convergence of the Euler-Maruyama method and $\theta = \theta^*$, we obtain
        \begin{align}
         \lim_{\Delta t \to 0} \mathbb{E}\left[ || \hat{\mathbf{x}}_t^{\Delta t} - \mathbf{x}_t^p ||_2^2 \right] = 0.
        \end{align}
        At the same time, for fixed $\Delta t > 0$, we can compute
        \begin{align}
            & \mathbb{E} \left[ || \hat{\mathbf{x}}_t^{\Delta t} - \mathbf{x}_t^p ||_2^2 \right] \\
            = \quad & \mathbb{E}_{\mathbf{x}_{t+\Delta t}, z \sim \mathcal{N}(0,I)} [ || \Delta t \, g(t+\Delta t)^2 \left[ \nabla_{\mathbf{x}} \log p_{t+\Delta t}(\mathbf{x}_{t+\Delta t})  \right. \\  &\qquad \qquad \qquad \qquad \qquad \qquad \left. - s_\theta(\mathbf{x}_{t+\Delta t}, t+\Delta t)\right] 
            + \sqrt{\Delta t} \, g(t+\Delta t) \, z ||_2^2 ] \\
            = \quad & \Delta t \, g(t+\Delta t) \, \mathbb{E}_{\mathbf{x}_{t+\Delta t}, z \sim \mathcal{N}(0,I)} [ || \Delta t^{3/2} \, g(t+\Delta t)^{5/2} \left[ \nabla_{\mathbf{x}} \log p_{t+\Delta t}(\mathbf{x}_{t+\Delta t}) \right. \\  &\qquad \qquad \qquad \qquad \qquad \qquad \left. - s_\theta(\mathbf{x}_{t+\Delta t}, t+\Delta t)\right] +  z ||_2^2 ]. \label{eq:norm_eulerm_prob}
        \end{align}
        For fixed $\mathbf{x}_{t+\Delta t}$, the distribution over $z \sim \mathcal{N}(0,I)$ in equation \eqref{eq:norm_eulerm_prob} correspond to a noncentral chi-squared distribution \citep[Chapter 13.4]{johnson1995continuous}, whose mean can be calculated as 
        \begin{align}
            \mathbb{E}_{z \sim \mathcal{N}(0,I)} \left[ \left| \left| \Delta t^{3/2} \, g(t+\Delta t)^{5/2} \left[ \nabla_{\mathbf{x}} \log p_{t+\Delta t}(\mathbf{x}_{t+\Delta t}) - s_\theta(\mathbf{x}_{t+\Delta t}, t+\Delta t)\right] + z \right|\right|_2^2 \right] \\
            = \left|\left| \Delta t^{3/2} \, g(t+\Delta t)^{5/2} \left[ \nabla_{\mathbf{x}} \log p_{t+\Delta t}(\mathbf{x}_{t+\Delta t}) - s_\theta(\mathbf{x}_{t+\Delta t}, t+\Delta t)\right]\right|\right|_2^2 + D.
        \end{align}
        For each $\Delta t > 0$, the above is minimized if and only if $\nabla_{\mathbf{x}} \log p_{t+\Delta t}(\mathbf{x}_{t+\Delta t}) = s_\theta(\mathbf{x}_{t+\Delta t}, t+\Delta t)$.
    }
\end{itemize}
\end{proof}

\subsection{Multi-step Loss and Maximum Likelihood Training} \label{app:multi-step-loss}

We now extend the 1-step formulation from above to multiple steps and discuss its relation to maximum likelihood training. For this, we consider our proposed probability flow ODE defined by \begin{align} \label{eq:app_probability_flow}
    d\mathbf{x} = \left[ \mathcal{P}(\mathbf{x}) + s_\theta(\mathbf{x}, t)  \right] dt
\end{align}
and for $t_i < t_j$ define $p_{t_{i}}^{t_{j}, \mathrm{ODE}}$ as the distribution obtained by sampling $\mathbf{x} \sim p_{t_j}$ and integrating the probability flow with network $s_\theta(\mathbf{x},t)$ equation \eqref{eq:probability_flow} backward in time until $t_{i}$. We can choose two arbitrary time points $t_i$ and $t_j$ with $t_i < t_j$ because we do not require fixed start and end times of the simulation.

The maximum likelihood training objective of the probability flow ODE \eqref{eq:probability_flow} can be written as maximizing
\begin{align} \label{eq:cnf_maximum_likelihood training_2}
    \mathbb{E}_{\mathbf{x}_{t_i} \sim p_{t_i}} [\log p_{t_i}^{p_{t_j},\,\mathrm{ODE}}(\mathbf{x}_{t_i})].
\end{align}
Our proposed multi-step loss is based on the sliding window size $S$, which is the length of the sub-trajectory that we aim to reconstruct with the probability flow ODE \eqref{eq:probability_flow}. The multi-step loss is defined as
\begin{align} \label{eq:app-multi-step-loss}
    \mathcal{L}_\text{multi}(\theta) := \frac{1}{M} \, \mathbb{E}_{\mathbf{x}_{0:M}} \left[ \sum_{m=0}^{M-S+1} \left[
      \left|\left| \mathbf{x}_{m:m+S-1} - \hat{\mathbf{x}}_{m:m+S-1}\right|\right|_2^2     
    \right] \right],
\end{align}
where we compute the expectation by sampling $\mathbf{x}_{0:m}$ from the training data set and $\hat{\mathbf{x}}_{i:i+S-1}$ is the predicted sub-trajectory that is defined recursively by \begin{align}
    \hat{\mathbf{x}}_{i+S} = \mathbf{x}_{i+S} \quad \text{ and } \quad 
    \hat{\mathbf{x}}_{i+S-1-j} = \hat{\mathbf{x}}_{i+S-j} + \Delta t \left[ - \mathcal{P}(\hat{\mathbf{x}}_{i+S-j}) + s_\theta(\hat{\mathbf{x}}_{i+S-j}, t_{i+S-j}) \right].
\end{align}

In the following, we show that the multi-step loss \eqref{eq:multi-step-loss} maximizes a variational lower bound for the maximum likelihood training objective \eqref{eq:cnf_maximum_likelihood training_2}.

\begin{theorem}
Consider a data set $\{ \mathbf{x}_{0:m}^{(n)} \}_{n=1}^N$ with trajectories sampled from SDE \eqref{eq:app_sde_physics}. Then the multi-step loss \eqref{eq:multi-step-loss} maximizes a variational lower bound for the maximum likelihood training objective of the probability flow ODE \eqref{eq:cnf_maximum_likelihood training_2} as $\Delta t \to 0$. 
\end{theorem}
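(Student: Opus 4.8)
The plan is to mirror the evidence-lower-bound (ELBO) derivation familiar from discrete diffusion models and then pass to the continuous limit, reusing the machinery of Theorem~\ref{thm:1-step-score}. First I would view the discretized generative process as a Markov chain running backward in time: starting from $\mathbf{x}_{t_j}$, each reverse transition is taken to have mean $\mathbf{x}_{m+1} + \Delta t[-\mathcal{P}(\mathbf{x}_{m+1}) + s_\theta(\mathbf{x}_{m+1},t_{m+1})]$, i.e.\ one Euler step of the probability flow ODE \eqref{eq:app_probability_flow}, perturbed by Gaussian noise of covariance of order $\Delta t$ so that the transition kernel $p_\theta(\mathbf{x}_m \mid \mathbf{x}_{m+1})$ is nondegenerate. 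Taking the forward Euler--Maruyama chain $q(\mathbf{x}_{m+1}\mid \mathbf{x}_m)$ of SDE \eqref{eq:app_sde_physics} as the variational posterior over the sub-trajectory $\mathbf{x}_{t_i+1:t_j}$, Jensen's inequality yields the standard bound $\log p_\theta(\mathbf{x}_{t_i}) \geq \mathbb{E}_q\!\left[\log\big(p_\theta(\mathbf{x}_{t_i:t_j})/q(\mathbf{x}_{t_i+1:t_j}\mid\mathbf{x}_{t_i})\big)\right]$ on the marginal log-likelihood that defines the maximum likelihood training objective.

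Next I would telescope this ELBO into a prior term at $t_j$, a reconstruction term at $t_i$, and a sum of per-step divergences $D_{\mathrm{KL}}\big(q(\mathbf{x}_m\mid\mathbf{x}_{m+1},\mathbf{x}_{t_i})\,\|\,p_\theta(\mathbf{x}_m\mid\mathbf{x}_{m+1})\big)$, exactly as in the DDPM decomposition. Since to leading order in $\Delta t$ both kernels are Gaussian with variance of order $\Delta t$, each KL term evaluates to $\tfrac{1}{2\sigma^2}\|\mathbf{x}_m - \big(\mathbf{x}_{m+1} + \Delta t[-\mathcal{P}(\mathbf{x}_{m+1}) + s_\theta(\mathbf{x}_{m+1},t_{m+1})]\big)\|_2^2$ plus terms independent of $\theta$, and the residual is precisely the single-step term of the 1-step loss. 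Summing these residuals across a window of length $S$ and using the strong convergence of the Euler scheme (Lemma~\ref{thm:euler_convergence}) to identify the recursively unrolled prediction $\hat{\mathbf{x}}_{m:m+S-1}$ with the concatenation of one-step reconstructions up to an error that vanishes as $\Delta t\to 0$, I would recover the per-window summand $\|\mathbf{x}_{m:m+S-1} - \hat{\mathbf{x}}_{m:m+S-1}\|_2^2$ of the multi-step loss \eqref{eq:app-multi-step-loss}. Hence maximizing the ELBO is, up to $\theta$-independent terms and a positive multiplicative constant, the same as minimizing $\mathcal{L}_\mathrm{multi}(\theta)$, and since the inequality direction is preserved, the multi-step loss maximizes a variational lower bound on the likelihood.

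The main obstacle is the deterministic character of the probability flow ODE: its genuine transition kernel is a Dirac mass, so the ELBO and its KL decomposition only make sense for the noised chain, and I must argue that the auxiliary noise can be introduced with variance $\to 0$ without changing the bound in the limit, i.e.\ that the neglected Gaussian normalization and cross terms are either $\theta$-independent or of higher order in $\Delta t$. The second delicate point is the passage between the telescoped sum of one-step transitions, each conditioned on data as in Theorem~\ref{thm:1-step-score}, and the recursively rolled-out trajectory $\hat{\mathbf{x}}$ that actually defines the multi-step loss: controlling the accumulated discretization and rollout error over a fixed window of $S$ steps, so that the two agree as $\Delta t\to 0$, is where the strong-convergence estimate of Lemma~\ref{thm:euler_convergence} and the triangle-inequality argument of Theorem~\ref{thm:1-step-score} do the heavy lifting. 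A cleaner continuous-time alternative, via Girsanov's theorem and the Feynman--Kac/Fokker--Planck identities relating the path measures of the forward and reverse processes, could be used to cross-check the leading-order constants, but the discrete ELBO route matches the discrete form of $\mathcal{L}_\mathrm{multi}$ most directly.
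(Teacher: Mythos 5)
Your route is genuinely different from the paper's: you telescope a DDPM-style ELBO into per-step KL divergences between the forward posterior and the noised reverse kernel, whereas the paper applies Jensen's inequality once per window-endpoint pair $(t_i,t_j)$, handles the degeneracy of the deterministic flow by convolving the ODE pushforward with a Gaussian $G_\epsilon$ (with $\epsilon$ independent of $\Delta t$, so the resulting $L_2$ objective is $\epsilon$-free), and obtains the term $\|\mathbf{x}_{t_k}-\mu_{t_k}^{\mathrm{ODE}}(\mathbf{x}_{t_j})\|_2^2$ in which the \emph{multi-step rollout appears directly} as the mean of the ODE integrated from $t_j$ back to $t_k$. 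The window summand of $\mathcal{L}_\mathrm{multi}$ then arises by summing these bounds over the endpoints $t_{i+1},\dots,t_{i+S}$, with the Euler discretization of $\mu^{\mathrm{ODE}}$ giving exactly the recursive definition of $\hat{\mathbf{x}}$.

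The genuine gap in your version is the step where you pass from the telescoped sum of per-step KL terms to the rolled-out loss. Each KL term $D_{\mathrm{KL}}\bigl(q(\mathbf{x}_m\mid\mathbf{x}_{m+1},\mathbf{x}_{t_i})\,\|\,p_\theta(\mathbf{x}_m\mid\mathbf{x}_{m+1})\bigr)$ is evaluated with $\mathbf{x}_{m+1}$ drawn from the forward process, i.e.\ it is teacher-forced: its residual is $\|\mathbf{x}_m-(\mathbf{x}_{m+1}+\Delta t[\cdot])\|_2^2$ with the \emph{ground-truth} $\mathbf{x}_{m+1}$, so the sum over the window is the 1-step loss restricted to that window, not $\|\mathbf{x}_{m:m+S-1}-\hat{\mathbf{x}}_{m:m+S-1}\|_2^2$ with the recursively generated $\hat{\mathbf{x}}$. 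The two differ by $\theta$-dependent terms driven by the accumulated deviation $\hat{\mathbf{x}}_{m+1}-\mathbf{x}_{m+1}$, which is of the same order in $\Delta t$ as the per-step residuals themselves; Lemma~\ref{thm:euler_convergence} controls the deviation of the \emph{forward} Euler--Maruyama scheme from the true SDE path, not the deviation of the learned rollout from the data, so it cannot be used to declare the two losses equivalent. Indeed, if that identification held up to negligible error, the multi-step loss would be equivalent to the 1-step loss and Theorem~\ref{thm:multi-step-variational} would add nothing over Theorem~\ref{thm:1-step-score} --- contradicting the empirically very different behavior of the two losses reported in the paper. Two smaller issues: for a nonlinear drift $\mathcal{P}$ the forward posterior $q(\mathbf{x}_m\mid\mathbf{x}_{m+1},\mathbf{x}_{t_i})$ is only approximately Gaussian (unlike the affine DDPM setting), and coupling the auxiliary noise variance to $\Delta t$ makes the $1/(2\sigma^2)$ prefactor blow up as $\Delta t\to 0$, so you must argue the arg-min is unaffected at each fixed $\Delta t$ rather than passing to the limit inside the bound. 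To repair the argument you would need either to condition the generative chain on its own samples (which breaks the clean KL decomposition) or to adopt the paper's device of bounding the likelihood of the whole window at once so that the rollout enters as the mean of the multi-step ODE solution.
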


Let $\mu^\mathrm{ODE}_{t_{i}}(\mathbf{x}_{t_{j}})$ denote the solution of the probability flow ODE \eqref{eq:probability_flow} integrated backward from time $t_j$ to $t_{i}$ with initial value $\mathbf{x}_{t_j}$.

For the maximum likelihood objective, we can derive a variational lower bound
\begin{align}
	    \mathbb{E}_{\mathbf{x}_{t_i}} \left[ \log p_{t_i}^{t_{j},\,\mathrm{ODE}}(\mathbf{x}_{t_i}) \right]
	    &= \mathbb{E}_{\mathbf{x}_{t_i}} \left[ \log \left( \mathbb{E}_{\mathbf{x}_{t_j}} \left[ p_{t_i}^{{t_j},\, \mathrm{ODE}}(\mathbf{x}_{t_i}| \mathbf{x}_{t_j}) \right] \right) \right] \\
	    &= \mathbb{E}_{\mathbf{x}_{t_i}} \left[ \log \left( \mathbb{E}_{\mathbf{x}_{t_j}|\mathbf{x}_{t_i}} \left[ \frac{p_{t_i}(\mathbf{x}_{t_i})}{p_{t_j}(\mathbf{x}_{t_j}|\mathbf{x_{t_i}})} p_{t_i}^{{t_j},\,\mathrm{ODE}}(\mathbf{x}_{t_i}| \mathbf{x}_{t_j}) \right] \right) \right] \\
	    &\geq \mathbb{E}_{\mathbf{x}_{t_i}} \mathbb{E}_{\mathbf{x}_{t_j}|\mathbf{x}_{t_i}} \left[
	    \log \left( \frac{p_{t_j}(\mathbf{x}_{t_j})}{p_{t_j}(\mathbf{x}_{t_j}|\mathbf{x_{t_i}})} p_{t_i}^{{t_j},\,\mathrm{ODE}}(\mathbf{x}_{t_i}| \mathbf{x}_{t_j}) \right)  \right] \\
	    &= \mathbb{E}_{\mathbf{x}_{t_i}} \mathbb{E}_{\mathbf{x}_{t_j}|\mathbf{x}_{t_i}} \left[
	        \log \left( \frac{p_{t_j}(\mathbf{x}_{t_j})}{p_{t_j}(\mathbf{x}_{t_j}|\mathbf{x_{t_i}})} \right) + \log \left( p_{t_i}^{{t_j}\,,\mathrm{ODE}}(\mathbf{x}_{t_i}| \mathbf{x}_{t_j}) \right) \right],
	\end{align} 
	where the inequality is due to Jensen's inequality. Since only $p_{t_i}^{{t_j}\,,\mathrm{ODE}}(\mathbf{x}_{t_i}| \mathbf{x}_{t_j})$ depends on $\theta$, this is the same as maximizing \begin{align} \label{eq:likelihood_training_lower_bound}
	    \mathbb{E}_{\mathbf{x}_{t_i}} \mathbb{E}_{\mathbf{x}_{t_j}|\mathbf{x}_{t_i}} \left[ \log \left( p_{t_i}^{{t_j}\,,\mathrm{ODE}}(\mathbf{x}_{t_i}| \mathbf{x}_{t_j}) \right) \right].
	\end{align}

 The probability flow ODE is likelihood-free, which makes it challenging to optimize. Therefore, we relax the objective by perturbing the ODE distributions by convolving them with a Gaussian kernel $G_\epsilon$ with small $\epsilon > 0$, see, e.g.,  \citet[Gaussian ODE filtering]{Kersting20}. 
 This allows us to model the conditional distribution $p_{t_i}^{{t_j}\, , \mathrm{ODE}}|\, \mathbf{x}_{t_j}$ as a Gaussian distribution with mean $\mu_{t_i}^{t_j,\, \mathrm{ODE}}(\mathbf{x}_{t_j})$ and variance $\sigma^2 = \epsilon$. Then maximizing \eqref{eq:likelihood_training_lower_bound} reduces to matching the mean of the distribution, i.e., minimizing 
\begin{align} \label{eq:likelihood_training_lower_bound_2}
	    \mathbb{E}_{\mathbf{x}_{t_i}} \mathbb{E}_{\mathbf{x}_{t_j}|\mathbf{x}_{t_i}} \left[ ||\mathbf{x}_{t_i} - \mu_{t_i}^{\mathrm{ODE}}(\mathbf{x}_{t_j})||_2^2 \right]
	\end{align}
independent of $\epsilon > 0$.
Since this is true for any time step $t_j > t_i$ and corresponding simulation state $\mathbf{x}_{t_j}$ given $\mathbf{x}_{t_i}$, we can pick the pairs $(\mathbf{x}_{t_i}, \mathbf{x}_{t_{i+1}})$, $(\mathbf{x}_{t_i}, \mathbf{x}_{t_{i+2}})$, $(\mathbf{x}_{t_i}, \mathbf{x}_{t_{i+3}})$ and so on. 
Then, we can optimize them jointly by considering the sum of the individual objectives up to a maximum sliding window size
\begin{align}
    \mathbb{E}_{\mathbf{x}_{i:j}} \left[ \sum_{k=i}^{j-1} || \mathbf{x}_{t_k} - \mu_{t_k}^{\mathrm{ODE}}(\mathbf{x}_{t_j})||_2^2 \right].
\end{align}

As $\Delta t \to 0$, we compute the terms $\mu_{t_k}^{\mathrm{ODE}}(\mathbf{x}_{t_j})$ on a single trajectory starting at $\mathbf{x}_{t_i}$ with sliding window $S$ covering the trajectory until $\mathbf{x}_{t_j}$ via the Euler method, i.e., we can define recursively 
\begin{align}
    \hat{\mathbf{x}}_{i+S} = \mathbf{x}_{i+S} \quad \text{ and } \quad 
    \hat{\mathbf{x}}_{i+S-1-j} = \hat{\mathbf{x}}_{i+S-j} + \Delta t \left[ - \mathcal{P}(\mathbf{x}_{i+S-j}) + s_\theta(\mathbf{x}_{i+S-j}, t_{i+S-j}) \right].
\end{align}
By varying the starting points of the sliding window $\mathbf{x}_{t_i}$, this yields our proposed multi-step loss $\mathcal{L}_\mathrm{multi}(\theta)$.

\subsection{Denoising Score Matching for Deterministic Simulations}
\label{app:denoising_score_matching}

\begin{figure*}
    \centering
    \includegraphics[width=.9\textwidth]{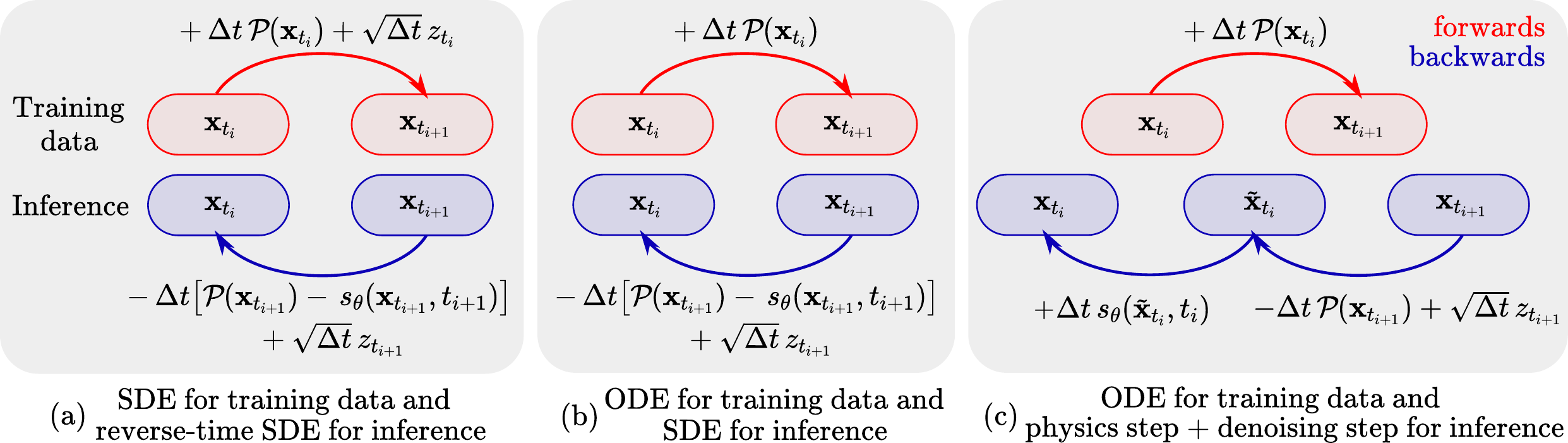}
    \caption{Variants of training and inference for different physical systems. (a) shows the SDE and reverse-time SDE setup with the Euler-Maruyama discretization when the system is modeled by an SDE. The diffusion term $g(t)$ is absorbed in the Gaussian random variable $z_t \sim \mathcal{N}(0, g(t)^2I)$ and network $s_\theta(\mathbf{x}, t)$. In (b), we assume that the temporal evolution of the training data is deterministic, i.e., we model the physical system without the diffusion term. However, for inference, we consider the reverse-time SDE of the same form as in (a), where the diffusion coefficient $g(t)$ is chosen as a hyperparameter that depends on the noise scale added to the data. Then, in (c), we split the Euler step for the backward direction into a physics-only update, adding the Gaussian noise $z$ and a denoising step by $s_\theta(\mathbf{x},t)$.}
    \label{fig:smdp_variants}
\end{figure*}

So far, we have considered physical systems that can be modeled by an SDE, i.e., equation \eqref{eq:app_sde_physics}. 
While this problem setup is suitable for many scenarios, we would also like to apply a similar methodology when the system is deterministic, i.e., when we can write the problem as an ordinary stochastic equation \begin{align} \label{eq:ode_physical_system}
    d\mathbf{x} = \mathcal{P}(\mathbf{x}) dt.
\end{align} 
In the case of chaotic dynamical systems, this still represents a hard inverse problem, especially when information is lost due to noise added to the trajectories after their generation.

The training setup based on modeling the physics system with an SDE is shown in figure \ref{fig:smdp_variants}a. Figure \ref{fig:smdp_variants}b and \ref{fig:smdp_variants}c illustrate two additional data setup and inference variants for deterministic physical systems modeled by the ODE \eqref{eq:ode_physical_system}.  
While for the experiments in sections 3.1 and 3.2 in the main paper, our setup resembles (a), for the buoyancy-driven flow in section 3.3 and the forced isotropic turbulence in section 3.4 in the main paper, we consider (c) as the system is deterministic.

For this variant, the update by $-\mathcal{P}(\mathbf{x})$ and $s_\theta(\mathbf{x},t)$  is separated into two steps. The temporal evolution from $t_{i+1}$ to $t_i$ is then defined entirely by physics. We apply an additive noise to the system and the update step by $s_\theta(\mathbf{x},t)$, which can be interpreted as denoising for a now slightly perturbed state $\tilde{\mathbf{x}}_{t_i}$. In this case, we show that the network $s_\theta(\mathbf{x},t)$ still learns the correct score $\nabla_\mathbf{x} \log p_t(\mathbf{x})$ during training using denoising score matching.  
We compare the performance of variants (b) and (c) for the buoyancy-drive flow in appendix \ref{sec:buoyancy_flow_details}.

When separating physics and score updates, we calculate the updates as
\begin{align}
     \hat{\mathbf{x}}_{t_i} &= \mathbf{x}_{t_{i+1}} - \Delta t \, \mathcal{P} (\mathbf{x}_{t_{i+1}})  \label{eq:physics_only_update} \\
     \hat{\mathbf{x}}_{t_i}^\mathrm{noise} &= \hat{\mathbf{x}}_{t_i} + \sqrt{\Delta t} \, g(t_i) \, z_{t_i} \\
     \mathbf{x}_{t_i} &= \hat{\mathbf{x}}_{t_{i}}^\mathrm{noise} + \Delta t\, g^2(t_i) \, s_\theta(\hat{\mathbf{x}}_{t_{i}}^\mathrm{noise}, t_i),
\end{align}
where $z_{t_i} \sim \mathcal{N}(0, I)$. If the physics system is deterministic and $\Delta t$ is small enough, then we can approximate $\mathbf{x}_{t_i} \approx \hat{\mathbf{x}}_{t_i}$ and for the moment, we assume that we can write \begin{align} \label{eq:corrupted_sample_time_approx}
    \hat{\mathbf{x}}_{t_i}^\mathrm{noise} = \mathbf{x}_{t_i} + \sqrt{\Delta t} \, g(t_i) \, z_{t_i}.
\end{align} 

In this case, we can rewrite the 1-step loss $\mathcal{L}_\mathrm{single}(\theta)$ from \eqref{eq:app-1-step-loss} to obtain the denoising score matching loss \begin{align}
    \mathcal{L}_\mathrm{DSM}(\theta) := \mathbb{E}_{(\mathbf{x}_{t_i}, \mathbf{x}_{t_{i+1}})} \left[ || \mathbf{x}_{t_i} -  \hat{\mathbf{x}}_{t_{i}}^\mathrm{noise} - \Delta t \, g^2(t_i) \, s_\theta(\hat{\mathbf{x}}_{t_{i}}^\mathrm{noise}, t_i)||_2^2 \right],
\end{align}
which is the same as minimizing \begin{align} \label{eq:denoising_loss}
    \mathbb{E}_{(\mathbf{x}_{t_i}, \mathbf{x}_{t_{i+1}})} \left[ || s_\theta(\hat{\mathbf{x}}_{t_{i}}^\mathrm{noise}, t_i) - \frac{1}{\Delta t \, g^2(t_i)}   (\mathbf{x}_{t_i} -  \hat{\mathbf{x}}_{t_{i}}^\mathrm{noise}) ||_2^2 \right].
\end{align}

Now, the idea presented in \citet{vincentconn2011} is that for score matching, we can consider a joint distribution $p_{t_i}(\mathbf{x}_{t_i}, \tilde{\mathbf{x}}_{t_i})$ of sample $\mathbf{x}_{t_i}$ and corrupted sample $\tilde{\mathbf{x}}_{t_i}$. Using Bayes' rule, we can write $p_{t_i}(\mathbf{x}_{t_i}, \tilde{\mathbf{x}}_{t_i}) = p_\sigma(\tilde{\mathbf{x}}_{t_i}|\, \mathbf{x}_{t_i})p_{t_i}(\mathbf{x}_{t_i})$. The conditional distribution $p_\sigma(\cdot|\, \mathbf{x}_{t_i})$ for the corrupted sample is then modeled by a Gaussian with standard deviation $\sigma = \sqrt{\Delta t}\, g(t_i)$, i.e., we can write $\tilde{\mathbf{x}} = \mathbf{x} + \sqrt{\Delta t} \, g(t_i) \, z$ for $z \sim \mathcal{N}(0, I)$ similar to equation \eqref{eq:corrupted_sample_time_approx}. Moreover, we can define the distribution of corrupted data $q_\sigma$ as \begin{align}
    q_\sigma(\tilde{\mathbf{x}}) = \int p_\sigma (\tilde{\mathbf{x}}|\mathbf{x}) p_{t_i}(\mathbf{x}) d\mathbf{x}.
\end{align}

If $\sigma$ is small, then $q_\sigma \approx p_{t_i}$ and $\mathrm{KL}(q_\sigma ||\, p_{t_i}) \to 0$ as $\sigma \to 0$. Importantly, in this case, we can directly compute the score for $p_\sigma(\cdot|\,\mathbf{x})$ as \begin{align} \label{eq:gradient_score_gaussian}
    \nabla_{\tilde{\mathbf{x}}} \log p_\sigma(\tilde{\mathbf{x}}|\,\mathbf{x}) = \frac{1}{\sigma^2} (\mathbf{x}- \tilde{\mathbf{x}}).
\end{align}
Moreover, the theorem proven by \citet{vincentconn2011} means that we can use the score of the conditional distribution $p_\sigma(\cdot | \mathbf{x})$ to train $s_\theta(\mathbf{x}, t)$ to learn the score of $q_\sigma(\mathbf{x})$, i.e.
\begin{align} 
    &\underset{\theta}{\mathrm{arg}\,\mathrm{min}} \: \mathbb{E}_{\tilde{\mathbf{x}} \sim q_\theta} \left[ || s_\theta(\mathbf{x}, t_i) - \nabla_{\tilde{\mathbf{x}}} \log q_\sigma(\tilde{\mathbf{x}}) ||_2^2 \right] \\ \label{eq:vincent_score_equivalance} = \quad &\underset{\theta}{\mathrm{arg}\,\mathrm{min}} \: \mathbb{E}_{\mathbf{x} \sim p_{t_i}, \tilde{\mathbf{x}} \sim p_\sigma(\cdot|\,\mathbf{x})} \left[ || s_\theta(\mathbf{x}, t_i) - \nabla_{\tilde{\mathbf{x}}} \log p_\sigma(\tilde{\mathbf{x}}|\, \mathbf{x}) ||_2^2 \right].
\end{align}
By combining \eqref{eq:vincent_score_equivalance} and \eqref{eq:gradient_score_gaussian}, this exactly equals the denoising loss $\mathcal{L}_\mathrm{DSM}(\theta)$ in \eqref{eq:denoising_loss}. As $q_\sigma \approx p_{t_i}$, we also obtain that $\nabla_\mathbf{x} \log q_\sigma (\mathbf{x}) \approx \nabla_\mathbf{x} \log p_{t_i}(\mathbf{x})$, so the network $s_\theta(\mathbf{x}, t_i)$ approximately learns the correct score for $p_{t_i}$.

We have assumed \eqref{eq:corrupted_sample_time_approx} that the only corruption for $\hat{\mathbf{x}}_{t_i}^\mathrm{noise}$ is the Gaussian noise. This is not true, as we have \begin{align}
    \hat{\mathbf{x}}_{t_i}^\mathrm{noise} = \mathbf{x}_{t_i} + \sqrt{\Delta t} \, g(t_i) \, z_{t_i} + (\mathbf{x}_{t_{i+1}} - \Delta t \, \mathcal{P} (\mathbf{x}_{t_{i+1}}) - \mathbf{x}_{t_i}),
\end{align} 
so there is an additional source of corruption, which comes from the numerical errors due to the term $\mathbf{x}_{t_{i+1}} - \Delta t \, \mathcal{P} (\mathbf{x}_{t_{i+1}}) - \mathbf{x}_{t_i}$.
The conditional distribution $p_\sigma(\cdot|\, \mathbf{x})$ is only approximately Gaussian. Ideally, the effects of numerical errors are dominated by the Gaussian random noise. However, even small errors may accumulate for longer sequences of inference steps. To account for this, we argue that the multi-step loss $\mathcal{L}_\mathrm{multi}(\theta)$ should be used. During training, with the separation of physics update and denoising, the simulation state is first progressed from time $t_{i+1}$ to time $t_i$ using the reverse physics solver. This only yields a perturbed version of the simulation at time $t_i$ due to numerical inaccuracies. Then a small Gaussian noise is added and, via the denoising network $s_\theta(\mathbf{x},t)$, the simulation state is projected back to the distribution $p_{t_i}$, which should also resolve the numerical errors. This is iterated, as discussed in section 2 in the main paper, depending on the sliding window size and location. 

\newpage

\section{Architectures} \label{sec:archictectures}

\paragraph{ResNet} We employ a simple ResNet-like architecture, which is used for the score function $s_\theta(\mathbf{x},t)$ and the convolutional neural network baseline (ResNet) for the stochastic heat equation in section 3.2 as well as in section 3.4 again for the score $s_\theta(\mathbf{x},t)$. 

For experiments with periodic boundary conditions, we apply periodic padding with length 16, i.e., if the underlying 2-dimensional data dimensions are $N \times N$, the dimensions after the periodic padding are $(N+16) \times (N+16)$. We implement the periodic padding by tiling the input three times in $x$- and $y$-direction and then cropping to the correct sizes. The time $t$ is concatenated as an additional constant channel to the 2-dimensional input data when this architecture represents the score $s_\theta(\mathbf{x},t)$. 

The encoder part of our network begins with a single 2D-convolution encoding layer with $32$ filters, kernel size 4, and no activation function. This is followed by four consecutive residual blocks, each consisting of 2D-convolution, LeakyReLU, 2D-convolution, and Leaky ReLU. All 2D convolutions have 32 filters with kernel size four and stride 1. 
The encoder part ends with a single 2D convolution with one filter, kernel size 1, and no activation. 
Then, in the decoder part, we begin with a transposed 2D convolution, 32 filters, and kernel size 4. Afterward, there are four consecutive residual blocks, analogous to the residual encoder blocks, but with the 2D convolution replaced by a transposed 2D convolution. Finally, there is a final 2D convolution with one filter and kernel size of 5. Parameter counts of this model and other models are given in table \ref{tab:summary_architectures}.

\paragraph{UNet} We use the UNet architecture with spatial dropout as described in \citep{muellerbnn22}, Appendix A.1, for the Bayesian neural network baseline of the stochastic heat equation experiment in section 3.2. The dropout rate is set to $0.25$. We do not include batch normalization and apply the same periodic padding as done for our ResNet architecture. 

\paragraph{FNO} The FNO-2D architecture introduced in \citep{zongyi2021fourier} with $k_{\mathrm{max},j} = 12$ Fourier modes per channel is used as a baseline for the stochastic heat equation experiment in section 3.2 and the learned physics surrogate model in section 3.4.

\paragraph{Dil-ResNet} The Dil-ResNet architecture is described in \citep{stachenfeld2021learned}, Appendix A.  
This architecture represents the score $s_\theta(\mathbf{x}, t)$ in the buoyancy-driven flow with obstacles experiment in section 3.3. We concatenate the constant time channel analogously to the ResNet architecture. Additionally, positional information is added to the network input by encoding the $x$-position and $y$-position inside the domain in two separate channels.

\begin{table}[hb]
    \centering
    \begin{tabular}{ll}
        Architecture &  Parameters \\ 
        \hline \vspace{-.25cm} \\
        ResNet & 330 754 \\ 
        UNet & 706 035 \\
        Dil-ResNet & 336 915 \\
        FNO & 465 377 \\
    \end{tabular}
    \caption{Summary of architectures.}
    \label{tab:summary_architectures}
\end{table}

\newpage

\section{1D Toy SDE}\label{app:1d_toy_sde}

For the 1D toy SDE discussed in section 3.1, we consider the SDE given by \begin{align} \label{eq:app_sde_1d}
    dx = - \left[ \lambda_1 \cdot \mathrm{sign}(x) x^2 \right] dt + \lambda_2 dw,
\end{align}
with $\lambda_1 = 7$ and $\lambda_2 = 0.03$. The corresponding reverse-time SDE is \begin{align}
    dx = - \left[ \lambda_1 \cdot \mathrm{sign}(x) x^2 - \lambda_2^2 \cdot \nabla_x \log p_t(x) \right] dt + \lambda_2 dw.
\end{align} 

Throughout this experiment, $p_0$ is a categorical distribution, where we draw either $1$ or $-1$ with the same probability. In figure \ref{fig:1d_process_illustration}, we show trajectories from this SDE simulated with the Euler-Maruyama method. Trajectories start at $1$ or $-1$ and approach $0$ as $t$ increases.

\paragraph{Neural network architecture} We employ a neural network $s_\theta(x,t)$ parameterized by $\theta$ to approximate the score via the 1-step loss, the multi-step loss, implicit score matching \citep[ISM]{hyvarinenscore2005} and sliced score matching with variance reduction \citep[SSM-VR]{songsliced2019}. In all cases, the neural network is a simple multilayer perceptron with elu activations and five hidden layers with $30$, $30$, $25$, $20$, and then $10$ neurons for the last hidden layer.

We use the Adam optimizer with standard hyperparameters as described in the original paper \citep{kingmaB14}.
The learning rate, batch size, and the number of epochs depend on the data set size (100\% with 2 500 trajectories, 10\%, or 1\%) and are chosen to ensure convergence of the training loss.

\begin{figure}[ht]
    \centering
    \begin{subfigure}[b]{.4\textwidth}
         \centering
         \includegraphics[width=\textwidth]{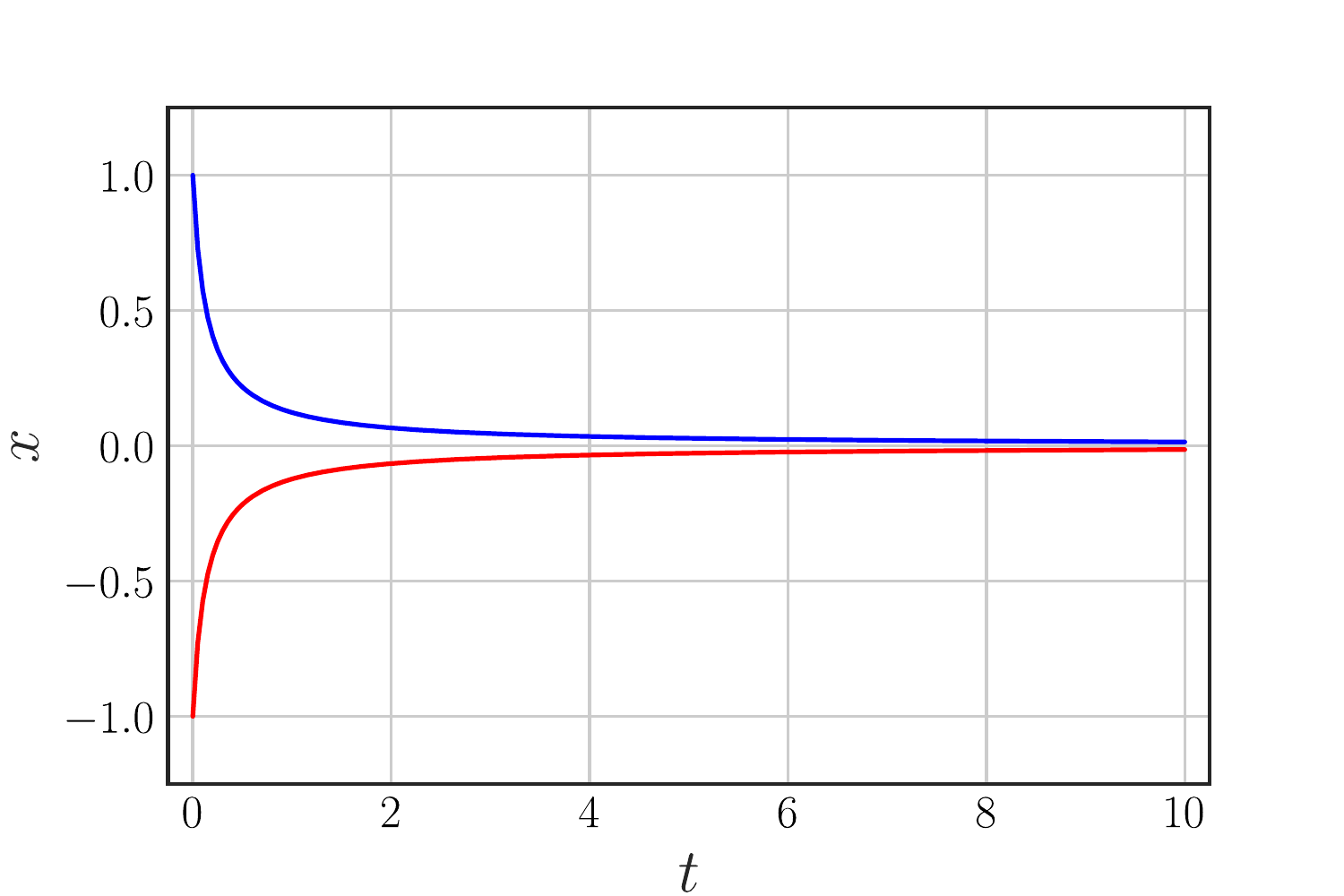}     
         \caption{$\lambda_2 = 0$}
         \label{fig:1d_process_ode}
     \end{subfigure}
     \begin{subfigure}[b]{.4\textwidth}
         \centering
         \includegraphics[width=\textwidth]{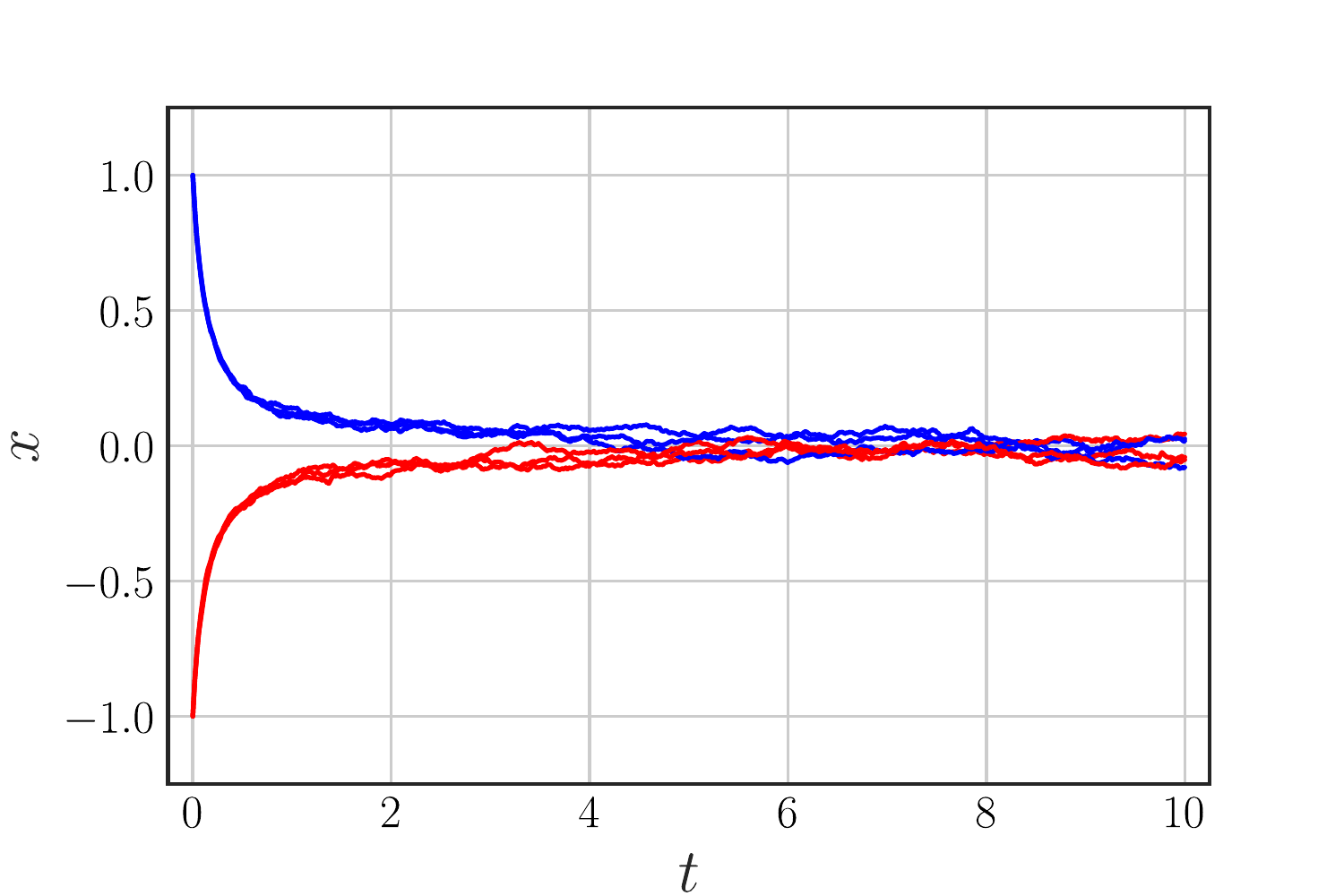}     
         \caption{$\lambda_2 = 0.03$}
         \label{fig:1d_process_sde}
     \end{subfigure}
    \caption{Trajectories from SDE \eqref{eq:app_sde_1d} with $\lambda_2 = 0$ (a) and $\lambda_2 = 0.03$ (b).}
    \label{fig:1d_process_illustration}
\end{figure}

\paragraph{Training - 1-step loss}
For the 1-step loss and all data set sizes, we train for 250 epochs with a learning rate of 10e-3 and batch size of $256$. In the first phase, we only keep every 5th point of a trajectory and discard the rest. Then, we again train for 250 epochs with the same batch size and a learning rate of 10e-4 but keep all points. Finally, we finetune the network with 750 training epochs and a learning rate of 10e-5. 
\paragraph{Training - multi-step loss}
For the multi-step loss and 100\% of the data set, we first train with the 1-step loss, which resembles a sliding window size of 2. We initially train for 1 000 epochs with a batch size of 512 and a learning rate of 10e-3, where we keep only every 5th point on a trajectory and discard the rest. Then, with a decreased learning rate of 10e-4, we begin training with a sliding window size of $S=2$ and increment it every 1 000 epochs by one until $S_\mathrm{max} = 10$. In this phase, we train on all points without any removals.  
\paragraph{Training - ISM}
For ISM, we compute the partial derivative $\partial s_\theta(\mathbf{x})_i/ \partial \mathbf{x}_i$ using reverse-mode automatic differentiation in JAX (jax.jacrev). For 100\% and 10\% of the data set, we train for 2 000 epochs with a learning rate of 10e-3 and batch size of 10 000. Then we train for an additional 2 000 epochs with a learning rate 10e-4. For 1\%, increase the number of epochs to 20 000. 
\paragraph{Training - SSM-VR}
For sliced score matching with variance reduction \citep[SSM-VR]{songsliced2019}, we use the same training setup as for ISM.

\paragraph{Comparison}

We directly compare the learned score for the reverse-time SDE trajectories and the probability flow trajectories between ISM and the multi-step loss in figure \ref{fig:1d_process_comparison} trained on the full data set. The learned score of ISM and the multi-step loss in figure \ref{fig:ism_score_field} and figure \ref{fig:smdp_score_field} are visually very similar, showing that our method and loss learn the correct score. Overall, after finetuning both ISM and the multi-step loss, the trajectories of the multi-step loss are more accurate compared to ISM. For example, in figure \ref{fig:ism_process_ode}, a trajectory explodes to negative infinity. Also, trajectories from the multi-step loss end in either $-1$ or $1$, while ISM trajectories are attenuated and do not fully reach $-1$ or $1$ exactly, particularly for the probability flow ODE.

\begin{figure}
    \centering
    \begin{subfigure}[b]{.4\textwidth}
         \centering
         \includegraphics[width=\textwidth]{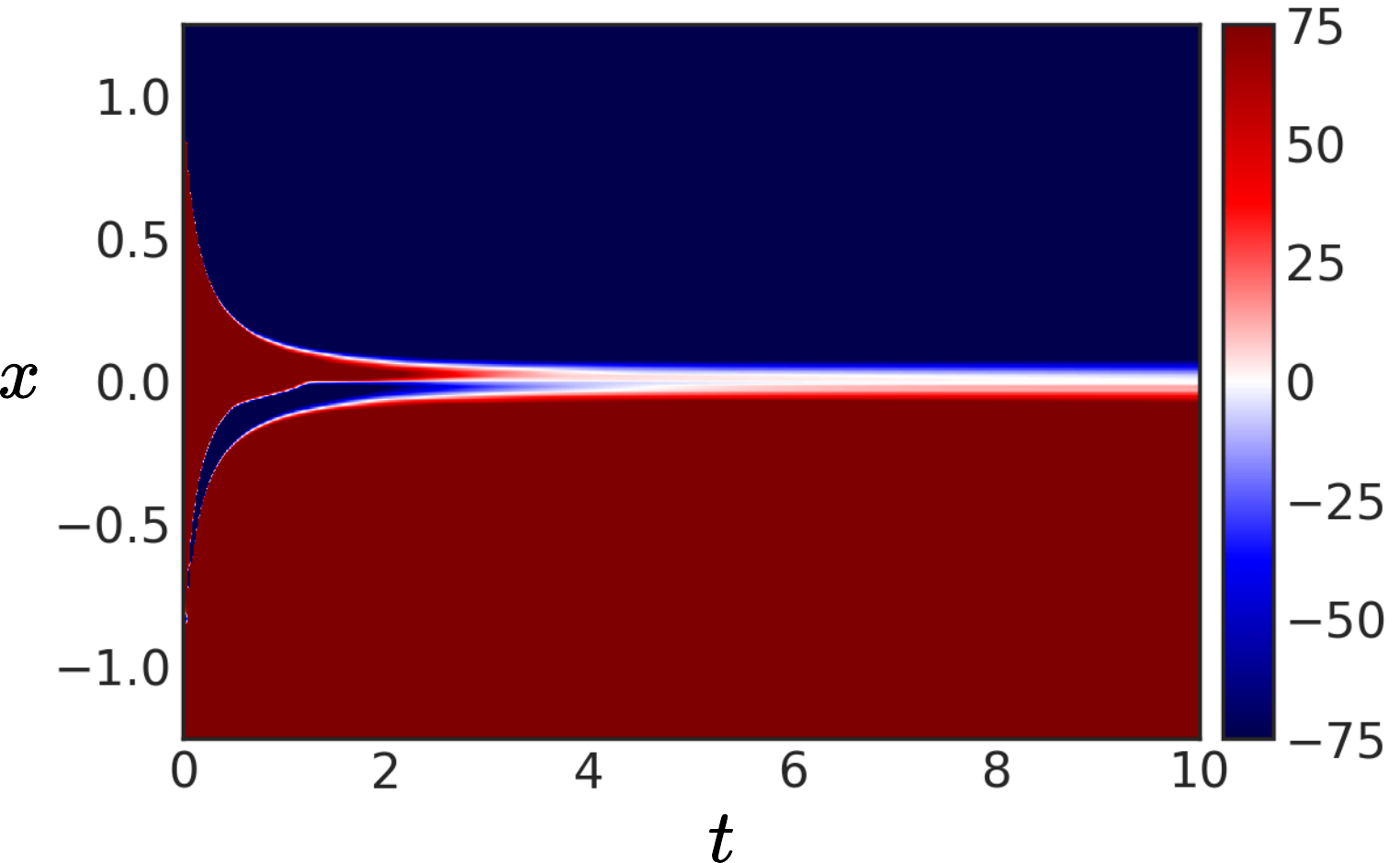}     
         \caption{ISM learned score.}
         \label{fig:ism_score_field}
     \end{subfigure}
     \begin{subfigure}[b]{.4\textwidth}
         \centering
         \includegraphics[width=\textwidth]{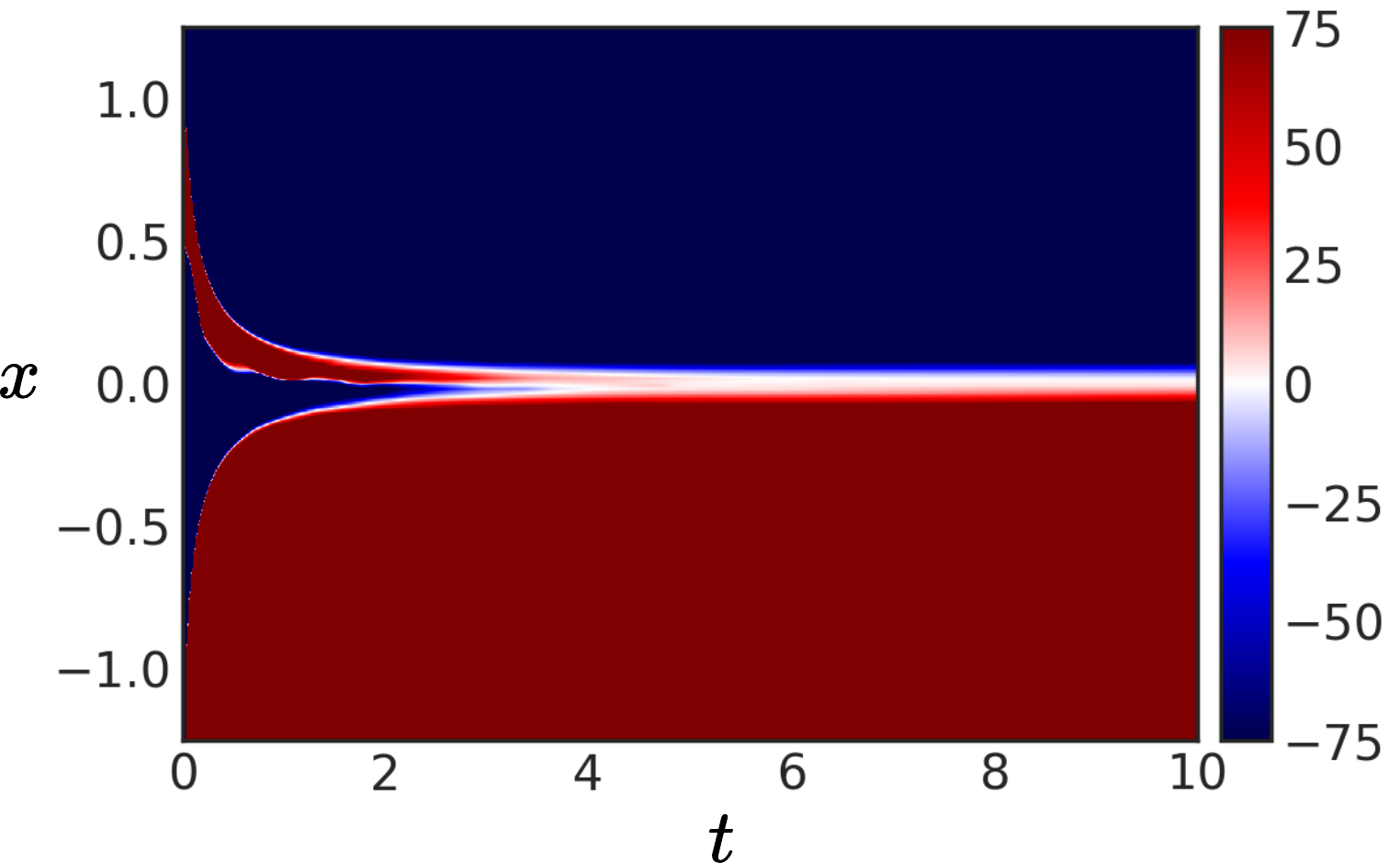}     
         \caption{Multi-step learned score.}
         \label{fig:smdp_score_field}
     \end{subfigure} \\
     
     \begin{subfigure}[b]{.4\textwidth}
         \centering
         \includegraphics[width=\textwidth]{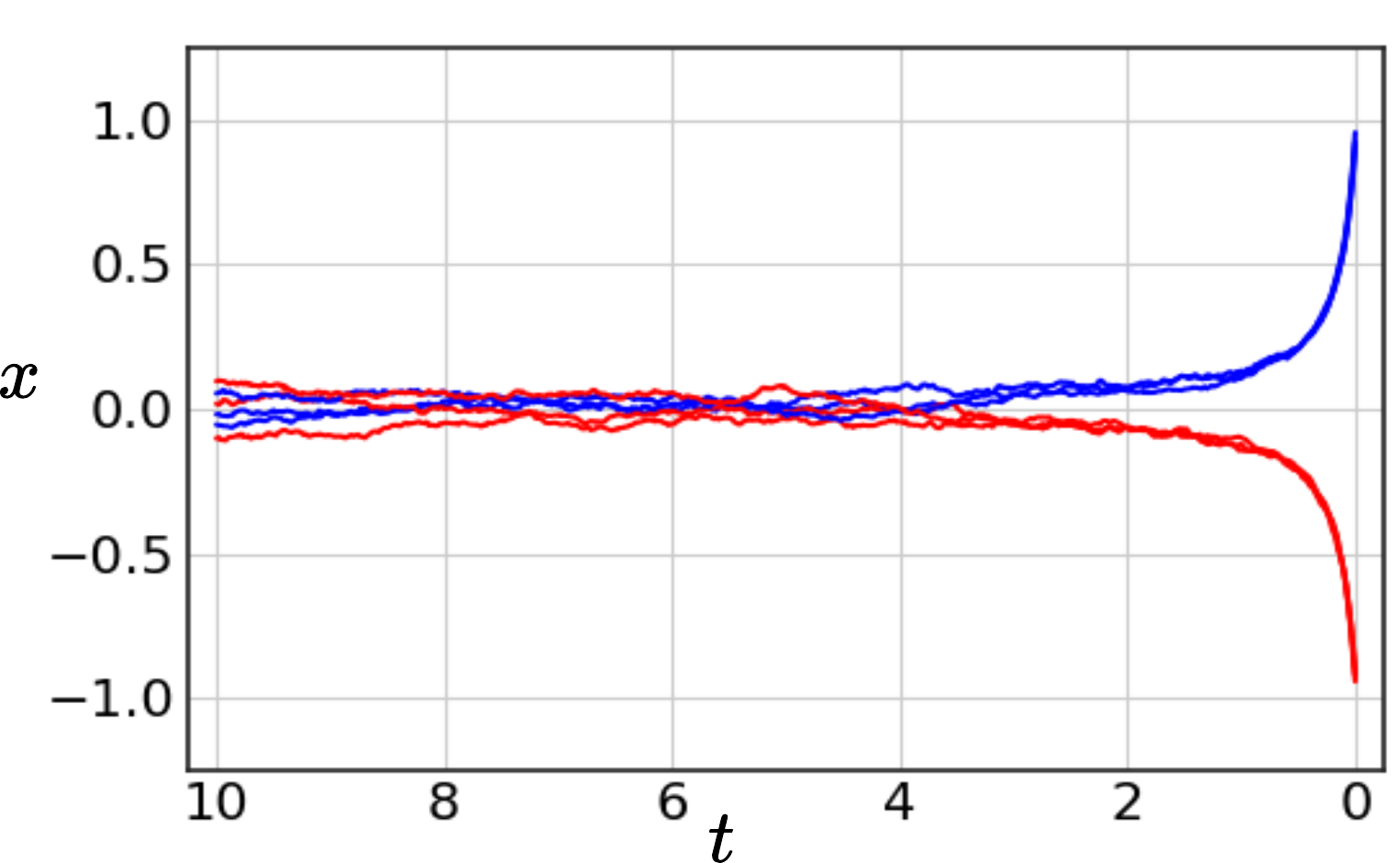}     
         \caption{ISM reverse-time SDE trajectories.}
         \label{fig:ism_process_sde}
     \end{subfigure}
     \begin{subfigure}[b]{.4\textwidth}
         \centering
         \includegraphics[width=\textwidth]{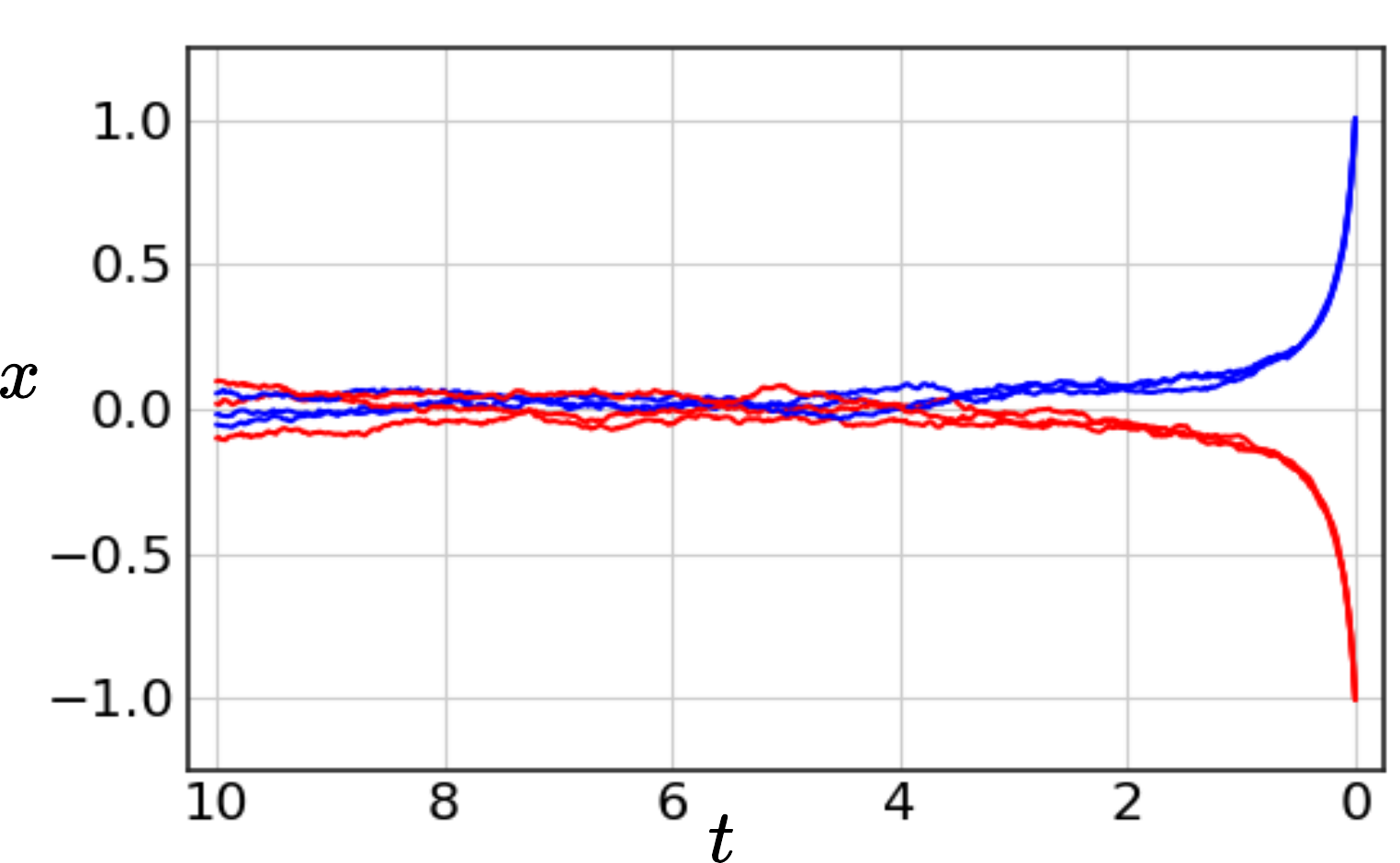}     
         \caption{Multi-step reverse-time SDE trajectories.}
         \label{fig:smdp_process_sde}
     \end{subfigure} \\
     
     \begin{subfigure}[b]{.4\textwidth}
         \centering
         \includegraphics[width=\textwidth]{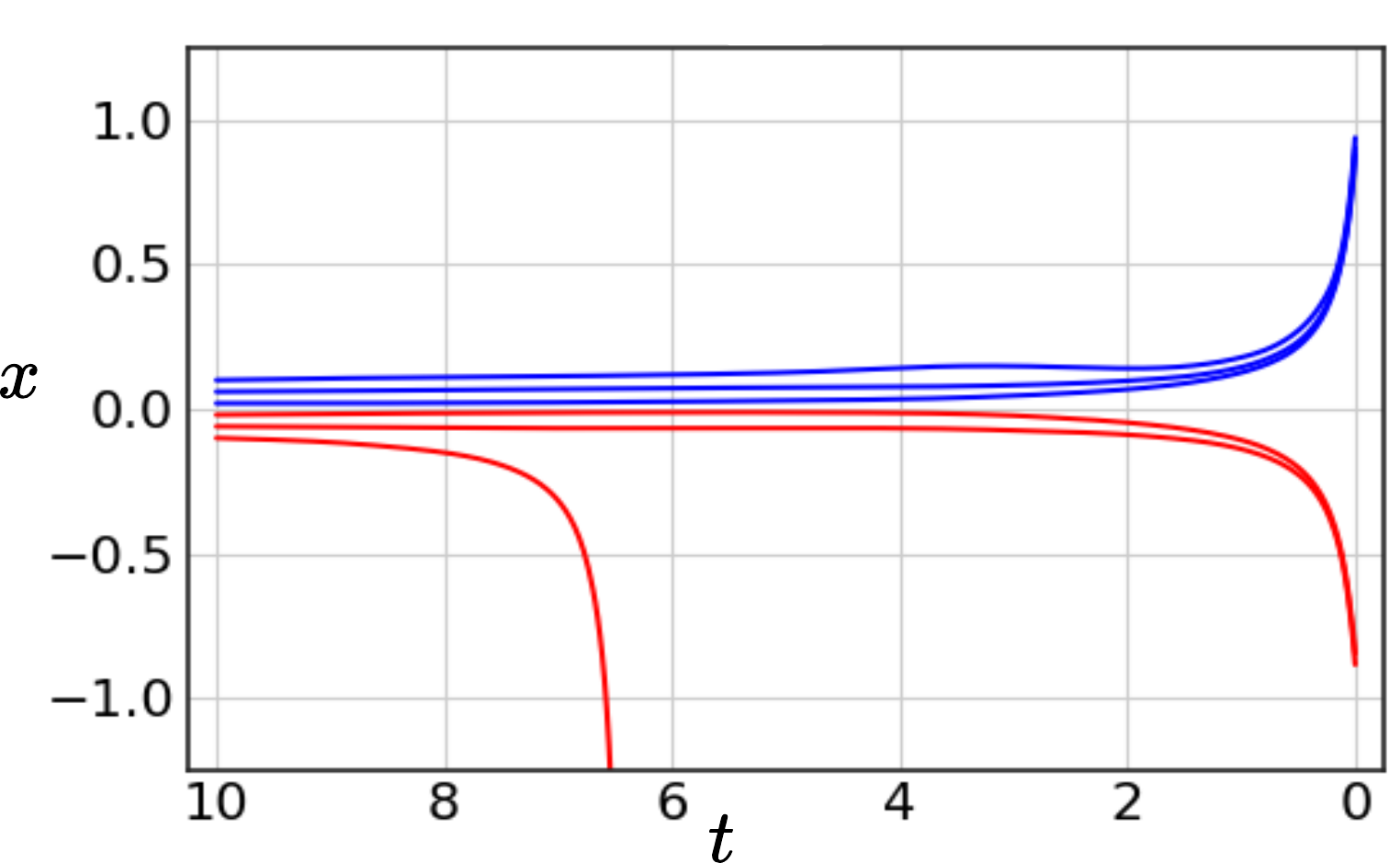}     
         \caption{ISM probability flow trajectories.}
         \label{fig:ism_process_ode}
     \end{subfigure}
     \begin{subfigure}[b]{.4\textwidth}
         \centering
         \includegraphics[width=\textwidth]{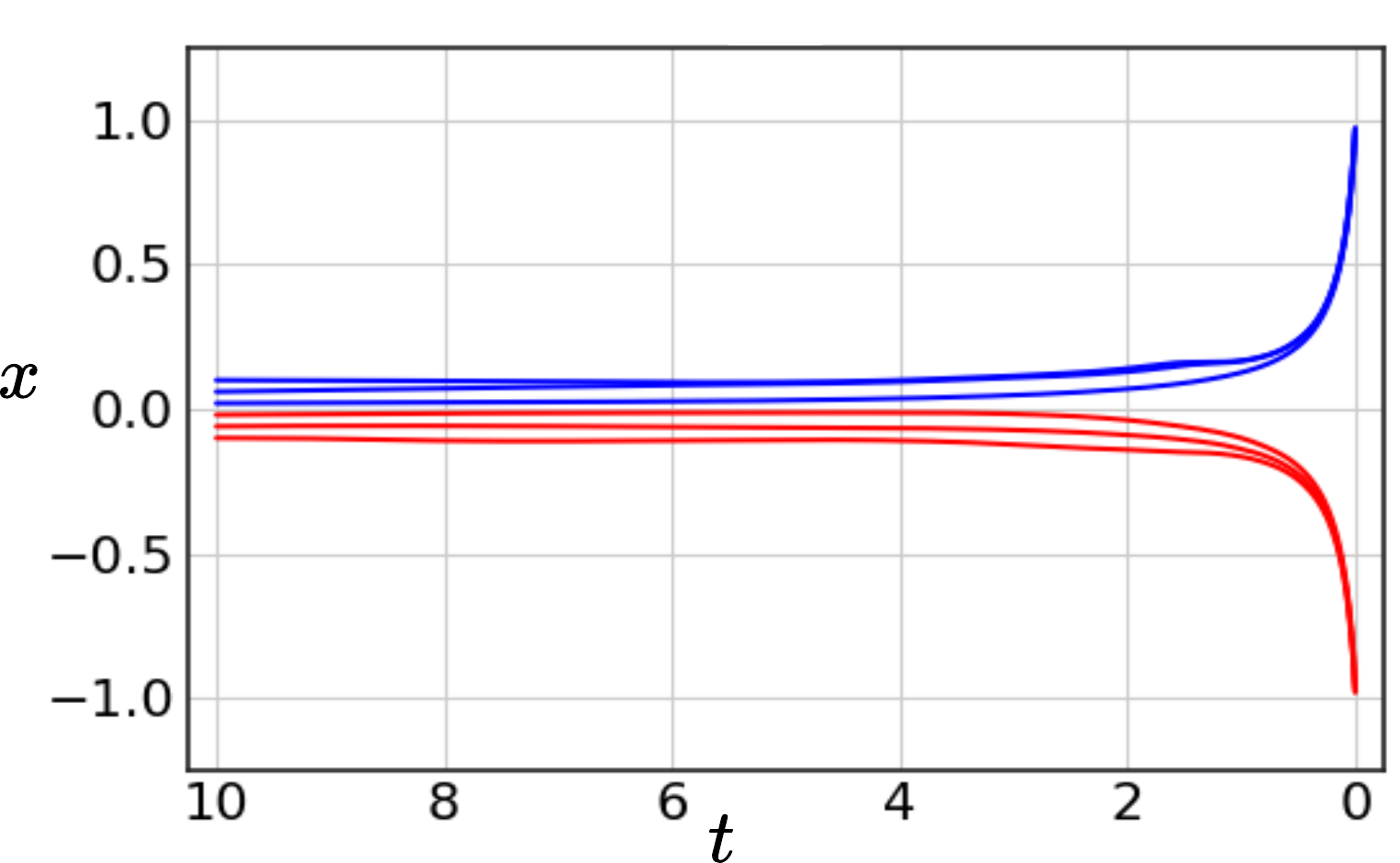}     
         \caption{Multi-step probability flow trajectories.}
         \label{fig:smdp_process_ode}
     \end{subfigure} \\
    \caption{Comparison of Implicit Score Matching (ISM, left) and our proposed training with the multi-step loss (Multi-step, right). Colormap in (a) and (b) truncated to [-75, 75].}
    \label{fig:1d_process_comparison}
\end{figure}

\paragraph{Results of Table 1 in Main Paper} We include the standard deviations of table 1 from the main paper in table \ref{table:app_1d_toy_eval} above. The posterior metric $Q$ is very sensitive to the learned score $s_\theta(\mathbf{x}, t)$. Overall, our proposed multi-step loss gives the most consistent and reliable results.  

\begin{table}
    \centering
    \addtolength{\tabcolsep}{-3pt}   
    \begin{tabular}{l|ccc|ccc}
        \bf Method & \multicolumn{3}{c|}{Probability flow ODE} & \multicolumn{3}{c}{Reverse-time SDE} \\ \hline \hline 
        & \multicolumn{3}{c|}{Data set size} & \multicolumn{3}{c}{Data set size} \\ 
        & 100\% & 10\% & 1\% & 100\% & 10\% & 1\% \\ \hline 
        multi-step & \textbf{0.97}$\pm$0.04 & \textbf{0.91}$\pm$0.05 & \textbf{0.81}$\pm$0.01 & \textbf{0.99}$\pm$0.01 & \underline{0.94}$\pm$0.02 & \textbf{0.85}$\pm$0.06 \\
        1-step & \underline{0.78}$\pm$0.16 & 0.44$\pm$0.13 & \underline{0.41}$\pm$0.13 & \underline{0.93}$\pm$0.05 & 0.71$\pm$0.10 & 0.75$\pm$0.10 \\
        ISM & 0.19$\pm$0.05 & 0.15$\pm$0.15 & 0.01$\pm$0.01 & 0.92$\pm$0.05 & \underline{0.94}$\pm$0.01 & 0.52$\pm$0.22 \\
        SSM-VR & 0.17$\pm$0.16 & \underline{0.49}$\pm$0.24 & 0.27$\pm$0.47 & 0.88$\pm$0.06 & \underline{0.94}$\pm$0.06 & \underline{0.67}$\pm$0.23
    \end{tabular}
    \addtolength{\tabcolsep}{3pt}
    \caption{Posterior metric $Q$ for 1 000 predicted trajectories averaged over three runs.}
    \label{table:app_1d_toy_eval}
\end{table}

\paragraph{Empirical verification of Theorem \ref{thm:1-step-score}}

For the quadratic SDE equation \ref{eq:app_sde_1d}, the analytic score is non-trivial, therefore a direct comparison of the learned network $s_\theta(\mathbf{x}, t)$ and the true score $\nabla_\mathbf{x} \log p_t(\mathbf{x})$ is difficult. However, we can consider the SDE with affine drift given by \begin{align} \label{eq:sde_affine_drift}
    dx = -\lambda x dx + g dW
\end{align} 
with $\lambda = 0.5$ and $g\equiv0.04$. Because this SDE is affine and there are only two starting points $-1$ and $1$, we can write the distribution of states starting in $x_0$ as a Gaussian with mean $\mu(t; x_0) = x_0 e^{-\lambda t}$ and variance $\sigma^2(t; x_0) = \frac{g^2}{2\lambda} (1- e^{-2\lambda t})$, see \cite{oksendal2003stochastic}. Then, the score at time $t$ and position $x$ conditioned on the starting point $x_0$ is $(x - \mu(t; x_0)) / \sigma^2(t; x_0)$. See figure \ref{fig:app_affine_score} for a visualization of the analytic score and a comparison with the learned score. The learned score from the 1-step training matches the analytic score very well in regions where the data density is sufficiently high. 

\begin{figure}[ht]
    \centering
    \begin{subfigure}[b]{0.49\textwidth}
        \centering
        \includegraphics[width=\textwidth]{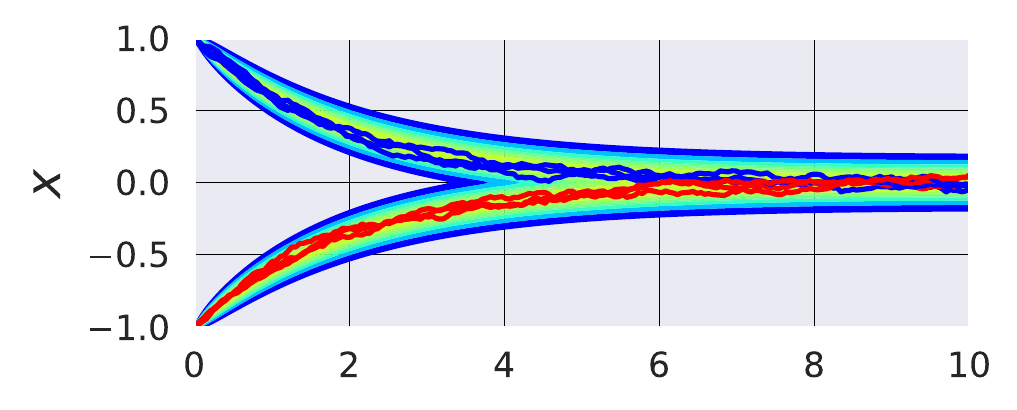}
        \caption{Paths and density}
    \end{subfigure}
    \hfill
    \begin{subfigure}[b]{0.49\textwidth}
        \centering
        \includegraphics[width=\textwidth]{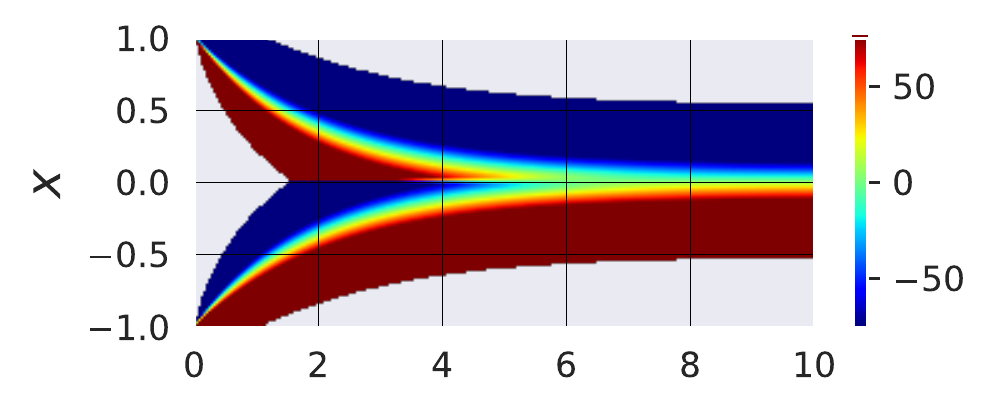}
        \caption{Analytic score field}
    \end{subfigure}
    \begin{subfigure}[b]{0.49\textwidth}
        \centering
        \includegraphics[width=\textwidth]{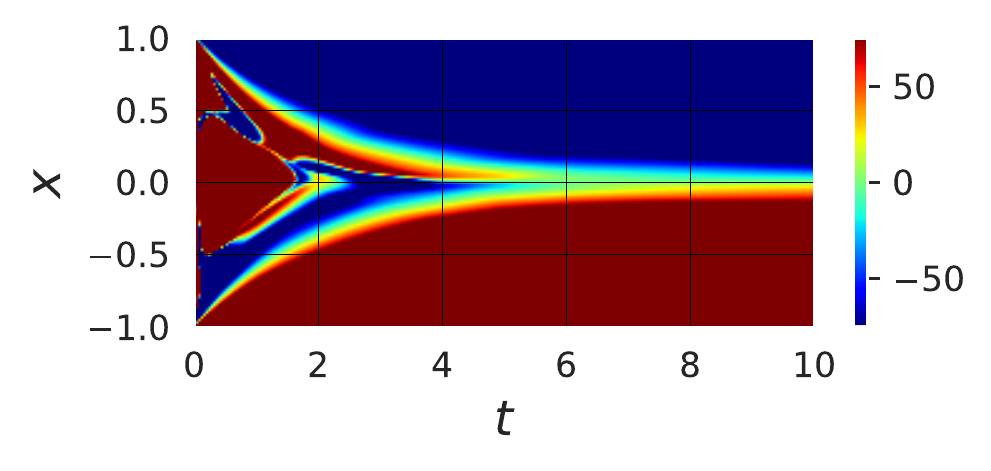}
        \caption{1-step training}
    \end{subfigure}
    \hfill
    \begin{subfigure}[b]{0.49\textwidth}
        \centering
        \includegraphics[width=\textwidth]{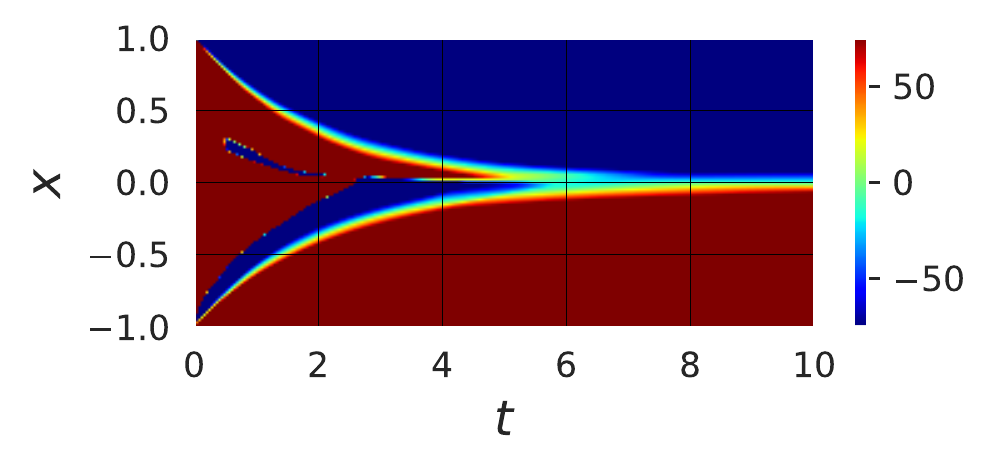}
        \caption{Multi-step training}
    \end{subfigure}
    \caption{1-step training score matches analytic score. (a) shows some paths sampled from SDE equation \eqref{eq:sde_affine_drift} and a contour of the density. (b) is the analytic score field, and (c) and (d) are visualizations of the learned score with 1-step and multi-step training. Scores from the multi-step training correspond to more narrow high-density regions. This implies that during inference, trajectories are pulled more strongly to modes of the training data set distribution than for the 1-step training.}
    \label{fig:app_affine_score}
\end{figure}

\clearpage

\section{Stochastic Heat Equation}\label{app:heat}

\paragraph{Spectral solver}

The physics of the 2-dimensional heat equation for $\mathbf{x} \in \mathbb{R}^{d \times d}$ can be computed analytically. The solver $\mathcal{P}_h^{\Delta t}(\mathbf{x})$ simulates the systems $\mathbf{x}$ forward in time by a fixed $\Delta t$ using the (shifted) Fourier transformation $\mathcal{F}$. In particular, we can implement the solver with
\begin{align}
    \mathcal{P}_h^{\Delta t}(\mathbf{x}) = \mathcal{F}^{-1} \left( A(\Delta t) \circ \mathcal{F}(\mathbf{x}) \right),
\end{align}
where $\circ$ denotes element-wise multiplication and $A(\Delta t)_{ij} \in \mathbb{R}^{d \times d}$ is a matrix with entries $A(\Delta t)_{ij} := \exp \left( -\Delta t \cdot \mathrm{min}(i,j, d-i, j-i)\right)$. The power spectrum of $\mathbf{x}$ is scaled down by $A(\Delta t)$, and higher frequencies are suppressed more than lower frequencies (for $\Delta t > 0$). If noise is added to $\mathbf{x}$, then this means that especially higher frequencies are affected. Therefore the inverse transformation (when $\Delta t > 0$) exponentially scales contributions by the noise, causing significant distortions for the reconstruction of $\mathbf{x}$.

\paragraph{Spectral loss} We consider a spectral error based on the two-dimensional power spectral density. The radially averaged power spectra $s_1$ and $s_2$ for two images are computed as the absolute values of the 2D Fourier transform raised to the second power, which are then averaged based on their distance (in pixels) to the center of the shifted power spectrum. We define the spectral error as the weighted difference between the log of the spectral densities
\begin{align}
    \mathcal{L}(s_1, s_2) = \sum_k w_k |\log(s_{1,k}) - \log(s_{2,k})|
\end{align}
with a weighting vector $w \in \mathbb{R}^d$ and $w_k = 1$ for $k \leq 10$ and $w_k=0$ otherwise. 

\paragraph{Training} 
For inference, we consider the linear time discretization $t_n = n \Delta t$ with $\Delta t = 0.2/32$ and $t_{32} = 0.2$. During training, we sample a random time discretization $0 \leq t_0 < t_2' < .... < t_{31}' < t_{32}$ for each batch based on $t_n$ via $t_n' \sim \mathcal{U}(t_n-\Delta t /2, t_n+\Delta t/2)$ for $n=1, ..., 31$ and adjust the reverse physics step based on the time difference $t_i-t_{i-1}$.
In the first training phase, we consider the multi-step loss with a sliding window size of $S=6, 8, ..., 32$ steps, where we increase $S$ every two epochs. We use Adam to update the weights $\theta$ with learning rate $10^{-4}$. We finetune the network weights for $80$ epochs with an initial learning rate of $10^{-4}$, which we reduce by a factor of $0.5$ every $20$ epochs. 

\paragraph{$s_\theta$ only version} For the 1-step loss, this method is similar to \citet{rissanen2022heat}, which proposes a classical diffusion-like model that generates data from the dynamics of the heat equation. Nonetheless, the implementation details and methodology are analogous to the multi-step loss training, except that the reverse physics step $\Tilde{\mathcal{P}}^{-1}$ is not explicitly defined but instead learned by the network $s_\theta(\mathbf{x}, t)$ together with the score at the same time. We make use of the same ResNet architecture as the default variant. Except for the reverse physics step, the training setup is identical. Although the network $s_\theta(\mathbf{x}, t)$ is not trained with any noise for a larger sliding window with the multi-step loss, we add noise to the simulation states for the SDE inference, while there is no noise for the ODE inference.

\paragraph{Baseline methods}

All other baseline methods are trained for $80$ epochs using the Adam optimizer algorithm with an initial learning rate of $10^{-4}$, which is decreased by a factor of $0.5$ every $20$ epochs. For the training data, we consider solutions to the heat equation consisting of initial state $\mathbf{x}_0$ and end state $\mathbf{x}_T$. 

\paragraph{Test-time distribution shifts}

We have tested the effects of test-time distribution shifts for the heat equation experiment. We train the score network $s_\theta$ for a specific combination of diffusivity $\alpha$ and noise $g$ and vary both parameters for testing. We modify both the simulator and test ground truth for the updated diffusivity and noise. See figure \ref{app:test-time-distribution-shift}. Overall, for small changes of the parameters, there is very little overfitting. Changes in the reconstruction MSE and spectral error can mainly be explained by making the task itself easier or harder to which our network generalizes nicely, e.g., less noise or higher diffusivity leads to smaller reconstruction error.

\begin{figure}[ht]
    \centering
    \begin{subfigure}[b]{0.9\textwidth}
        \centering
        \includegraphics[width=\textwidth]{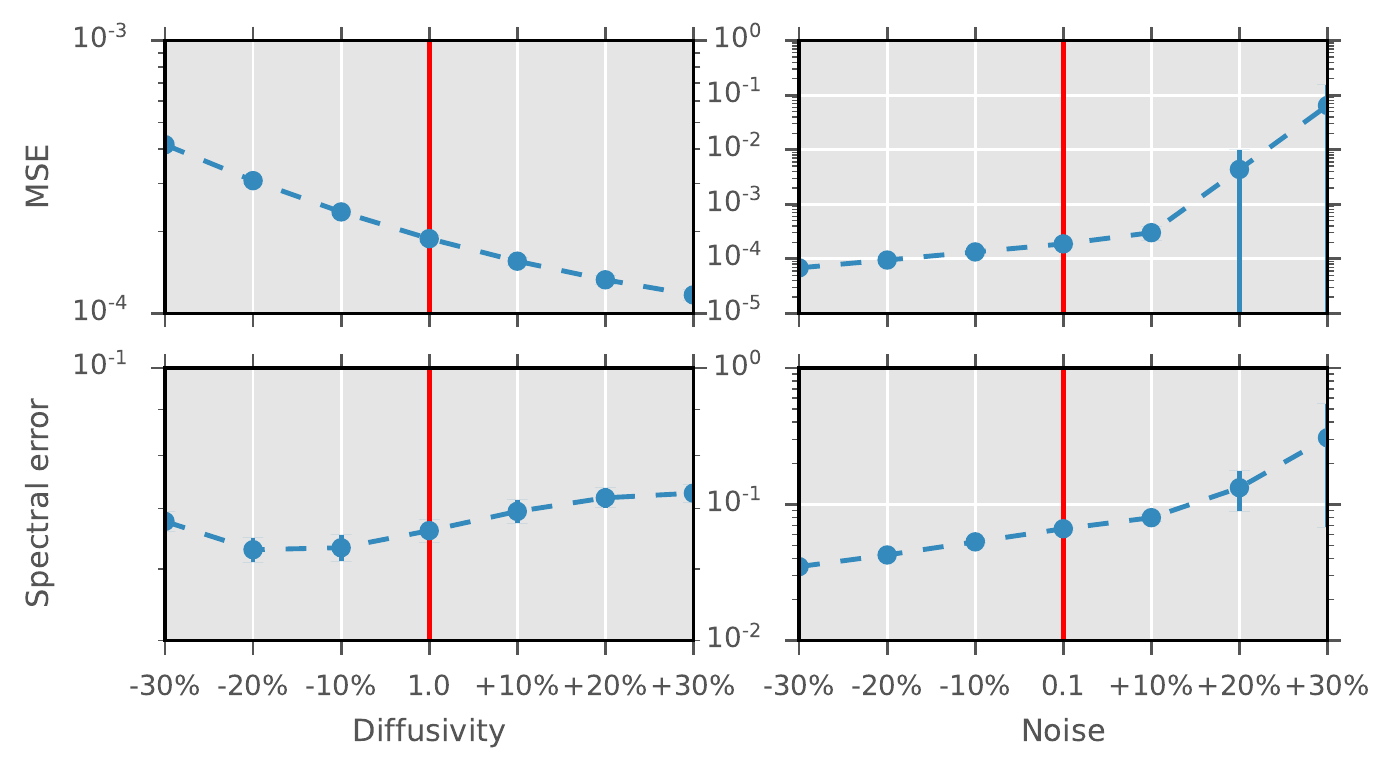}
        \caption{Reverse-time SDE}
    \end{subfigure}
    \hfill
    \begin{subfigure}[b]{0.9\textwidth}
        \centering
        \includegraphics[width=\textwidth]{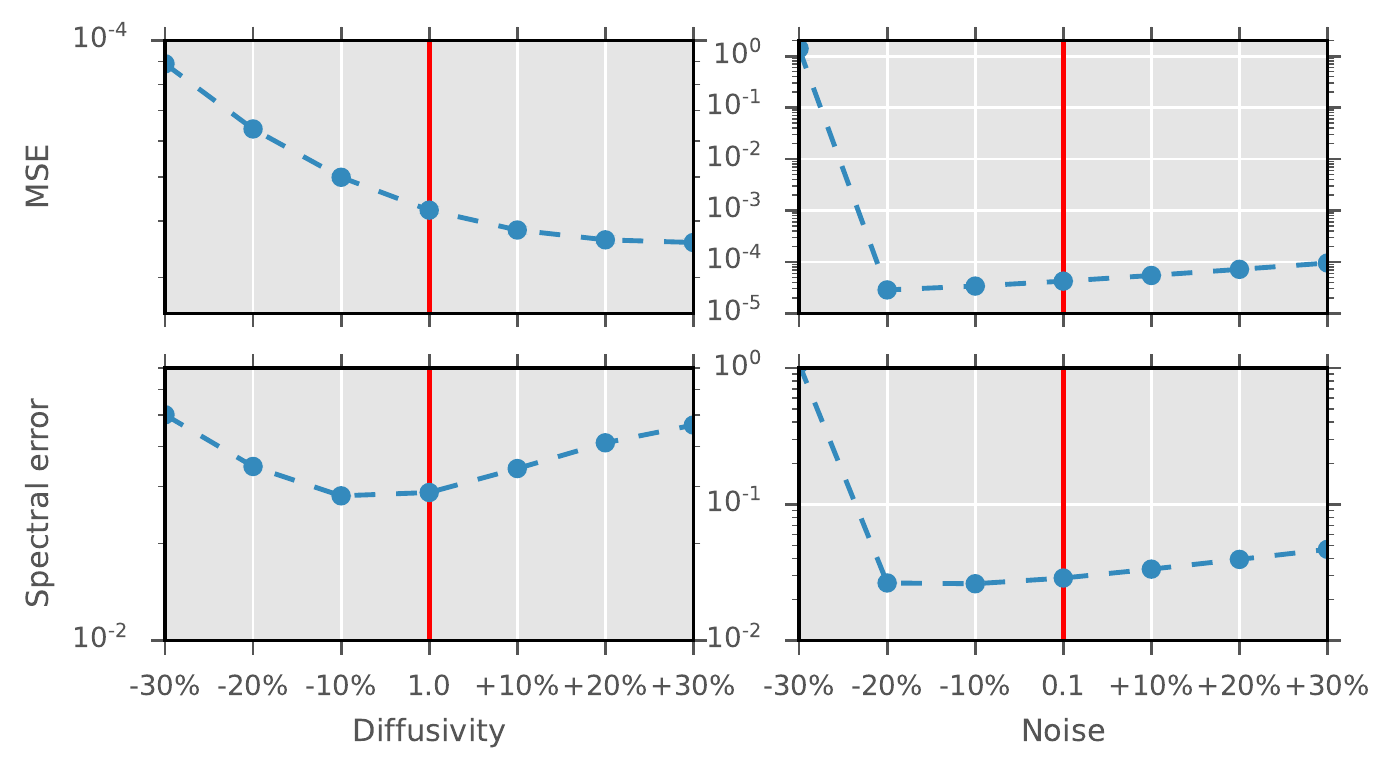}
        \caption{Probability flow ODE}
    \end{subfigure}
   
    \caption{Test-time distribution shifts for noise and diffusivity. The correction network $s_\theta$ is trained on diffusivity $\alpha = 1.0$ and noise $g \equiv 0.1$. During testing, we vary the diffusivity of the test data set and simulator as well as the noise for inference. Especially the low spectral errors indicate that the network generalizes well to the new behavior of the physics. }
    \label{app:test-time-distribution-shift}
\end{figure}

\paragraph{Additional results}

We show visualizations for the predictions of different methods in \figref{fig:heat_eq_example1} and \figref{fig:heat_eq_example2}.

\begin{figure}[!ht]
\centering
\begin{subfigure}{.19\textwidth}
    \centering
    \includegraphics[width=2.6cm, height=2.6cm]{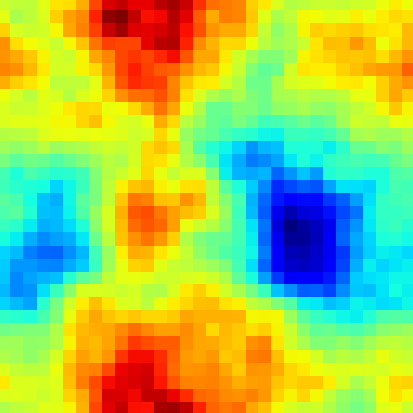}
    \caption{Ground truth}\label{fig:image-add-heat1}
\end{subfigure}
\begin{subfigure}{.19\textwidth}
    \centering
    \includegraphics[width=2.6cm, height=2.6cm]{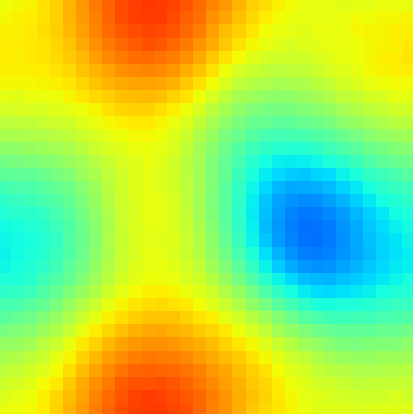}
    \caption{Input}\label{fig:image-add-heat2}
\end{subfigure}
\begin{subfigure}{.19\textwidth}
  \centering
  \includegraphics[width=2.6cm, height=2.6cm]{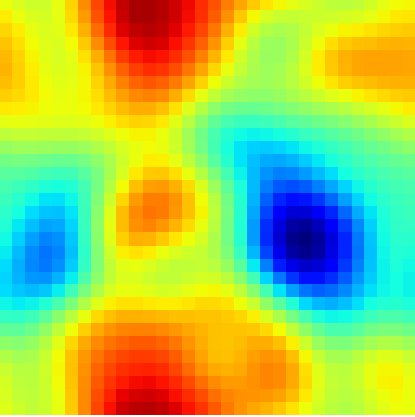}
  \caption{SMDP - ODE}\label{fig:image-heat-SMDP-ode}
\end{subfigure} 
\begin{subfigure}{.19\textwidth}
  \centering
  \includegraphics[width=2.6cm, height=2.6cm]{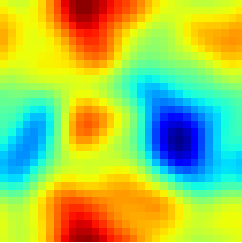}
  \caption{ResNet}\label{fig:image3a}
\end{subfigure} 
\begin{subfigure}{.19\textwidth}
  \centering
  \includegraphics[width=2.6cm, height=2.6cm]{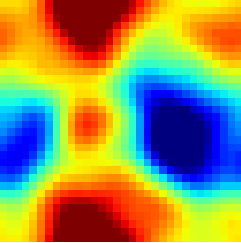}
  \caption{FNO}\label{fig:image3c}
\end{subfigure} 
\hfill
   \\
\hfill
\begin{subfigure}{.19\textwidth}
  \centering
  \includegraphics[width=2.6cm, height=2.6cm]{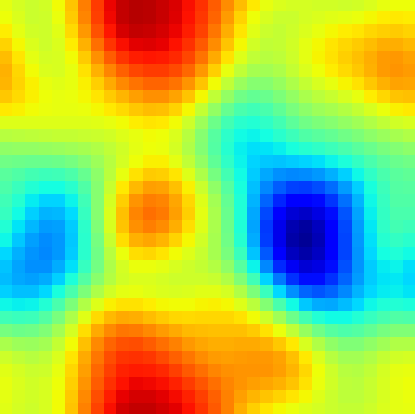}
  \caption{$s_\theta$ only - ODE}\label{fig:image-heat-ex1}
\end{subfigure} 
\hfill
\begin{subfigure}{.39\textwidth}
  \centering
  \includegraphics[width=2.6cm, height=2.6cm]{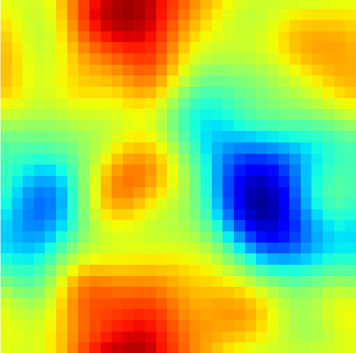} 
  \includegraphics[width=2.6cm, height=2.6cm]{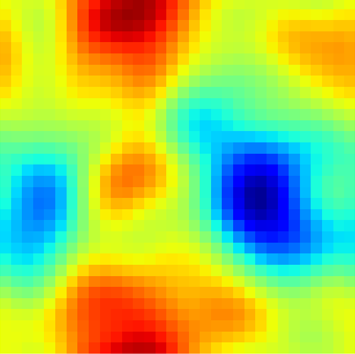} 
  \caption{$s_\theta(\mathbf{x}, t)$ only - SDE}\label{fig:image-heat-ex2}
\end{subfigure} 
\begin{subfigure}{.38\textwidth}
  \centering
  \includegraphics[width=2.6cm, height=2.6cm]{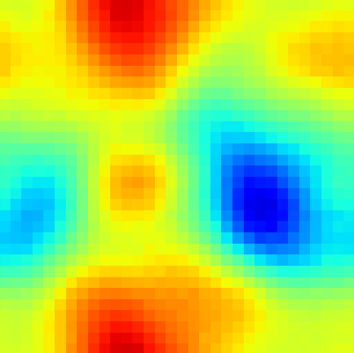}
  % \hfill
  \includegraphics[width=2.6cm, height=2.6cm]{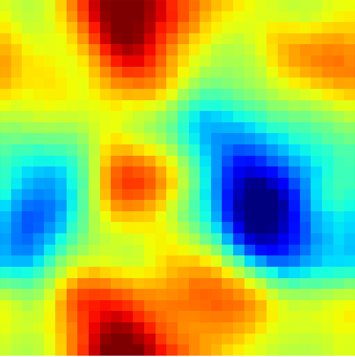}
  \caption{BNN}\label{fig:image-bnn-s}
\end{subfigure} 
\\
\begin{subfigure}{.57\textwidth}
  \centering
  \includegraphics[width=2.6cm, height=2.6cm]{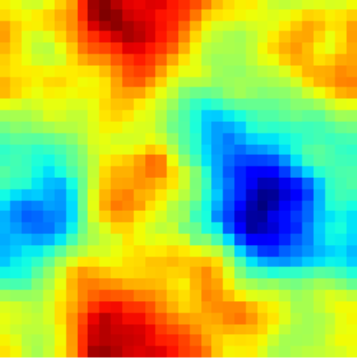} 
  \includegraphics[width=2.6cm, height=2.6cm]{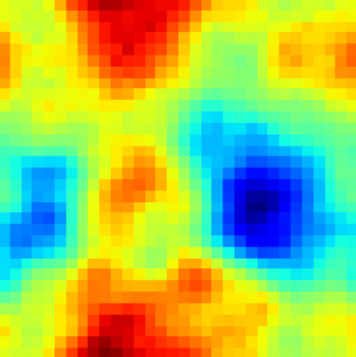} 
  \includegraphics[width=2.6cm, height=2.6cm]{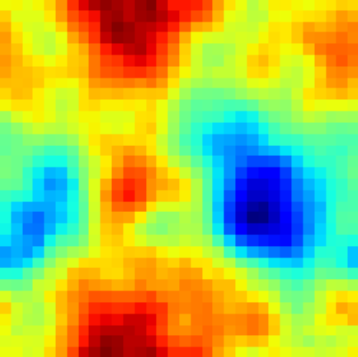} 
  \caption{SMDP - SDE}\label{fig:image-heat-SMDP-sde-1}
\end{subfigure} 
\hfill

\caption{Predictions of different methods for the heat equation problem (example 1 of 2).} \label{fig:heat_eq_example1}
\end{figure}

\begin{figure}[!ht]
\centering

\begin{subfigure}{.19\textwidth}
    \centering
    \includegraphics[width=2.6cm, height=2.6cm]{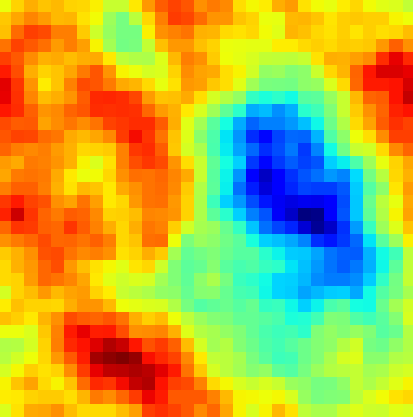}
    \caption{Ground truth}\label{fig:image-add-heat1-0}
\end{subfigure}
\begin{subfigure}{.19\textwidth}
    \centering
    \includegraphics[width=2.6cm, height=2.6cm]{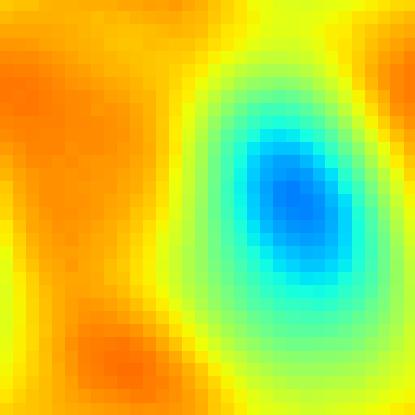}
    \caption{Input}\label{fig:image-add-heat2-0}
\end{subfigure}
\begin{subfigure}{.19\textwidth}
  \centering
  \includegraphics[width=2.6cm, height=2.6cm]{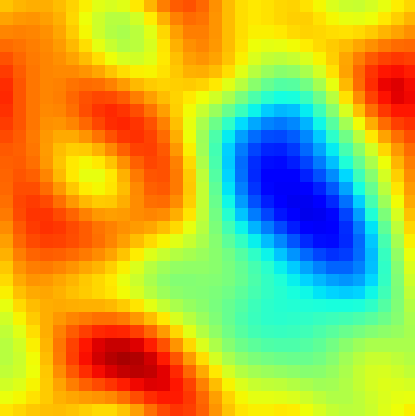}
  \caption{SMDP - ODE}\label{fig:image-heat-SMDP-ode-0}
\end{subfigure} 
\begin{subfigure}{.19\textwidth}
  \centering
  \includegraphics[width=2.6cm, height=2.6cm]{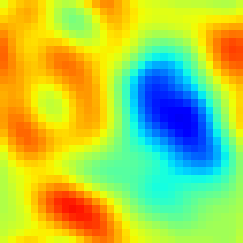}
  \caption{ResNet}\label{fig:image3a-0}
\end{subfigure} 
\begin{subfigure}{.19\textwidth}
  \centering
  \includegraphics[width=2.6cm, height=2.6cm]{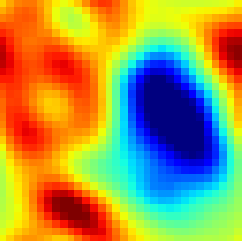}
  \caption{FNO}\label{fig:image3c-0}
\end{subfigure} 
\hfill
   \\
\hfill
\begin{subfigure}{.19\textwidth}
  \centering
  \includegraphics[width=2.6cm, height=2.6cm]{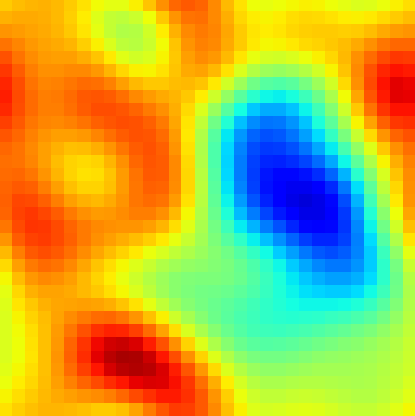}
  \caption{$s_\theta$ only - ODE}\label{fig:image-heat-ex1-0}
\end{subfigure} 
\hfill
\begin{subfigure}{.39\textwidth}
  \centering
  \includegraphics[width=2.6cm, height=2.6cm]{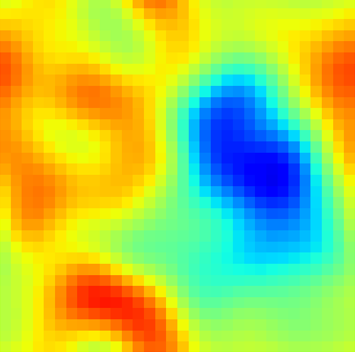} 
  \includegraphics[width=2.6cm, height=2.6cm]{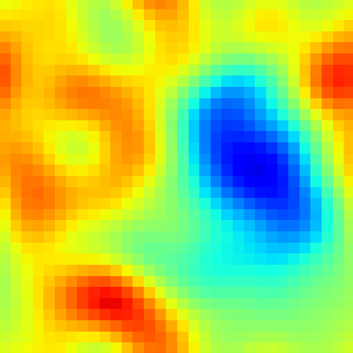} 
  \caption{$s_\theta$ only - SDE}\label{fig:image-heat-ex2-0}
\end{subfigure} 
\begin{subfigure}{.38\textwidth}
  \centering
  \includegraphics[width=2.6cm, height=2.6cm]{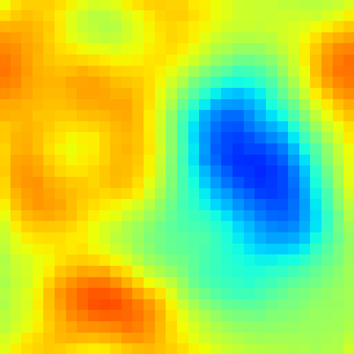}
  % \hfill
  \includegraphics[width=2.6cm, height=2.6cm]{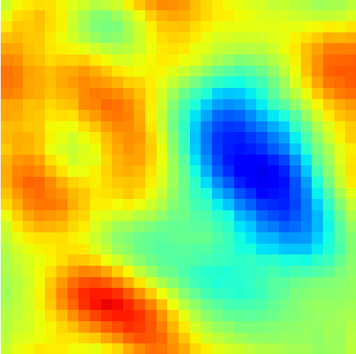}
  \caption{BNN}\label{fig:image-bnn-s-0}
\end{subfigure} 
\\
\begin{subfigure}{.57\textwidth}
  \centering
  \includegraphics[width=2.6cm, height=2.6cm]{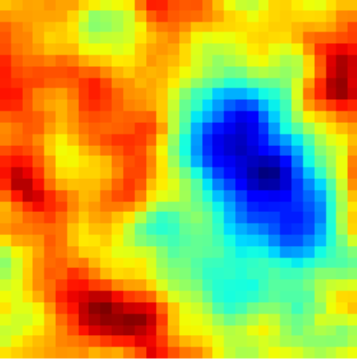} 
  \includegraphics[width=2.6cm, height=2.6cm]{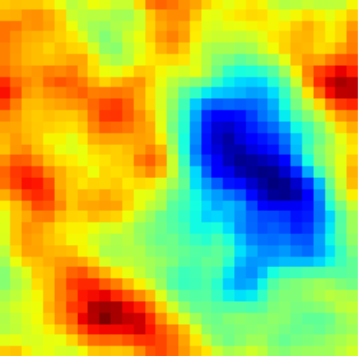} 
  \includegraphics[width=2.6cm, height=2.6cm]{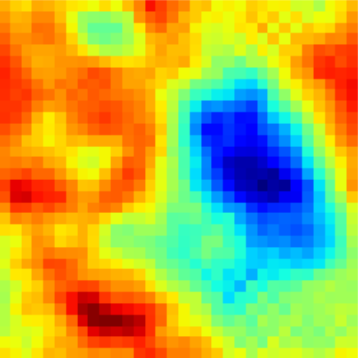} 
  \caption{SMDP - SDE}\label{fig:image-heat-SMDP-sde-0}
\end{subfigure} 
\hfill

\caption{Predictions of different methods for the heat equation problem (example 2 of 2). Neither the BNN nor the "$s_\theta$ only" model can produce small-scale structures.} \label{fig:heat_eq_example2}
\end{figure}

\newpage

\section{Buoyancy-driven Flow with Obstacles} \label{sec:buoyancy_flow_details}

We use semi-Lagrangian advection for the velocity and MacCormack advection for the hot marker density within a fixed domain $\Omega \subset [0,1] \times [0,1]$. The temperature dynamics of the marker field are modeled with a Boussinesq approximation.

\paragraph{Training} We train all networks with Adam and learning rate $10^{-4}$ with batch size $16$. We begin training with a sliding window size of $S=2$, which we increase every $30$ epochs by $2$ until $S_\mathrm{max} = 20$.

\paragraph{Separate vs. joint updates} 

We compare a joint update of the reverse physics step and corrector function $s_\theta$, see figure \ref{fig:smdp_variants}b, and a separate update of reverse physics step and corrector function, see figure \ref{fig:smdp_variants}c. An evaluation regarding the reconstruction MSE and perceptual distance is shown in figure \ref{fig:separate_vs_joint_updates}.
Both training and inference variants achieve advantages over "$\Tilde{\mathcal{P}}^{-1}$ only" and "$s_\theta$ only" approaches. Overall, there are no apparent differences for the ODE inference performance but slight benefits for the SDE inference when separating physics and corrector update.  

\paragraph{Limited-memory BFGS}

We use numerical optimization of the marker and velocity fields at $t=0.35$ to match the target smoke and velocity fields at $t=0.65$ using limited-memory BFGS \citep[LBFGS]{liun89} and the differentiable forward simulation implemented in \emph{phiflow} \cite{holl2020phiflow}. Our implementation directly uses the LBFGS implementation provided by \emph{torch.optim.LBFGS} \cite{paszke19torch}. As arguments, we use $\mathrm{history\_size}=10$, $\mathrm{max\_iter}=4$ and $\mathrm{line\_search\_fn}=\mathrm{strong\_wolfe}$. Otherwise, we leave all other arguments to the default values. We optimize for $10$ steps which takes ca. 240 seconds per sample on a single NVIDIA RTX 2070 gpu. 

\paragraph{Diffusion posterior sampling DPS}

An additional baseline for this problem is diffusion posterior sampling for general noisy inverse problems \cite[DPS]{chungKMKY23}. 
As a first step, we pretrain a diffusion model on the data set of marker and velocity fields at $t=0.35$. We use the mask for obstacle positions as an additional conditioning input to the network to which no noise is applied. Our architecture and training closely resemble Denoising Diffusion Probabilistic Models \cite[DDPM]{hoddpm2020}. Our network consists of ca. 18.44 million parameters trained for 100k steps and learning rate $\num{1e-4}$ using cosine annealing with warm restarts ($T_0=50000$, $\eta_\mathrm{min}=\num{5e-6}$). The measurement operator $\mathcal{A}$ is implemented using our differentiable forward simulation. We consider the Gaussian version of DPS, i.e., Algorithm 1 in \cite{chungKMKY23} with $N=1000$. We fix the step size $\zeta_i$ at each iteration $i$ to $1/||y - \mathcal{A}(\hat{\mathbf{x}}_0(\mathbf{x}_i))||$. For each inference step, we are required to backpropagate gradients through the diffusion model and the forward simulation. Inference for a single sample requires ca. 5000 seconds on a single NVIDIA RTX 2070 gpu.

\paragraph{Additional results} 

We provide more detailed visualizations for the buoyancy-driven flow case in \figref{fig:app_flow_example1} and \figref{fig:app_flow_example2}.
These again highlight the difficulties of the reverse physics simulator to recover the initial states by itself. Including the learned corrections significantly improves this behavior. 

In \figref{fig:app_flow_sde_comparison}, we also show an example of the posterior sampling for the SDE. It becomes apparent that the inferred small-scale structures of the different samples change. However, in contrast to cases like the heat diffusion example, the physics simulation in this scenario leaves only little room for substantial changes in the states. 

\begin{figure}[bt]
    \centering
    \begin{subfigure}[b]{.9\textwidth}
         \centering
         \includegraphics[width=\textwidth]{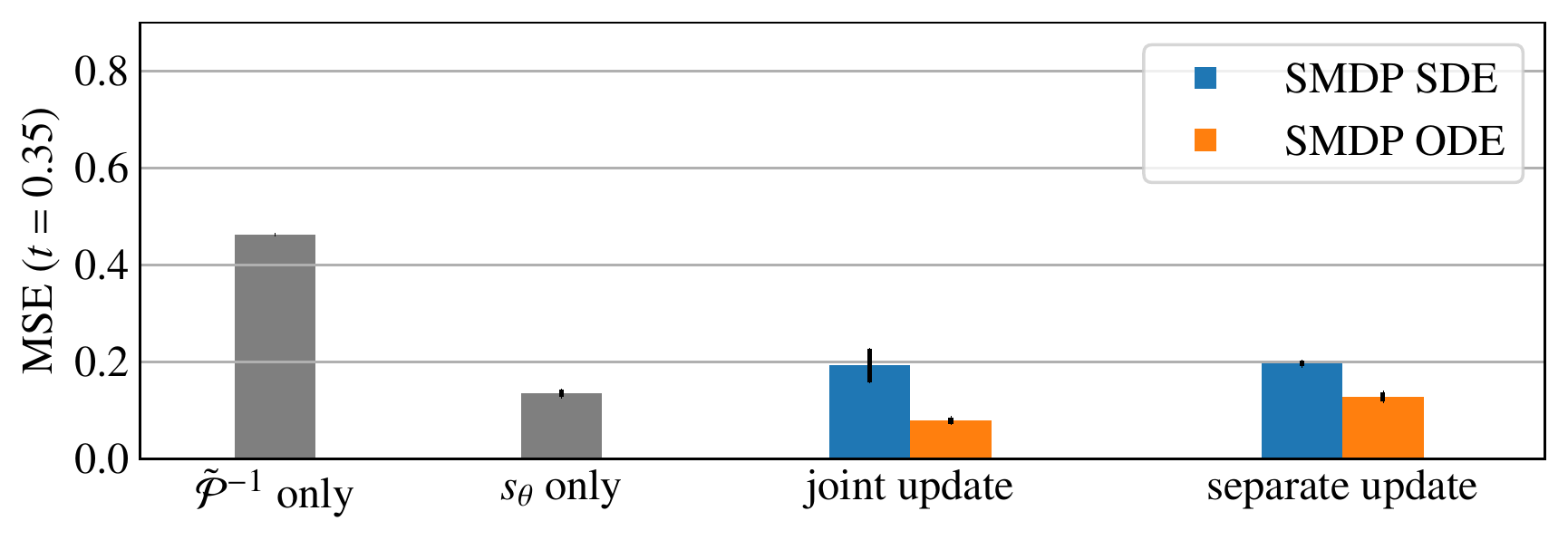}     
         \caption{Reconstruction MSE ($t=0.35$).}
         \label{fig:flow_reconstruction_mse}
     \end{subfigure}
     \begin{subfigure}[b]{.9\textwidth}
         \centering
         \includegraphics[width=\textwidth]{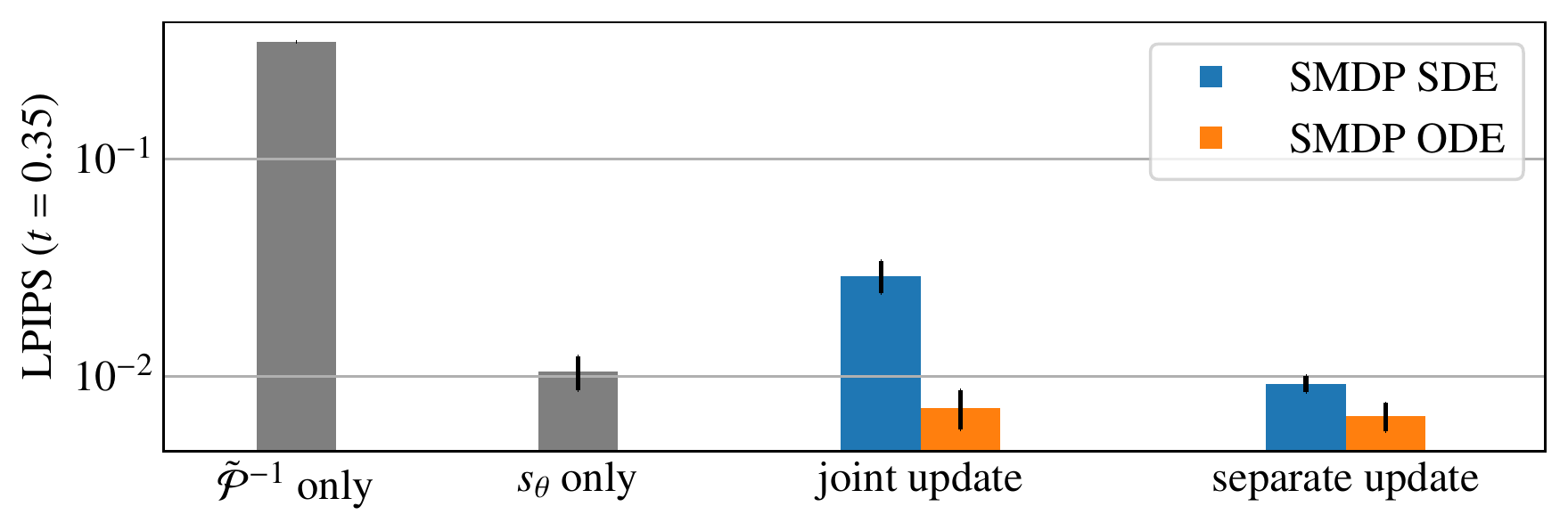} 
         \caption{Perceptual distance LPIPS ($t=0.35$).}
         \label{fig:flow_lpips}
     \end{subfigure}
    \caption{Comparison of separate and joint updates averaged over three runs.}
    \label{fig:separate_vs_joint_updates}
\end{figure}

\begin{figure}[bt]
  \centering
  \includegraphics[width=.9\textwidth]{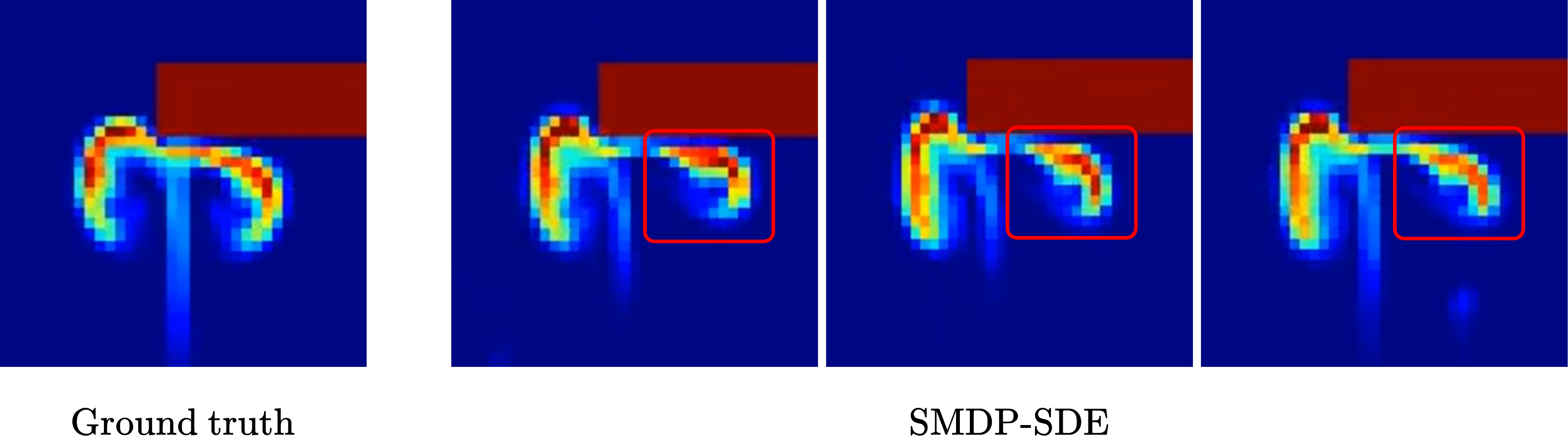} 
  \caption{Comparison of SMDP-SDE predictions and ground truth for buoyancy-driven flow at $t=0.36$.} 
  \label{fig:app_flow_sde_comparison}
\end{figure}

\begin{figure}[bt]
    \centering
\begin{subfigure}{\textwidth}
  \centering
  \includegraphics[width=.85\textwidth]{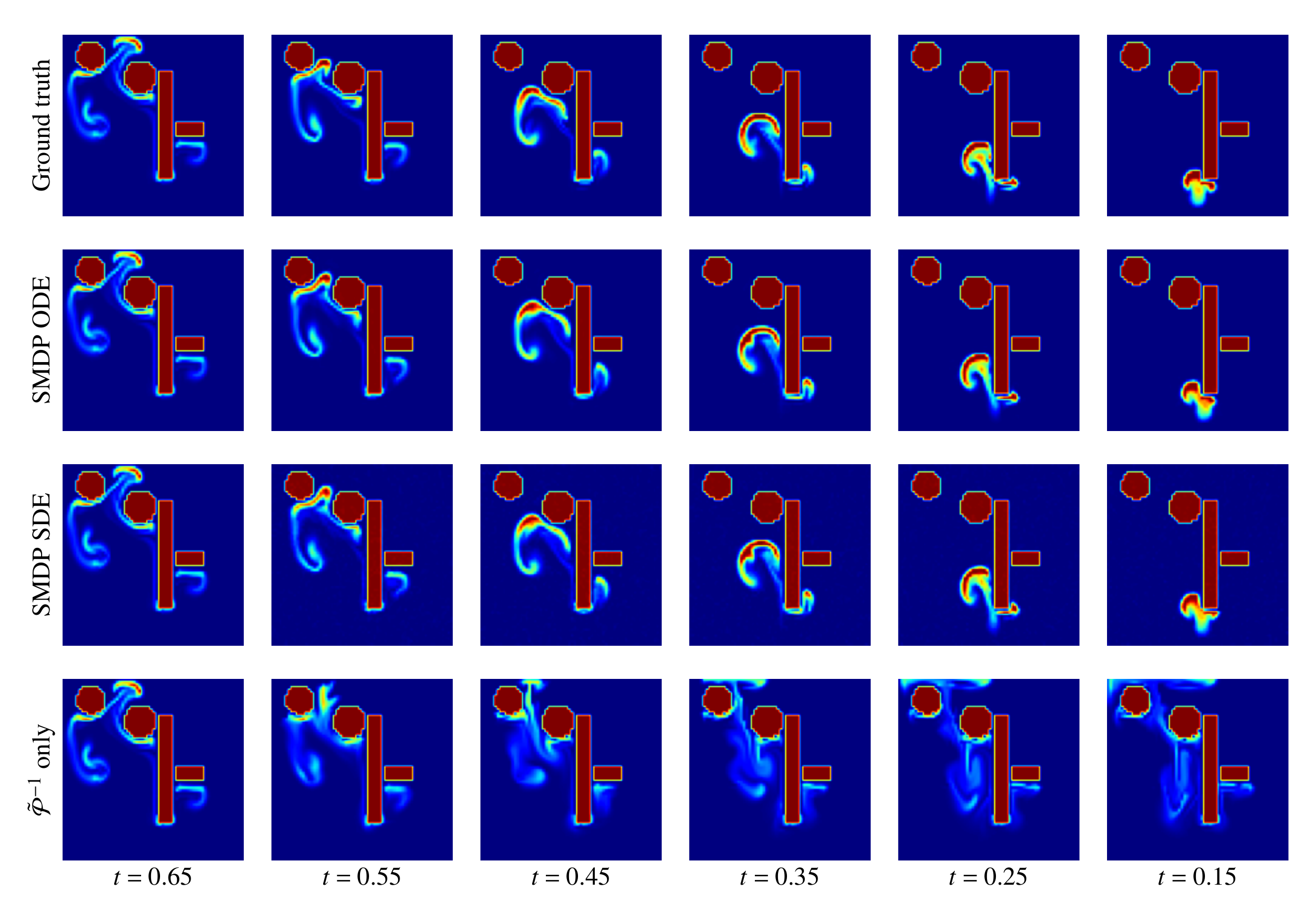}
  \caption{Marker field}\label{fig:flow_marker_field_1}
\end{subfigure} 

\begin{subfigure}{\textwidth}
  \centering
  \includegraphics[width=.85\textwidth]{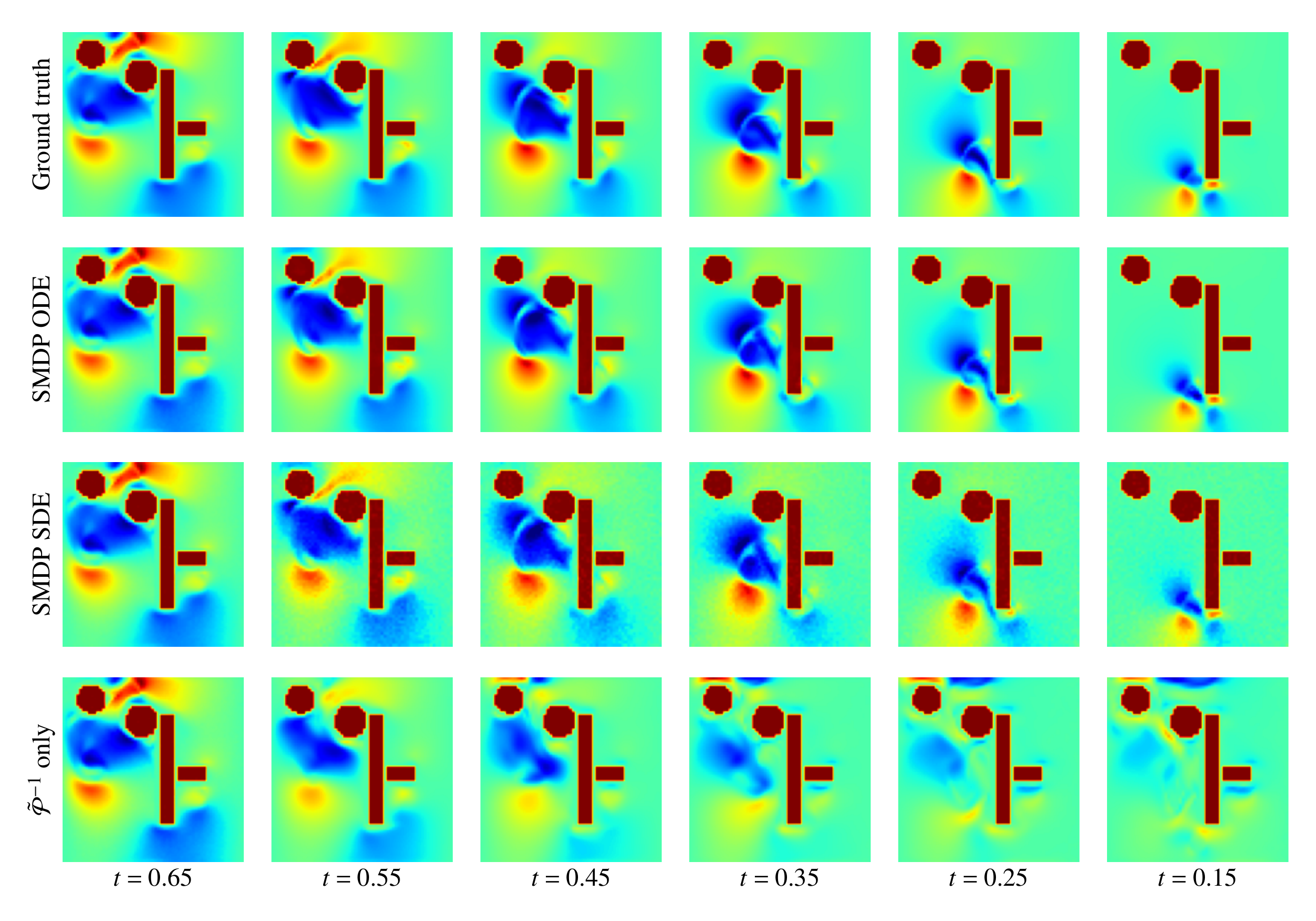}
  \caption{Velocity field ($x$-direction)}\label{fig:flow_vel_x_1}
\end{subfigure} 

\caption{Predictions for buoyancy-driven flow with obstacles (example 1 of 2).} \label{fig:app_flow_example1}
\end{figure}

\begin{figure}[bt]
    \centering
\begin{subfigure}{\textwidth}
  \centering
  \includegraphics[width=.85\textwidth]{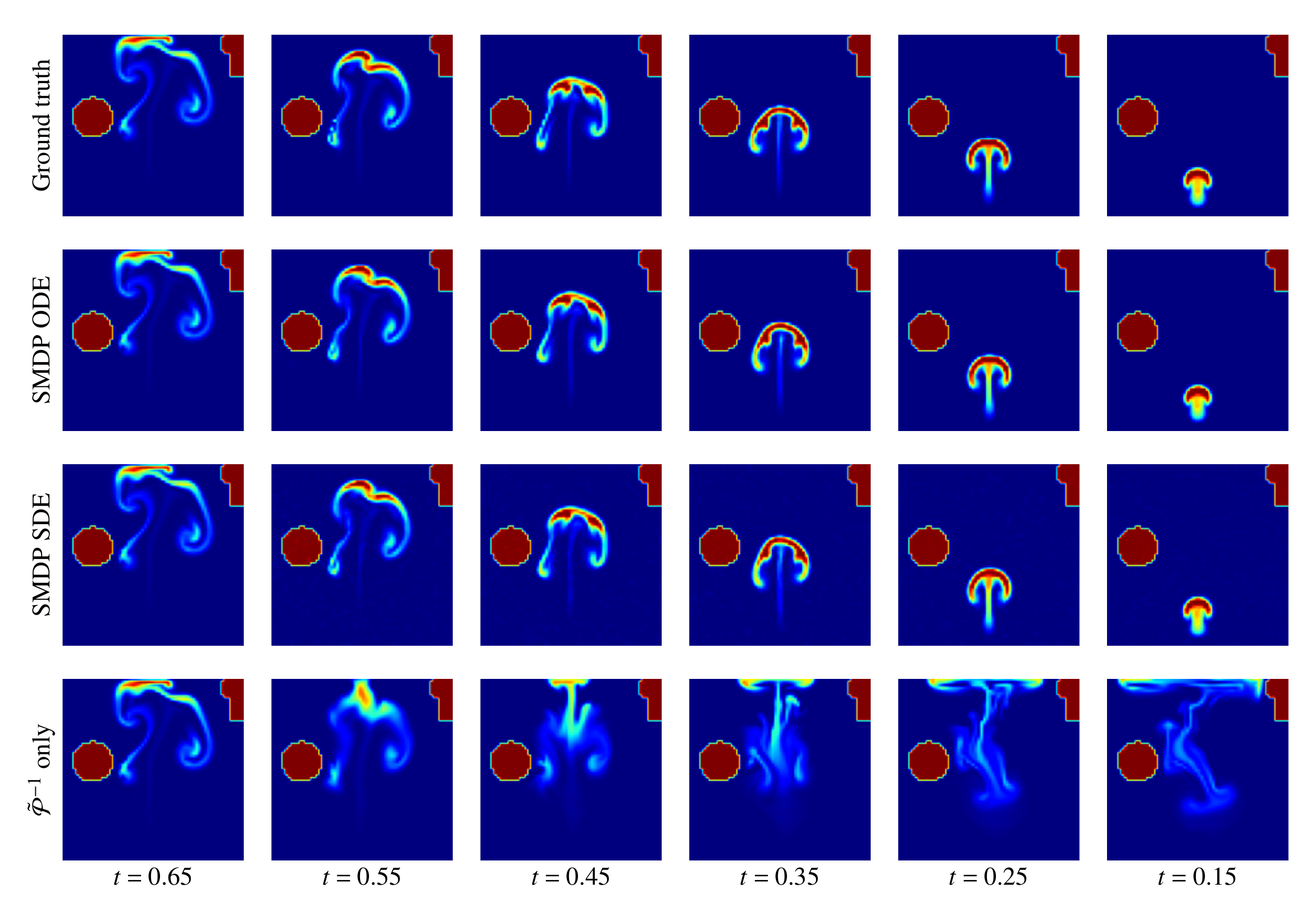}
  \caption{Marker field}\label{fig:flow_marker_field_2}
\end{subfigure} 

\begin{subfigure}{\textwidth}
  \centering
  \includegraphics[width=.85\textwidth]{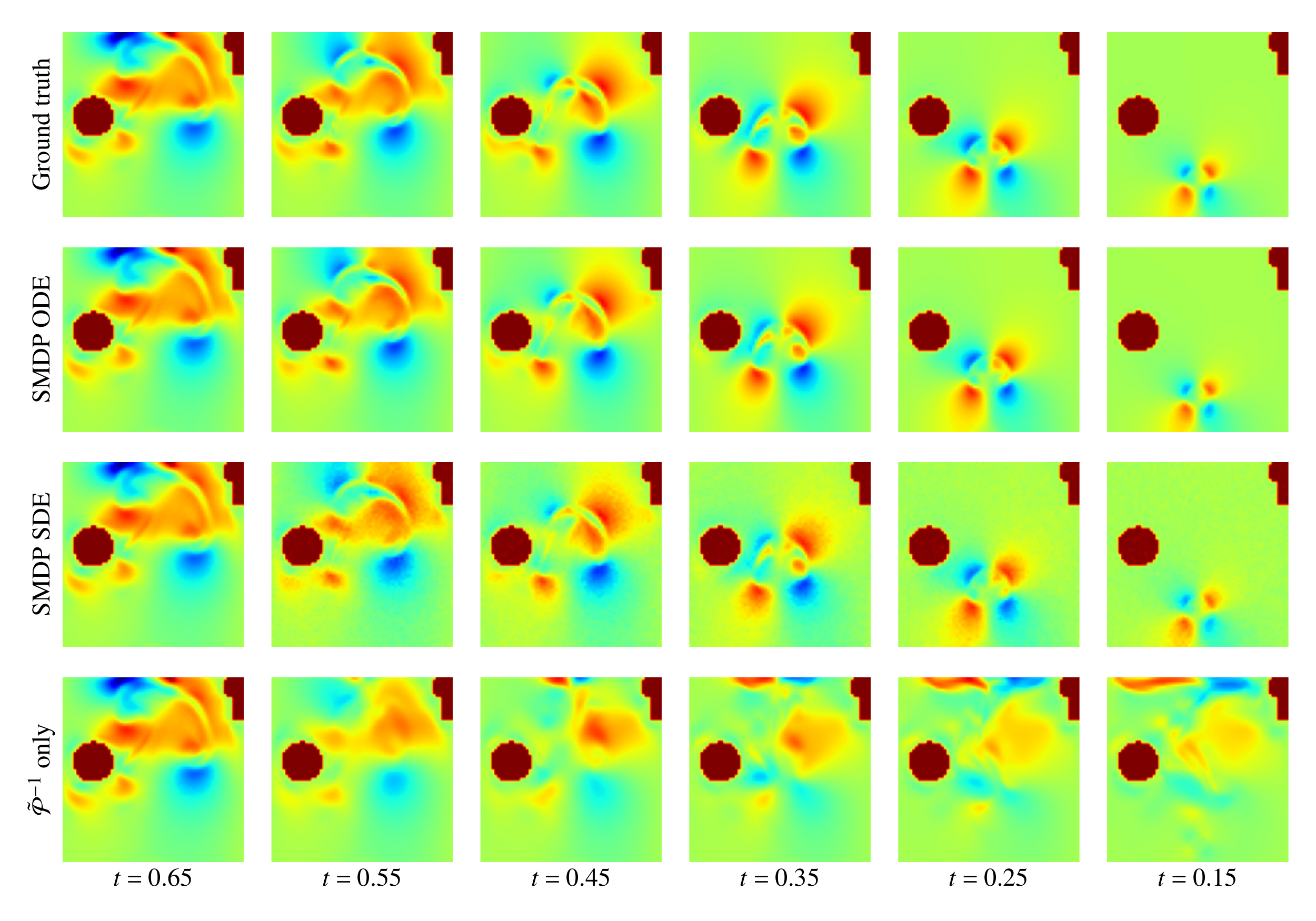}
  \caption{Velocity field ($x$-direction)}\label{fig:flow_vel_x_2}
\end{subfigure} 

\caption{Predictions for buoyancy-driven flow with obstacles (example 2 of 2).} \label{fig:app_flow_example2}
\end{figure}

\clearpage
\newpage

\section{Isotropic turbulence} \label{app:isotropic_turbulence}

\paragraph{Training} For the physics surrogate model $\Tilde{\mathcal{P}}^{-1}$, we employ an FNO neural network, see appendix \ref{sec:archictectures}, with batch size $20$. We train the FNO for $500$ epochs using Adam optimizer with learning rate $10^{-3}$, which we decrease every $100$ epochs by a factor of $0.5$. We train $s_\theta(\mathbf{x}, t)$ with the ResNet architecture, see appendix \ref{sec:archictectures}, for $250$ epochs with learning rate $10^{-4}$, decreased every $50$ epochs by a factor of $0.5$ and batch size $6$.

\paragraph{Refinement with Langevin Dynamics}\label{app:ld}

Since the trained network $s_\theta(\mathbf{x},t)$ approximates the score $\nabla_\mathbf{x} \log p_t(\mathbf{x})$, it can be used for post-processing strategies \citep{wellinglangevin2011, songgrad2019}. We do a fixed point iteration at a single point in time based on Langevin Dynamics via:
\begin{align} \label{eq:ld_iteration}
    \mathbf{x}_t^{i+1} = \mathbf{x}_t^{i} + \epsilon \cdot \nabla_\mathbf{x} \log p_t(\mathbf{x}_t^{i}) + \sqrt{2\epsilon} z_t
\end{align}
for a number of steps $T$ and $\epsilon=\num{2e-5}$, cf. \figref{fig:ns_refinement_raw} and \figref{fig:ns_refinement_projected}. For a prior distribution $\pi_t$, $\mathbf{x}_t^0 \sim \pi_t$ and by iterating \eqref{eq:ld_iteration}, the distribution of $\mathbf{x}_t^T$ equals $p_t$ for $\epsilon \to 0$ and $T \to \infty$. There are some theoretical caveats, i.e., a Metropolis-Hastings update needs to be added in \eqref{eq:ld_iteration}, and there are additional regularity conditions \citep{songgrad2019}.  

\paragraph{Additional results} We show additional visualizations of the ground truth and reconstructed trajectories in figure \ref{fig:ns_gallery_1} and figure \ref{fig:ns_gallery_2}.

\begin{figure}
    \centering
    \includegraphics[trim={4.5cm 1.8cm 3.5cm 1cm},clip, width=.9\textwidth]{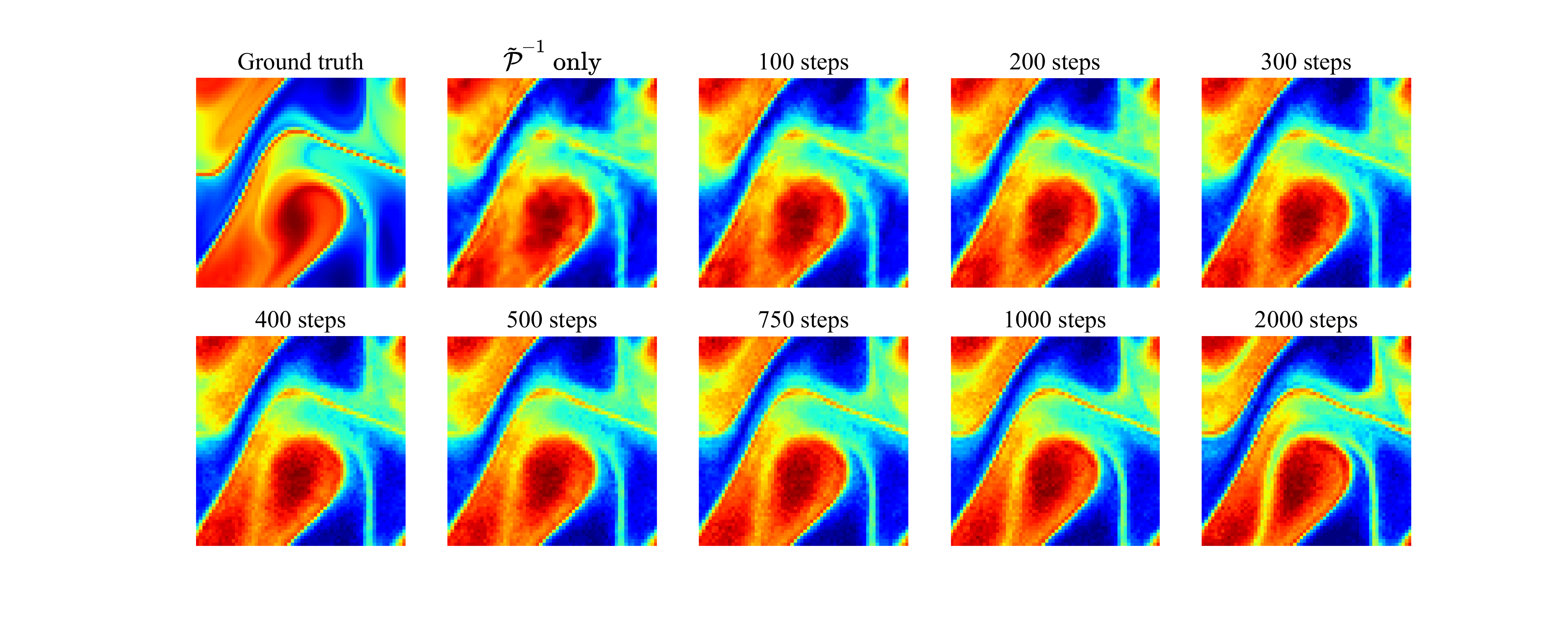}
    \caption{Steps of Langevin dynamics for $\epsilon=\num{2e-5}$.}
    \label{fig:ns_refinement_raw}
\end{figure}

\begin{figure}
    \centering
    \includegraphics[trim={4.5cm 1.8cm 3.5cm 1cm},clip, width=.9\textwidth]{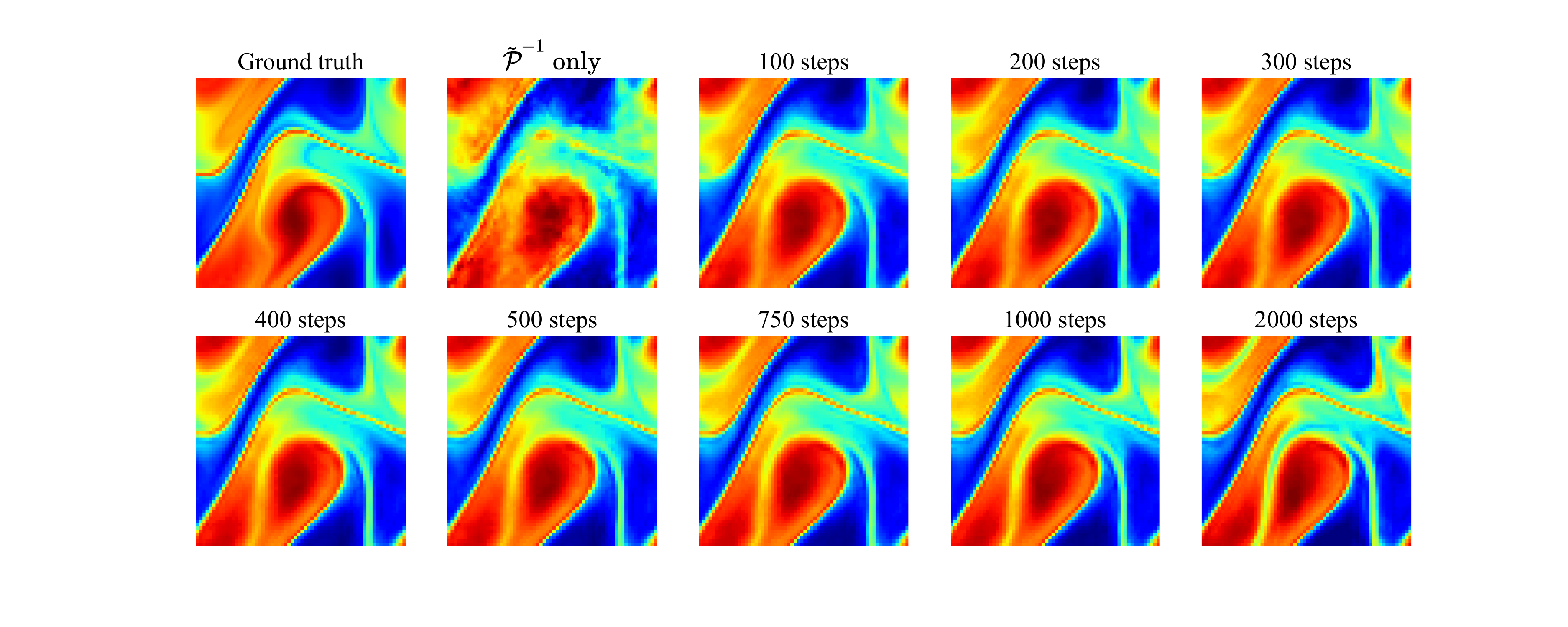}
    \caption{Steps with Langevin dynamics for $\epsilon=\num{2e-5}$ and an additional step with $\Delta t \, s_\theta(\mathbf{x},t)$ which smoothes the images.}
    \label{fig:ns_refinement_projected}
\end{figure}

\clearpage

\begin{figure}[!t]
    \centering
    \includegraphics[trim={0cm 0cm 0cm 0cm},clip, width=.9\textwidth]{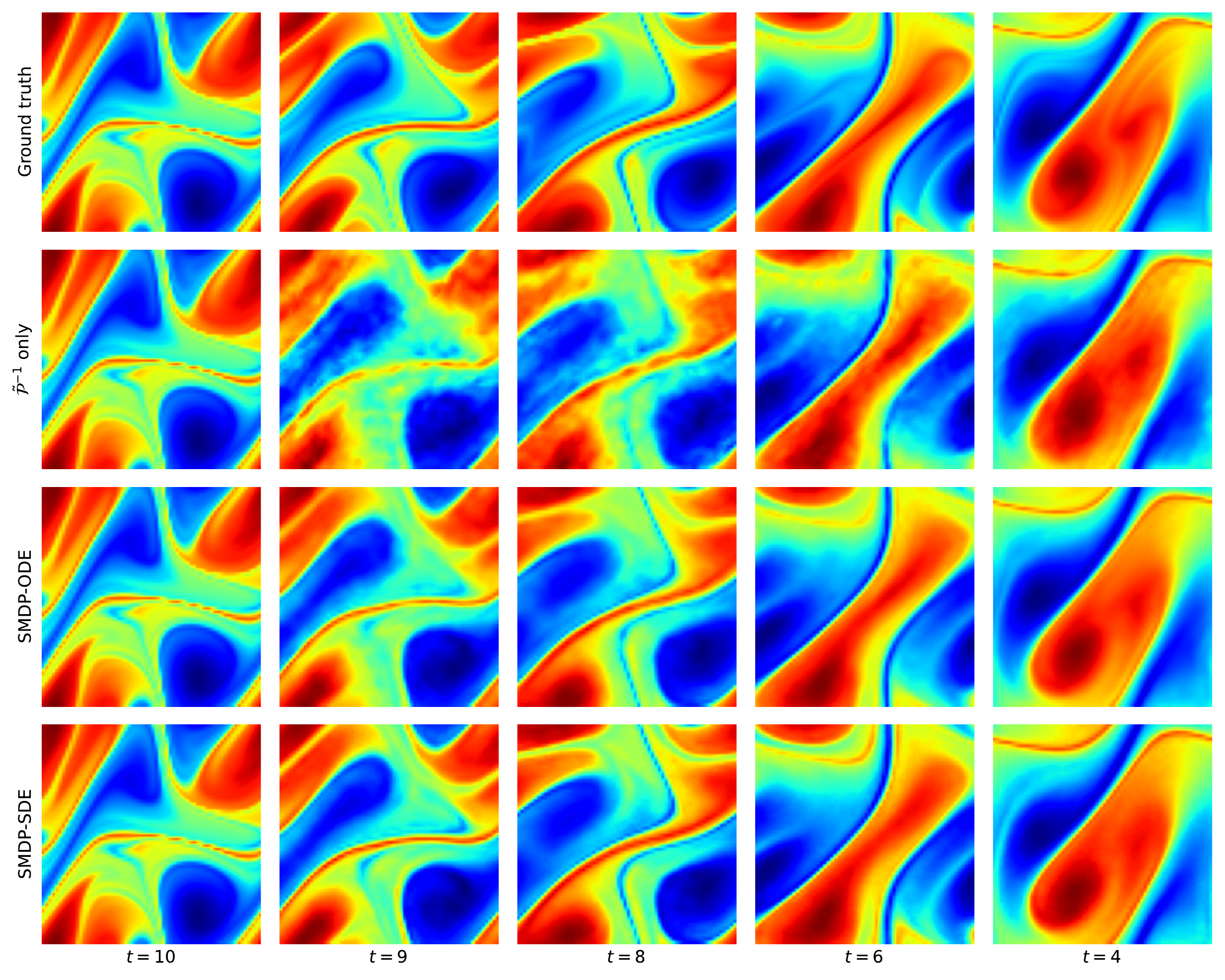}
    \caption{Predictions for isotropic turbulence (example 1 of 2).}
    \label{fig:ns_gallery_1}
\end{figure}

\begin{figure}[!t]
    \centering
    \includegraphics[trim={0cm 0cm 0cm 0cm},clip, width=.9\textwidth]{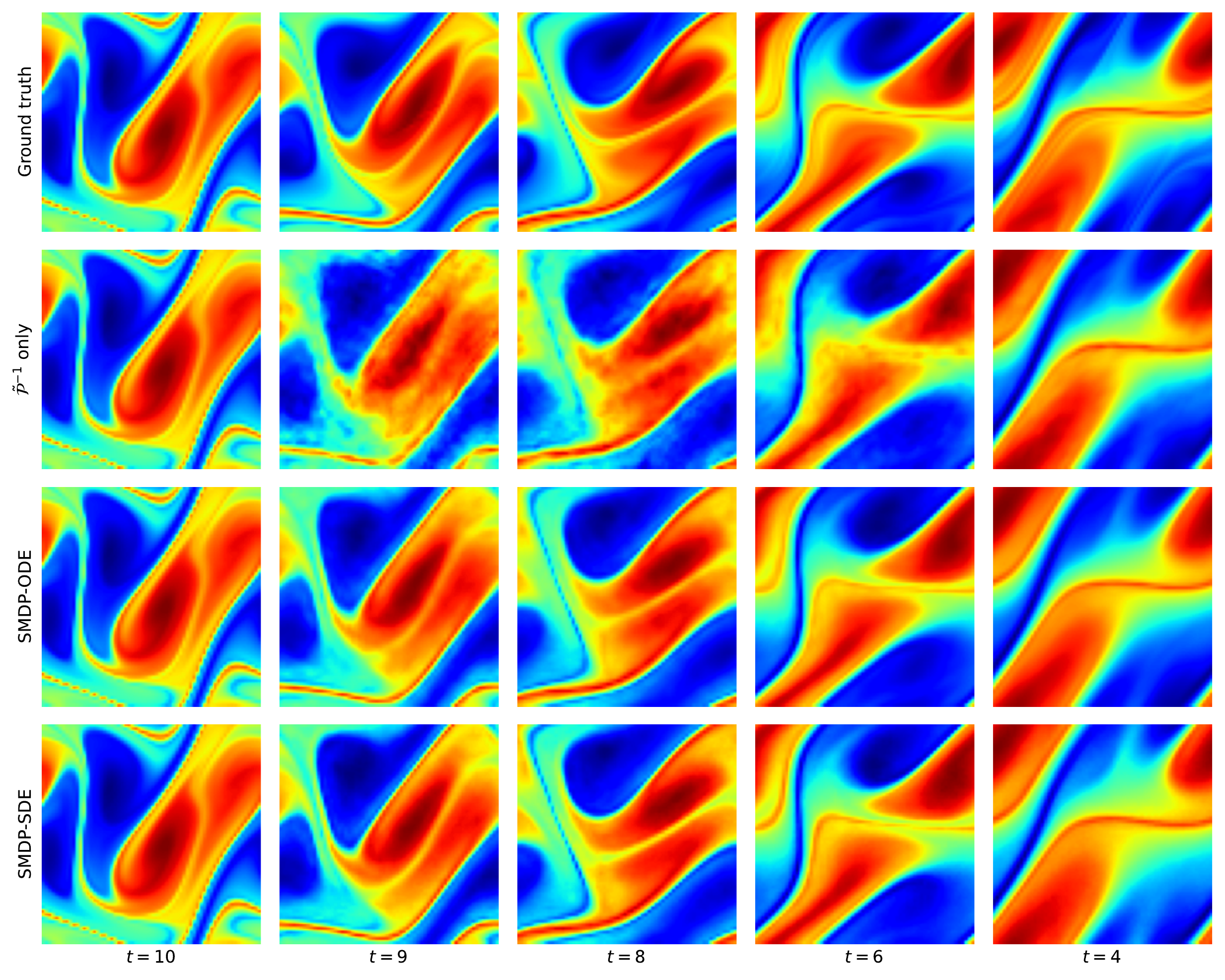}
    \caption{Predictions for isotropic turbulence (example 2 of 2).}
    \label{fig:ns_gallery_2}
\end{figure}

\printbibliography[heading=subbibliography]

\end{refsection}

\end{document}